\newtheorem{theorem}{Theorem}[section]
\newtheorem{lemma}[theorem]{Lemma}
\newtheorem{definition}[theorem]{Definition}
\newcommand\trainweaksup{\texttt{train-weaksup}\xspace}
\newcommand\trainfullsup{\texttt{train-fullsup}\xspace}
\newcommand\testfullsup{\texttt{test}\xspace}
\newcommand\boxacc{\texttt{BoxAcc}\xspace}
\newcommand\maxboxacc{\texttt{MaxBoxAcc}\xspace}
\newcommand\newmaxboxacc{\texttt{MaxBoxAccV2}\xspace}
\newcommand\pxprec{\texttt{PxPrec}\xspace}
\newcommand\pxrec{\texttt{PxRec}\xspace}
\newcommand\pxap{\texttt{PxAP}\xspace}
\newcommand{\myparagraph}[1]{\vspace{2pt}\noindent{\bf #1}}
\newcommand\pxacc{\texttt{PxAcc}\xspace}
\ifcvprfinal\pagestyle{empty}\fi
\begin{document}

\title{Evaluating Weakly Supervised Object Localization Methods Right}
\author{\vspace{0em}
\setlength\tabcolsep{2em}
\begin{tabular}{ccc} 
Junsuk Choe$^{1,3}$\thanks{Equal contribution. Work done at Clova AI Research.} & Seong Joon Oh$^2$\footnotemark[1] & Seungho Lee$^1$ \tabularnewline
\vspace{-0.8em} & & \tabularnewline
Sanghyuk Chun$^3$ & Zeynep Akata$^4$ & Hyunjung Shim$^1$\thanks{Hyunjung Shim is a corresponding author.} \tabularnewline
\end{tabular}
\\
\\
\renewcommand{\arraystretch}{0.9}
\begin{tabular}{cccc} 
    $^1$\normalsize{School of Integrated Technology,} & $^2$\normalsize{Clova AI Research,} & $^3$\normalsize{Clova AI Research,} & $^4$\normalsize{University of Tuebingen} \tabularnewline
    \normalsize{Yonsei University} & \normalsize{\enskip LINE Plus Corp.} & \normalsize{\enskip NAVER Corp.} & \normalsize{}
\end{tabular}
}%
\maketitle

\begin{abstract}
Weakly-supervised object localization (WSOL) has gained popularity over the last years for its promise to train localization models with only image-level labels. Since the seminal WSOL work of class activation mapping (CAM), the field has focused on how to expand the attention regions to cover objects more broadly and localize them better. However, these strategies rely on full localization supervision to validate hyperparameters and for model selection, which is in principle prohibited under the WSOL setup. In this paper, we argue that WSOL task is ill-posed with only image-level labels, and propose a new evaluation protocol where full supervision is limited to only a small held-out set not overlapping with the test set. We observe that, under our protocol, the five most recent WSOL methods have not made a major improvement over the CAM baseline. Moreover, we report that existing WSOL methods have not reached the few-shot learning baseline, where the full-supervision at validation time is used for model training instead. Based on our findings, we discuss some future directions for WSOL.
Source code and dataset are available at \href{https://github.com/clovaai/wsolevaluation}{https://github.com/clovaai/wsolevaluation}.
\end{abstract}

\begin{figure}
    \centering
    \includegraphics[width=\linewidth]{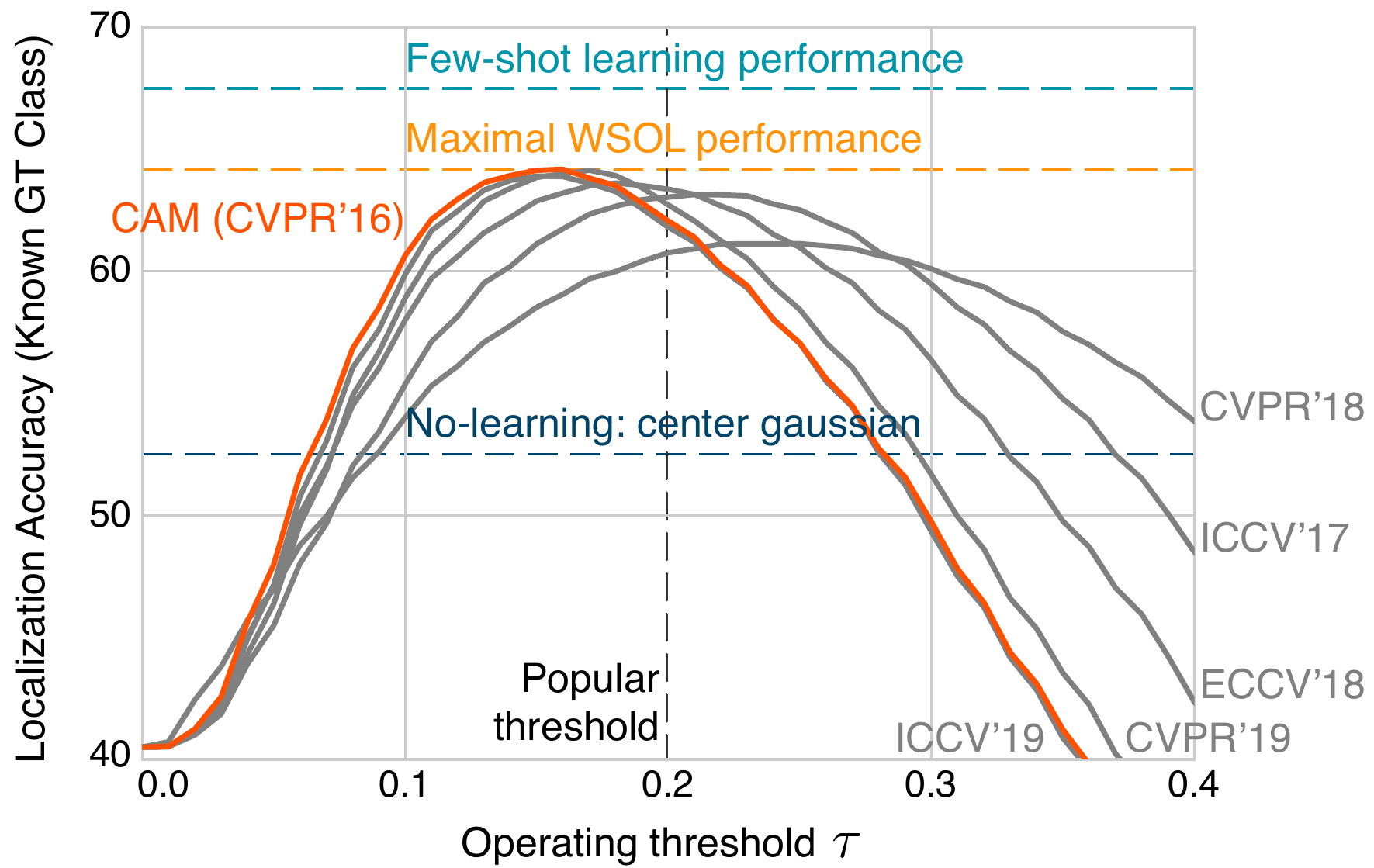}\vspace{0.3em}
    \caption{\small \textbf{WSOL 2016-2019.} Recent improvements in WSOL are illusory due to (1) different amount of implicit full supervision through validation and (2) a fixed score-map threshold (usually $\tau=0.2$) to generate object boxes. Under our evaluation protocol with the same validation set sizes and oracle $\tau$ for each method, CAM is still the best.
    In fact, our few-shot learning baseline, \ie using the validation supervision (10 samples/class) at training time, outperforms existing WSOL methods. Results on ImageNet.}
    \label{fig:teaser}
    \vspace{-1.5em}
\end{figure}

\section{Introduction}

As human labeling for every object is too costly and weakly-supervised object localization (WSOL) requires \textit{only} image-level labels, the WSOL research has gained significant momentum~\cite{CAM,ACoL,SPG,ADL,HaS,CutMix} recently.

Among these, class activation mapping (CAM)~\cite{CAM} uses the intermediate classifier activations focusing on the most discriminative parts of the objects to localize the objects of the target class. As the aim in object localization is to cover the full extent of the object, focusing only on the most discriminative parts of the objects is a limitation.
WSOL techniques since CAM have focused on this limitation and have proposed different architectural~\cite{ACoL,SPG,ADL} and data-augmentation~\cite{HaS,CutMix} solutions.
The reported state-of-the-art WSOL performances have made a significant improvement over the CAM baseline, from 49.4\% to 62.3\%~\cite{ADL} and 43.6\% to 48.7\%~\cite{ADL} top-1 localization performances on Caltech-UCSD Birds-200-2011~\cite{CUB} and ImageNet~\cite{ImageNet}, respectively.
However, these techniques have introduced a set of hyperparameters for suppressing the discriminative cues of CAM and different ways for selecting these hyperparameters.
One of such hyperparameters is the operating threshold $\tau$ for generating object bounding boxes from the score maps. Among others, the mixed policies for selecting $\tau$ has contributed to the illusory improvement of WSOL performances over the years; see Figure~\ref{fig:teaser}.

Due to the lack of a unified definition of the WSOL task, we revisit the problem formulation of WSOL and show that WSOL problem is ill-posed in general without any localization supervision. Towards a well-posed setup, we propose a new WSOL setting where a small held-out set with full supervision is available to the learners.

Our contributions are as follows. (1) Propose new experimental protocol that uses a fixed amount of full supervision for hyperparameter search and carefully analyze six WSOL methods on three architectures and three datasets. (2) Propose new evaluation metrics as well as data, annotations, and benchmarks for the WSOL task at \href{https://github.com/clovaai/wsolevaluation}{https://github.com/clovaai/wsolevaluation}. (3) Show that WSOL has not progressed significantly since CAM, when the calibration dependency and the different amounts of full supervision are factored out. Moreover, searching hyperparameters on a held-out set consisting of 5 to 10 full localization supervision per class often leads to significantly lower performance compared to the few-shot learning (FSL) baselines that use the full supervision directly for model training. Finally, we suggest a shift of focus in future WSOL research: consideration of learning paradigms utilizing both weak and full supervisions, and other options for resolving the ill-posedness of WSOL (\eg background-class images).

\section{Related Work}

\myparagraph{By model output.}
Given an input image, \textit{semantic segmentation} models generate pixel-wise class predictions~\cite{Pascal,FCN}, \textit{object detection} models~\cite{Pascal,RCNN} output a set of bounding boxes with class predictions, and \textit{instance segmentation} models~\cite{COCO,CityScapes,MaskRCNN} predict a set of disjoint masks with class \textit{and} instance labels. \textit{Object localization}~\cite{ImageNet}, on the other hand, assumes that the image contains an object of single class and produces a binary mask or a bounding box around that object coming from the class of interest. 

\myparagraph{By type of supervision.}
Since bounding box and mask labels cost significantly more than image-level labels, \eg categories~\cite{PointSup}, researchers have considered different types of localization supervision: image-level labels~\cite{papandreou2015weakly}, gaze~\cite{GazeSup}, points~\cite{PointSup}, scribbles~\cite{ScribbleSup}, boxes~\cite{BoxSup}, or a mixture of multiple types~\cite{HeterogeneousSup}. Our work is concerned with the object localization task with only image-level category labels~\cite{Oquab2015CVPR,CAM}. 

\myparagraph{By amount of supervision.}
Learning from a small amount of labeled samples per class is referred to as few-shot learning (FSL)~\cite{XCHSA19}. We recognize the relationship between our new WSOL setup and the FSL paradigm; we consider FSL methods as baselines for future WSOL methods.

\myparagraph{WSOL works.}
Class activation mapping (CAM)~\cite{CAM} turns a fully-convolutional classifier into a score map predictor by considering the activations before the global average pooling layer. Vanilla CAM has been criticized for its focus on the small discriminative part of the object.
Researchers have considered dropping regions in inputs at random~\cite{HaS, CutMix} to diversify the cues used for recognition.
Adversarial erasing techniques~\cite{ACoL,ADL} drop the most discriminative part at the current iteration.
Self-produced guidance (SPG)~\cite{SPG} is trained with auxiliary foreground-background masks generated by its own activations.
Other than object classification in static images, there exists work on localizing informative video frames for action recognition~\cite{paul2018w,liu2019completeness,xue2019danet}, but they are beyond the scope of our analysis.

\myparagraph{Relation to explainability.}
WSOL methods share similarities with the model explainability~\cite{ExplainableAI}, specifically the \textit{input attribution} task: analyzing which pixels have led to the image classification results~\cite{guidotti2019survey}. There are largely two streams of work on visual input attribution: variants of input gradients~\cite{FirstInputGradient,FirstDNNInputGradient,Wojciech16LRP,GuidedBackprop,selvaraju2017grad,IntegratedGradients,KRDCA18, park2018multimodal} and counterfactual reasoning~\cite{LIME,VedaldiMeaingfulPerturbation,zintgraf2017visualizing,ribeiro2018anchors,goyal2019counterfactual,hendricks2018grounding}. While they can be viewed as WSOL methods, we have not included them in our studies because they are seldom evaluated in WSOL benchmarks. Analyzing their possibility as WSOL methods is an interesting future study.

\myparagraph{Our scope.}
We study the WSOL task, rather than weakly-supervised detection, segmentation, or instance segmentation. The terminologies tend to be mixed in the earlier works of weakly-supervised learning~\cite{WSOLasMIL2,WSOLasMIL3,OldWSOL1,WSOLasMIL4}. Extending our analysis to other weakly-supervised learning tasks is valid and will be a good contribution to the respective communities.

\begin{figure}[t]
    \centering
    \includegraphics[width=\linewidth]{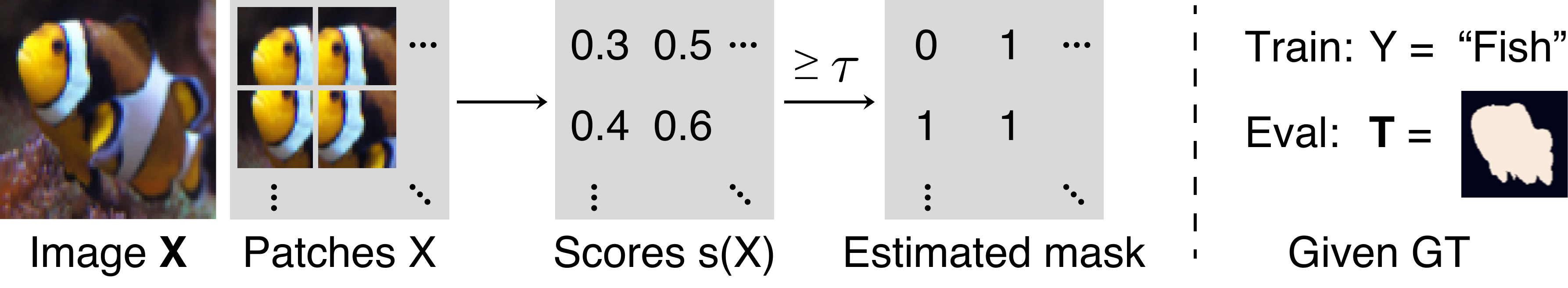}\vspace{0.3em}
    \caption{\small\textbf{WSOL as MIL.}
    WSOL is interpreted as a patch classification task trained with multiple-instance learning (MIL). The score map $s(\textbf{X})$ is thresholded at $\tau$ to estimate the mask $\mathbf{T}$.}
    \label{fig:overview}
    \vspace{-1.5em}
\end{figure}

\section{Problem Formulation of WSOL}
\label{sec:wsol_impossibility}

We define and formulate the weakly-supervised object localization (WSOL) task as an image patch classification and show the ill-posedness of the problem. We will discuss possible modifications to resolve the ill-posedness in theory.

\subsection{WSOL Task as Multiple Instance Learning}
\label{subsec:what_is_wsol}

Given an image $\mathbf{X}\in\mathbb{R}^{H\times W}$, \textbf{object localization} is the task to identify whether or not the pixel belongs to the object of interest, represented via dense binary mask $\mathbf{T}=(T_{11},\cdots,T_{HW})$ where $T_{ij}\in\{0,1\}$ and $(i,j)$ indicate the pixel indices. When the training set consists of precise image-mask pairs $(\mathbf{X},\mathbf{T})$, we refer to the task as \textbf{fully-supervised object localization (FSOL)}. In this paper, we consider the case when only an image-level label $Y\in\{0,1\}$ for global presence of the object of interest is provided per training image $\mathbf{X}$. This task is referred to as the \textbf{weakly-supervised object localization (WSOL)}.

One can treat an input image $\mathbf{X}$ as a bag of stride-1 sliding window patches of suitable side lengths, $h$ and $w$: $(X_{11},\cdots,X_{HW})$ with $X_{ij}\in\mathbb{R}^{h\times w}$. The object localization task is then the problem of predicting the object presence $T_{ij}$ at the image patch $X_{ij}$. The weak supervision imposes the requirement that each training image $\mathbf{X}$, represented as $(X_{11},\cdots,X_{HW})$, is only collectively labeled with a single label $Y\in\{0,1\}$ indicating whether at least one of the patches represents the object. This formulation is an example of the multiple-instance learning (MIL)~\cite{MIL}, as observed by many traditional WSOL works~\cite{papandreou2015weakly,WSOLasMIL2,WSOLasMIL3,WSOLasMIL4}.

Following the patch classification point of view, we formulate WSOL task as a mapping from patches $X$ to the binary labels $T$ (indices dropped). We assume that the patches $X$, image-level labels $Y$, and  the pixel-wise labeling $T$ in our data arise in an i.i.d. fashion from the joint distribution $p(X,Y,T)$. See Figure~\ref{fig:overview} for an overview. The aim of WSOL is to produce a scoring function $s(X)$ such that thresholding it at $\tau$ closely approximates binary label $T$.
Many existing approaches for WSOL, including CAM~\cite{CAM}, use the scoring rules based on the posterior $s(X)=p(Y|X)$. See Appendix~\S\ref{appendix:cam_as_posterior_approximation} for the interpretation of CAM as pixel-wise posterior approximation.

\subsection{When is WSOL ill-posed?}
\label{subsec:when_is_wsol_unsolvable}

We show that if background cues are more strongly associated with the target labels $T$ than some foreground cues, the localization task cannot be solved, even when we know the exact posterior $p(Y|X)$ for the image-level label $Y$. We will make some strong assumptions in favor of the learner, and then show that WSOL still cannot be perfectly solved. 

We assume that there exists a finite set of \textbf{cue} labels $\mathcal{M}$ containing all patch-level concepts in natural images. For example, patches from a duck image are one of \{duck's head, duck's feet, sky, water, $\cdots$\} (see Figure~\ref{fig:duck_feet_lake}). We further assume that every patch $X$ is equivalently represented by its cue label $M(X)\in\mathcal{M}$. Therefore, from now on, we write $M$ instead of $X$ in equations and examine the association arising in the joint distribution $p(M,Y,T)$. 
We write $M^{\text{fg}},M^{\text{bg}}\in\mathcal{M}$ for foreground and background cues.

\newcommand{\eqd}{\text{duck}}
\newcommand{\eqnd}{\text{duck}^c}
\newcommand{\eqf}{\text{feet}}
\newcommand{\eql}{\text{water}}

\begin{figure}[t]
    \centering
    \includegraphics[width=1\linewidth]{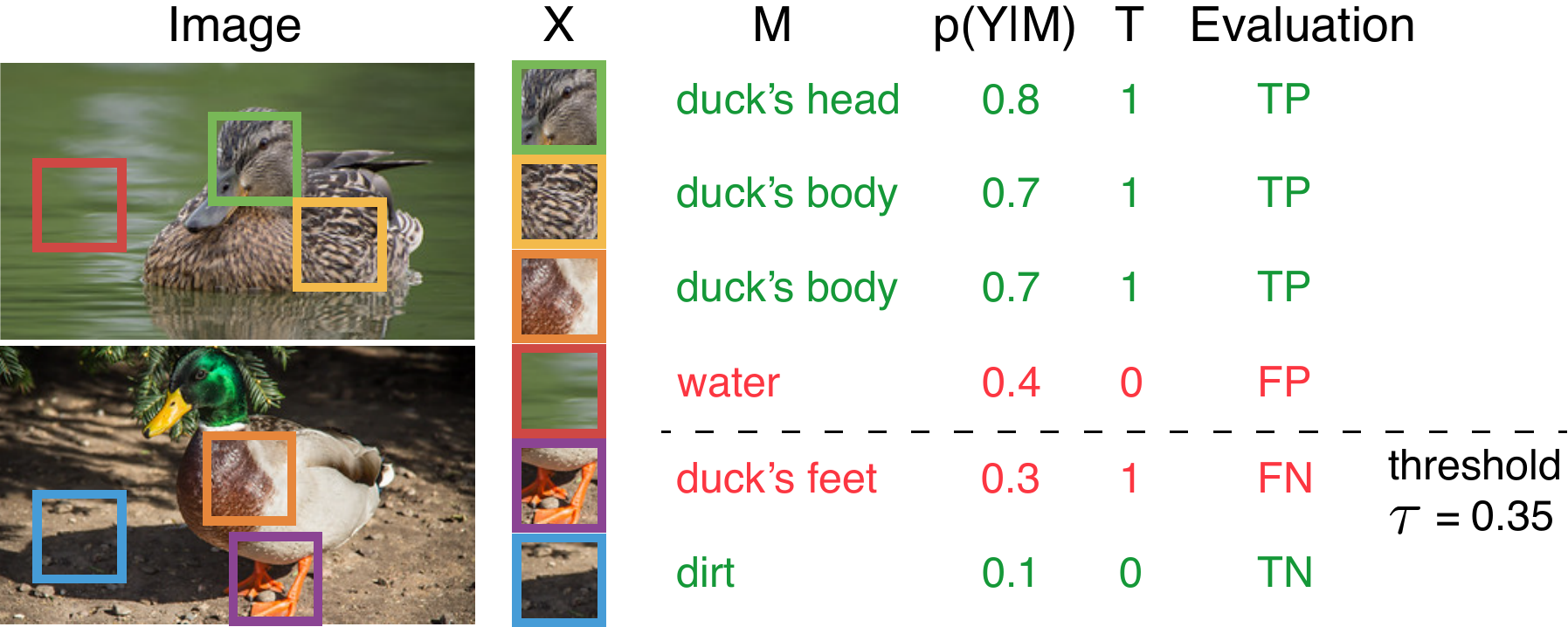}\vspace{0.3em}
    \caption{\small\textbf{Ill-posed WSOL: An example.} Even the true posterior $s(M)=p(Y|M)$ may not lead to the correct prediction of $T$ if background cues are more associated with the class than the foreground cues (\eg $p(\eqd|\eql)> p(\eqd|\eqf)$).}
    \label{fig:duck_feet_lake}
    \vspace{-1.5em}
\end{figure}

We argue that, even with access to the joint distribution $p(Y,M)$, it may not be possible to make perfect predictions for the patch-wise labels $T(M)$ (proof in Appendix~\S\ref{appendix:proof}).
\begin{lemma}
    Assume that the true posterior $p(Y|M)$ with a continuous pdf is used as the scoring rule $s(M)=p(Y|M)$. Then, there exists a scalar $\tau\in\mathbb{R}$ such that $s(M)\geq\tau$ is identical to $T$ if and only if the foreground-background posterior ratio $\frac{p(Y=1|M^{\text{fg}})}{p(Y=1| M^{\text{bf}})}\geq 1$ almost surely, conditionally on the event $\{T(M^{\text{fg}})=1\text{ and }T(M^{\text{bf}})=0\}$.
\end{lemma}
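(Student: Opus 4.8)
The plan is to collapse the statement into a one-dimensional separation question about the law of the scalar random variable $s(M)=p(Y=1\mid M)\in[0,1]$. Writing $\mathbf{1}\{\cdot\}$ for the indicator, ``$\{s(M)\ge\tau\}$ is identical to $T$'' means $\mathbf{1}\{s(M)\ge\tau\}=T$ almost surely under $p(M,T)$. Setting aside the degenerate cases $p(T=1)\in\{0,1\}$ (where a trivial threshold works and the conditioning event is empty), let $\mu_1$ and $\mu_0$ be the conditional laws of $s(M)$ given $T=1$ and given $T=0$, and let $\alpha$ be the essential infimum of $\mu_1$ and $\beta$ the essential supremum of $\mu_0$. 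The first step is to note that the posterior-ratio condition of the lemma --- $p(Y=1\mid M^{\mathrm{fg}})\ge p(Y=1\mid M^{\mathrm{bg}})$ almost surely when $M^{\mathrm{fg}},M^{\mathrm{bg}}$ are sampled independently and conditioned on $T(M^{\mathrm{fg}})=1$ and $T(M^{\mathrm{bg}})=0$ --- is precisely the inequality $\alpha\ge\beta$. (I read the ratio in the obvious way, noting that $s(M^{\mathrm{fg}})\ge s(M^{\mathrm{bg}})$ is the correct reading when the denominator can vanish, which is still compatible with $\alpha\ge\beta$.)

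For the \emph{only if} direction, suppose some $\tau$ gives $\mathbf{1}\{s(M)\ge\tau\}=T$ a.s. Then on $\{T=1\}$ we get $s(M)\ge\tau$ $\mu_1$-a.s., and on $\{T=0\}$ we get $s(M)<\tau$ $\mu_0$-a.s., so $\beta\le\tau\le\alpha$ and in particular $\alpha\ge\beta$; equivalently, for the conditioned independent pair $s(M^{\mathrm{fg}})\ge\tau>s(M^{\mathrm{bg}})$ almost surely, which is the posterior-ratio condition. This direction uses no regularity of $s$.

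For the \emph{if} direction, assume $\alpha\ge\beta$ and take $\tau=\alpha$. By definition of the essential infimum, $s(M)\ge\alpha$ holds $\mu_1$-a.s., so on $\{T=1\}$ we have $\mathbf{1}\{s(M)\ge\tau\}=1=T$ almost surely. For background, $s(M)\le\beta\le\alpha$ holds $\mu_0$-a.s.; if $\beta<\alpha$ this already gives the strict inequality $s(M)<\alpha$, and if $\beta=\alpha$ I invoke the continuous-pdf hypothesis, read as: the law of $s(M)$ is atomless. Then $\mu_0$ is absolutely continuous with respect to that law (being a conditional law given a positive-probability event) and hence has no atom at $\alpha$, so again $s(M)<\alpha=\tau$ holds $\mu_0$-a.s. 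Thus on $\{T=0\}$ we have $\mathbf{1}\{s(M)\ge\tau\}=0=T$ almost surely, and combining the two events closes the equivalence.

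The step I expect to be the crux is the boundary case $\beta=\alpha$ in the \emph{if} direction: this is exactly where the continuous-pdf assumption is indispensable, since a shared boundary value that carries mass under both $\mu_1$ and $\mu_0$ would defeat every threshold even when the posterior-ratio condition holds. The only genuine care needed there is to transfer atomlessness from the law of $s(M)$ to the conditional law $\mu_0$, and to confirm that $\tau=\alpha$ (the essential infimum of $\mu_1$) is the canonical choice rather than some point of the possibly-empty open gap $(\beta,\alpha)$. Everything else is routine null-set bookkeeping together with the trivial handling of $p(T=1)\in\{0,1\}$.
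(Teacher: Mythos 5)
Your proof is correct and follows essentially the same route as the paper's: both directions reduce to separating the conditional laws of $s(M)$ given $T=1$ and $T=0$ by a single threshold placed at the boundary (the paper takes $\tau$ to be the essential supremum of the background scores where you take the essential infimum of the foreground scores, but the two choices play the same role). If anything you are more careful than the paper at the boundary case $\beta=\alpha$: the paper's displayed equation only yields $s(M^{\text{bg}})\leq\tau$ and silently uses the continuous-pdf hypothesis to upgrade this to the strict inequality that $\text{\pxacc}=1$ requires, which is exactly the step you isolate and justify.
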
%
In other words, if the posterior likelihood for the image-level label $Y$ given a foreground cue $M^{\text{fg}}$ is less than the posterior likelihood given background $M^{\text{bg}}$ for some foreground and background cues, no WSOL method can make a correct prediction. This pathological scenario is described in Figure~\ref{fig:duck_feet_lake}: Duck's feet are less seen in duck images than the water background. Such cases are abundant in user-collected data (Appendix Figure~\ref{fig:duck_feet_lake_appendix}).

This observation implies a data-centric solution towards well-posed WSOL: we can augment (1) positive samples ($Y=1$) with more less-represented foreground cues (\eg duck images with feet) and (2) negative samples ($Y=0$) with more target-correlated background cues (\eg non-duck images with water background). Such data-centric approaches are promising future directions for WSOL.

\myparagraph{How have WSOL methods addressed the ill-posedness?}
Previous solutions to the WSOL problem have sought architectural modifications~\cite{ACoL, SPG, ADL} and data augmentation~\cite{HaS,CutMix} schemes that typically require heavy hyperparameter search and model selection, which are a form of implicit localization supervision. For example, \cite{HaS} has found the operating threshold $\tau$ via ``observing a few qualitative results'', while others have evaluated their models over the test set to select reasonable hyperparameter values (Table 1 of \cite{HaS}, Table 6 of \cite{ACoL}, and Table 1 of \cite{ADL}). \cite{SPG} has performed a ``grid search'' over possible values. We argue that certain level of localization labels are inevitable for WSOL. In the next section, we propose to allow a fixed number of fully labeled samples for hyperparameter search and model selection for a more realistic evaluation.

\section{Evaluation Protocol for WSOL}
\label{sec:evaluation}

We reformulate the WSOL evaluation based on our observation of the ill-posedness. We define performance metrics, benchmarks, and the hyperparameter search procedure.

\subsection{Evaluation metrics}
\label{subsec:evaluation_metrics}

The aim of WSOL is to produce score maps, where their pixel value $s_{ij}$ is higher on foreground $T_{ij}=1$ and lower on background $T_{ij}=0$ (\S\ref{subsec:what_is_wsol}). We discuss how to quantify the above conditions and how prior evaluation metrics have failed to clearly measure the relevant performance. We then propose the \maxboxacc and \pxap metrics for bounding box and mask ground truths, respectively.

The \textit{localization accuracy}~\cite{ImageNet} metric entangles classification and localization performances by counting the number of images where both tasks are performed correctly. We advocate the measurement of localization performance alone, as the goal of WSOL is to localize objects (\S\ref{subsec:what_is_wsol}) and not to classify images correctly. To this end, we only consider the score maps $s_{ij}$ corresponding to the ground-truth classes in our analysis. Metrics based on such are commonly referred to as the \textit{GT-known} metrics~\cite{HaS,ACoL,SPG,ADL}.

A common practice in WSOL is to normalize the score maps per image because the score statistics differ vastly across images. Either max normalization (divide through by $\max_{ij}s_{ij}$) or min-max normalization (additionally map $\min_{ij}s_{ij}$ to zero) has been used; see Appendix~\S\ref{appendix:score_map_normalization} for the full summary. We always use the min-max normalization.

After normalization, WSOL methods threshold the score map at $\tau$ to generate a tight box around the binary mask $\{(i,j)\mid s_{ij}\geq \tau\}$. WSOL metrics then measure the quality of the boxes. $\tau$ is typically treated as a fixed value~\cite{CAM,ACoL,CutMix} or a hyperparameter to be tuned~\cite{HaS,SPG,ADL}.
We argue that the former is misleading because the ideal threshold $\tau$ depends heavily on the data and model architecture and fixing its value may be disadvantageous for certain methods. To fix the issue, we propose new evaluation metrics that are independent of the threshold $\tau$.

\myparagraph{Masks: \pxap.}
When masks are available for evaluation, we measure the pixel-wise precision and recall~\cite{PixelPrecisionRecall}. Unlike single-number measures like mask-wise IoU, those metrics allow users to choose the preferred operating threshold $\tau$ that provides the best precision-recall trade-off for their downstream applications. We define the \textbf{pixel precision and recall at threshold} $\tau$ as:
\vspace{-.1em}
{\small
\begin{align}
    \text{\pxprec}(\tau)=
    \frac{|\{s^{(n)}_{ij}\geq\tau\}\cap\{T^{(n)}_{ij}=1\}|}%
    {|\{s^{(n)}_{ij}\geq\tau\}|}\\
    \text{\pxrec}(\tau)=
    \frac{|\{s^{(n)}_{ij}\geq\tau\}\cap\{T^{(n)}_{ij}=1\}|}%
    {|\{T^{(n)}_{ij}=1\}|}
\end{align}
\vspace{-.1em}
}%
For threshold independence, we define and use the \textbf{pixel average precision}, $\text{\pxap}:=\sum_l \text{\pxprec}(\tau_l)(\text{\pxrec}(\tau_l)-\text{\pxrec}(\tau_{l-1}))$, the area under curve of the pixel precision-recall curve. We use \pxap as the final metric in this paper.

\myparagraph{Bounding boxes: \maxboxacc.}
Pixel-wise masks are expensive to collect; many datasets only provide box annotations. Since it is not possible to measure exact pixel-wise precision and recall with bounding boxes, we suggest a surrogate in this case. Given the ground truth box $B$, we define the \textbf{box accuracy at score map threshold} $\tau$ \textbf{and IoU threshold} $\delta$, \boxacc$(\tau, \delta)$~\cite{CAM,ImageNet}, as:
{\small
\vspace{-0.5em}
\begin{align}
    \text{\boxacc}(\tau, \delta)=
    \frac{1}{N}
    \sum_n
    1_{
    \text{IoU}\left(
    \text{box}(s(\mathbf{X}^{(n)}),\tau),B^{(n)}
    \right)\geq \delta
    }
\end{align}
}%
where $\text{box}(s(\mathbf{X}^{(n)}),\tau)$ is the tightest box around the largest-area connected component of the mask $\{(i,j)\mid s(X^{(n)}_{ij})\geq\tau\}$. In datasets where more than one bounding box are provided (\eg ImageNet), we count the number of images where the box prediction overlaps with \textit{at least one} of the ground truth boxes with $\text{IoU}\geq \delta$. When $\delta$ is $0.5$, the metric is identical to the commonly-called \textit{GT-known localization accuracy}~\cite{HaS} or \textit{CorLoc}~\cite{CorLoc}, but we suggest a new naming to more precisely represent what is being measured.
For score map threshold independence, we report the box accuracy at the optimal threshold $\tau$, the \textbf{maximal box accuracy} $\text{\maxboxacc}(\delta):=\max_\tau\text{\boxacc}(\tau, \delta)$, as the final performance metric. In this paper, we set $\delta$ to $0.5$, following the prior works~\cite{CAM, HaS, ACoL, SPG, ADL, CutMix}. 

\myparagraph{Better box evaluation: \newmaxboxacc.} After the acceptance at CVPR 2020, we have developed an improved version of \maxboxacc. It is better in two aspects. 
(1) \maxboxacc measures the performance at a fixed IoU threshold ($\delta=0.5$), only considering a specific level of fineness of localization outputs. We suggest averaging the performance across $\delta\in\{0.3, 0.5, 0.7\}$ to address diverse demands for localization fineness.
(2) \maxboxacc takes the \textit{largest} connected component for estimating the box, assuming that the object of interest is usually large. We remove this assumption by considering the best match between the set of all estimated boxes and the set of all ground truth boxes. 
We call this new metric as \newmaxboxacc. 
For future WSOL researches, we encourage using the \newmaxboxacc metric. The code is already available in our repository.
We show the evaluation results under the new metric in the in Appendix~\S\ref{appendix:newmaxboxacc}.

{
\setlength{\tabcolsep}{5pt}
\renewcommand{\arraystretch}{0.9}
\begin{table}[t]
    \small
    \centering
    \begin{tabular}{*{2}{l}*{3}{r}}
         Statistics && ImageNet & \hspace{1em} CUB & OpenImages  \\
         \cline{1-1} \cline{3-5}
         \vspace{-1em} & \\
         \#\ignorespaces Classes && $1000$ & $200$ & $100$ \\
         \vspace{-1em} & \\
         \#\ignorespaces images/class && \\
         \trainweaksup && $\sim\!1.2$K & $\sim\!30$ & $\sim\!300$ \\
         \trainfullsup && $10$ & $\sim\!5$ & $25$ \\
         \testfullsup && $10$ & $\sim\!29$ & $50$ \\
    \end{tabular}
    \caption{\small \textbf{Dataset statistics.} ``$\sim$'' indicates that the number of images per class varies across classes and the average value is shown.}
    \label{tab:dataset}
    \vspace{-1.5em}
\end{table}
}

\subsection{Data splits and hyperparameter search}
\label{subsec:evaluation_benchmarks}

For a fair comparison of the WSOL methods, we fix the amount of full supervision for hyperparameter search. As shown in Table~\ref{tab:dataset} we propose three disjoint splits for every dataset: \trainweaksup, \trainfullsup, and \testfullsup. The \trainweaksup contains images with weak supervision (the image-level labels). The \trainfullsup contains images with full supervision (either bounding box or binary mask). It is left as freedom for the user to utilize it for hyperparameter search, model selection, ablative studies, or even model fitting. The \testfullsup split contains images with full supervision; it must be used only for the final performance report. For example, checking the \testfullsup results multiple times with different model configurations violates the protocol as the learner implicitly uses more full supervision than allowed.

As WSOL benchmark datasets, ImageNet~\cite{ImageNet} and Caltech-UCSD Birds-200-2011 (CUB)~\cite{CUB} have been extensively used. 
For ImageNet, the $1.2$M ``train'' and $10$K ``validation'' images for $1\,000$ classes are treated as our \trainweaksup and \testfullsup, respectively. For \trainfullsup, we use the ImageNetV2~\cite{ImageNetV2}. We have annotated bounding boxes on those images.
CUB has $5\,994$ ``train'' and $5\,794$ ``test'' images for 200 classes. We treat them as our \trainweaksup and \testfullsup, respectively. For \trainfullsup, we have collected $1\,000$ extra images ($\sim\!5$ images per class) from Flickr, on which we have annotated bounding boxes. For ImageNet and CUB we use the oracle box accuracy \boxacc.

We contribute a new WSOL benchmark based on the OpenImages instance segmentation subset~\cite{OpenImagesV5}. It provides a fresh WSOL benchmark to which the models have not yet overfitted. 
To balance the original OpenImages dataset, we have sub-sampled 100 classes and have randomly selected $29\,819$, $2\,500$, and $5\,000$ images from the original ``train'', ``validation'', and ``test'' splits as our \trainweaksup, \trainfullsup, and \testfullsup splits, respectively.
We use the pixel average precision \pxap. 
A summary of dataset statistics is in Table~\ref{tab:dataset}. Details on data collection and preparation are in Appendix~\S\ref{appendix:data}.

\myparagraph{Hyperparameter search.}
To make sure that the same amount of localization supervision is provided for each WSOL method, we refrain from employing any source of human prior outside the \trainfullsup split. 
If the optimal hyperparameter for an arbitrary dataset and architecture is not available by default, we subject it to the hyperparameter search algorithm.
For each hyperparameter, its \textit{feasible range}, as opposed to \textit{sensible range}, is used as the search space, to minimize the impact of human bias.

We employ the random search hyperparameter optimization~\cite{RandomSearch}; it is simple, effective, and parallelizable. For each WSOL method, we sample 30 hyperparameters to train models on \trainweaksup and validate on \trainfullsup. The best hyperparameter combination is then selected.
Since running 30 training sessions is costly for ImageNet ($1.2$M training images), we use 10\% of images in each class for fitting models during the search. We verify in Appendix~\S\ref{appendix:okay_to_use_proxy_imagenet} that the ranking of hyperparameters is preserved even if the training set is sub-sampled.

\definecolor{darkergreen}{RGB}{21, 152, 56}
\definecolor{red2}{RGB}{252, 54, 65}
\newcommand\tableminus[1]{\textcolor{red2}{#1}}
\newcommand\tableplus[1]{\textcolor{darkergreen}{#1}}

\definecolor{Gray}{gray}{0.85}
\newcolumntype{g}{>{\columncolor{Gray}}c}

{
\setlength{\tabcolsep}{3pt}
\renewcommand{\arraystretch}{1.1}
\begin{table*}[ht!]
\resizebox{\textwidth}{!}{%
\centering
\small
\begin{tabular}{lc*{3}{c}gc*{3}{c}gc*{3}{c}gcg}
& & \multicolumn{4}{c}{ImageNet (\maxboxacc)} & & \multicolumn{4}{c}{CUB (\maxboxacc)}  & & \multicolumn{4}{c}{OpenImages (\pxap)} && \multicolumn{1}{c}{Total}\\
Methods  &  & VGG & Inception & ResNet & Mean &  & VGG & Inception & ResNet & Mean &  & VGG & Inception & ResNet & Mean &  & Mean \\
\cline{1-1}\cline{3-6}\cline{8-11}\cline{13-16}\cline{18-18} & \vspace{-1em} \\
CAM~\cite{CAM} &  & 61.1 & 65.3 & 64.2 & 63.5 &  & 71.1 & 62.1 & 73.2 & 68.8 &  & 58.1 & 61.4 & 58.0 & 59.1 &  & 63.8\\
HaS~\cite{HaS} &  & \tableplus{+0.7} & \tableplus{+0.1} & \tableminus{-1.0} & \tableminus{-0.1} &  & \tableplus{+5.2} & \tableminus{-4.4} & \tableplus{+4.9} & \tableplus{+1.9} &  & \tableminus{-1.2} & \tableminus{-2.9} & \tableplus{+0.2} & \tableminus{-1.3} &  & \tableplus{+0.2}\\
ACoL~\cite{ACoL} &  & \tableminus{-0.8} & \tableminus{-0.7} & \tableminus{-2.5} & \tableminus{-1.4} &  & \tableplus{+1.2} & \tableminus{-2.5} & \tableminus{-0.5} & \tableminus{-0.6} &  & \tableminus{-3.4} & \tableplus{+1.6} & \tableminus{-0.2} & \tableminus{-0.7} &  & \tableminus{-0.9}\\
SPG~\cite{SPG} &  & \tableplus{+0.5} & \tableplus{+0.1} & \tableminus{-0.7} & \tableplus{+0.0} &  & \tableminus{-7.4} & \tableplus{+0.7} & \tableminus{-1.8} & \tableminus{-2.8} &  & \tableminus{-2.2} & \tableplus{+1.0} & \tableminus{-0.3} & \tableminus{-0.5} &  & \tableminus{-1.1}\\
ADL~\cite{ADL} &  & \tableminus{-0.3} & \tableminus{-3.8} & \tableplus{+0.0} & \tableminus{-1.4} &  & \tableplus{+4.6} & \tableplus{+1.3} & \tableplus{+0.3} & \tableplus{+2.0} &  & \tableplus{+0.2} & \tableplus{+0.7} & \tableminus{-3.7} & \tableminus{-0.9} &  & \tableminus{-0.1}\\
CutMix~\cite{CutMix} &  & \tableplus{+1.0} & \tableplus{+0.1} & \tableminus{-0.3} & \tableplus{+0.3} &  & \tableplus{+0.8} & \tableplus{+3.4} & \tableminus{-5.4} & \tableminus{-0.4} &  & \tableplus{+0.1} & \tableplus{+0.3} & \tableplus{+0.7} & \tableplus{+0.4} &  & \tableplus{+0.1}\\
\cline{1-1}\cline{3-6}\cline{8-11}\cline{13-16}\cline{18-18} & \vspace{-1em} \\
Best WSOL &  & 62.2 & 65.5 & 64.2 & 63.8 &  & 76.2 & 65.5 & 78.1 & 70.8 &  & 58.3 & 63.0 & 58.6 & 59.5 &  & 64.0\\
FSL baseline &  & 62.8 & 68.7 & 67.5 & 66.3 &  & 86.3 & 94.0 & 95.8 & 92.0 &  & 61.5 & 70.3 & 74.4 & 68.7 &  & 75.7\\
Center baseline &  & 52.5 & 52.5 & 52.5 & 52.5 &  & 59.7 & 59.7 & 59.7 & 59.7 &  & 45.8 & 45.8 & 45.8 & 45.8 &  & 52.3\\
\cline{1-1}\cline{3-6}\cline{8-11}\cline{13-16}\cline{18-18} & \vspace{-1em} \\
\end{tabular}
}
\caption{\small \textbf{Re-evaluating WSOL.} How much have WSOL methods improved upon the vanilla CAM model? \testfullsup split results are shown, relative to the vanilla CAM performance (\tableplus{increase} or \tableminus{decrease}). Hyperparameters have been optimized over the identical \trainfullsup split for all WSOL methods and the FSL baseline: (10,5,5) full supervision/class for (ImageNet,CUB,OpenImages). Reported results are in the Appendix Table {\color{red}5};
classification accuracies are in Appendix Table {\color{red}4}.
}
\label{tab:main}
\vspace{-1em}
\end{table*}
}

\section{Experiments}
\label{sec:results}

\subsection{Evaluated Methods}
\label{subsec:prior_wsol_methods}

We evaluate six widely used WSOL methods published in peer-reviewed venues. We describe each method in chronological order and discuss the set of hyperparameters. The full list of hyperparameters is in Appendix~\S\ref{appendix:wsol_methods}.

\noindent
\textbf{Class activation mapping (CAM)}~\cite{CAM} trains a classifier of fully-convolutional backbone with the global average pooling (GAP) structure. At test time, CAM uses the logit outputs before GAP as the score map $s_{ij}$. CAM has the learning rate and the score-map resolution as hyperparameters and all five methods below use CAM in the background.

\noindent
\textbf{Hide-and-seek (HaS)}~\cite{HaS} is a data augmentation technique that randomly selects grid patches to be dropped. The hyperparameters are the drop rate and grid size.

\noindent
\textbf{Adversarial complementary learning (ACoL)}~\cite{ACoL} proposes an architectural solution: a two-head architecture where one adversarially erases the high-scoring activations in the other. The erasing threshold is a hyperparameter.

\noindent
\textbf{Self-produced guidance (SPG)}~\cite{SPG} is another architectural solution where internal pseudo-pixel-wise supervision is synthesized on the fly. Three tertiary pixel-wise masks (foreground, unsure, background) are generated from three different layers using two thresholding hyperparameters for each mask and are used as auxiliary supervisions.

\noindent
\textbf{Attention-based dropout layer (ADL)}~\cite{ADL} has proposed a module that, like ACoL, adversarially produces drop masks at high-scoring regions, while not requiring an additional head. Drop rate and threshold are the hyperparameters.

\noindent
\textbf{CutMix}~\cite{CutMix} is a data augmentation technique, where patches in training images are cut and pasted to other images during training. The target labels are also mixed. The hyperparameters are the size prior $\alpha$ and the mix rate $r$.

\myparagraph{Few-shot learning (FSL) baseline.}
The full supervision in \trainfullsup used for validating WSOL hyperparameters can be used for training a model itself. Since only a few fully labeled samples per class are available, we refer to this setting as the few-shot learning (FSL) baseline.

As a simple baseline, we consider a foreground saliency mask predictor~\cite{FirstSaliency}. We alter the last layer of a fully convolutional network (FCN) into a $1\times 1$ convolutional layer with $H\times W$ score map output. Each pixel is trained with the binary cross-entropy loss against the target mask, as done in~\cite{Deeplab,FCN,oh2017exploiting}. For OpenImages, the pixel-wise masks are used as targets; for ImageNet and CUB, we build the mask targets by labeling pixels inside the ground truth boxes as foreground~\cite{BoxWSSS}. At inference phase, the $H\times W$ score maps are evaluated with the box or mask metrics.

\myparagraph{Center-gaussian baseline.}
The Center-gaussian baseline generates isotropic Gaussian score maps centered at the images. We set the standard deviation to 1, but note that it does not affect the \maxboxacc and \pxap measures. This provides a no-learning baseline for every localization method.

\subsection{Comparison of WSOL methods}
\label{subsec:main_comparison_wsol}
We evaluate the six WSOL methods over three backbone architectures, \ie VGG-GAP~\cite{VGG,CAM}, InceptionV3~\cite{InceptionV3}, and ResNet50~\cite{ResNet}, and three datasets, \ie CUB, ImageNet and OpenImages. For each (method, backbone, dataset) tuple, we have randomly searched the optimal hyperparameters over the \trainfullsup with 30 trials, totalling about $9\,000$ GPU hours. Since the sessions are parallelizable, it has taken only about 200 hours over 50 P40 GPUs to obtain the results. The results are shown in Table~\ref{tab:main}. We use the same batch sizes and training epochs to enforce the same computational budget. The checkpoints that achieves the best localization performance on \trainfullsup are used for evaluation.

Contrary to the improvements reported in prior work (Appendix Table~\ref{tab:supp-val-test-transfer}), recent WSOL methods have not led to major improvements compared to CAM, when validated in the same data splits and same evaluation metrics. On ImageNet, methods after CAM are generally struggling: only CutMix has seen a boost of +0.3pp on average. On CUB, ADL has attained a +2.0pp gain on average, but ADL fails to work well on other benchmarks. On the new WSOL benchmark, OpenImages, no method has improved over CAM, except for CutMix (+0.4pp on average). The best overall improvements over CAM (63.8\% total mean) is a mere +0.2pp boost by HaS. In general, we observe a random mixture of increases and decreases in performance over the baseline CAM, depending on the architecture and dataset. An important result in the table to be discussed later is the comparison against the few-shot learning baseline (\S\ref{subsec:few_shot_learning_results}).

Some reasons for the discrepancy between our results and the reported results include (1) the confounding of the actual score map improvement and the calibration scheme, (2) different types and amounts of full supervision employed under the hood, and (3) the use of different training settings (\eg batch size, learning rates, epochs). More details about the training settings are in Appendix~\S\ref{appendix:reproducing}.

\myparagraph{Which checkpoint is suitable for evaluation?} After the acceptance by CVPR 2020, we believe that it is inappropriate to use the best checkpoint for WSOL evaluation. This is because the best localization performances are achieved before convergence in many cases (Appendix~\S\ref{appendix:cls_performance}. At early epochs, the localization performance fluctuates a lot, so the peak performance is noise rather than the real performance. Hence, we recommend using the final checkpoint for future WSOL researchers. The evaluation results are shown in Appendix~Table~\ref{tab:main_cls}. 

\begin{figure}
    \centering
    \includegraphics[width=0.93\linewidth]{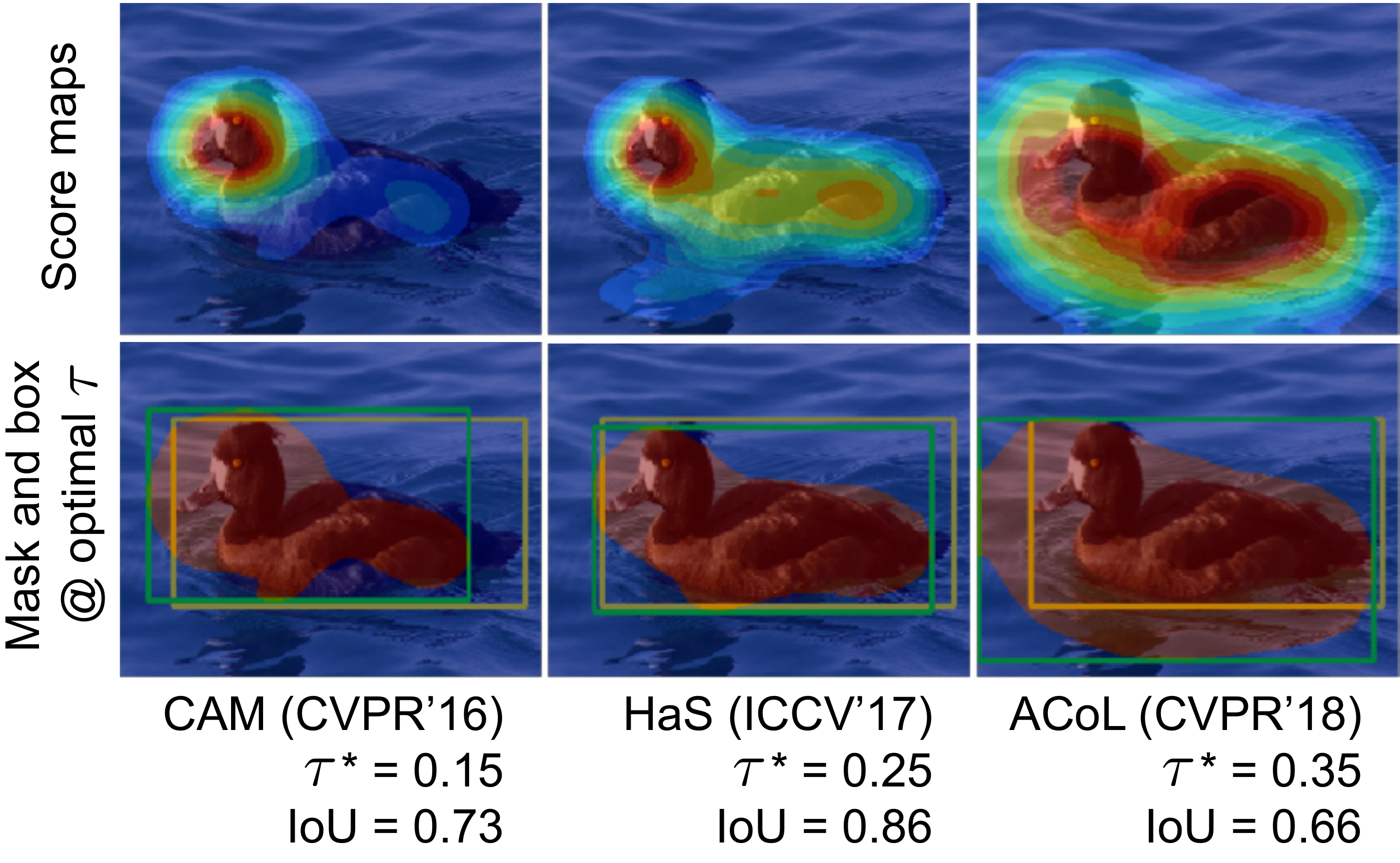}
    \caption{\small \textbf{Selecting $\tau$.} Measuring performance at a fixed threshold $\tau$ can lead to a false sense of improvement. Compared to CAM, HaS and ACoL expand the score maps, but they do not necessarily improve the box qualities (IoU) at the optimal $\tau^\star$. Predicted and ground-truth boxes are shown as green and yellow boxes.}
    \label{fig:qualitative_cam_threshold}
    \vspace{0.6em}
    \centering
    \includegraphics[width=\linewidth]{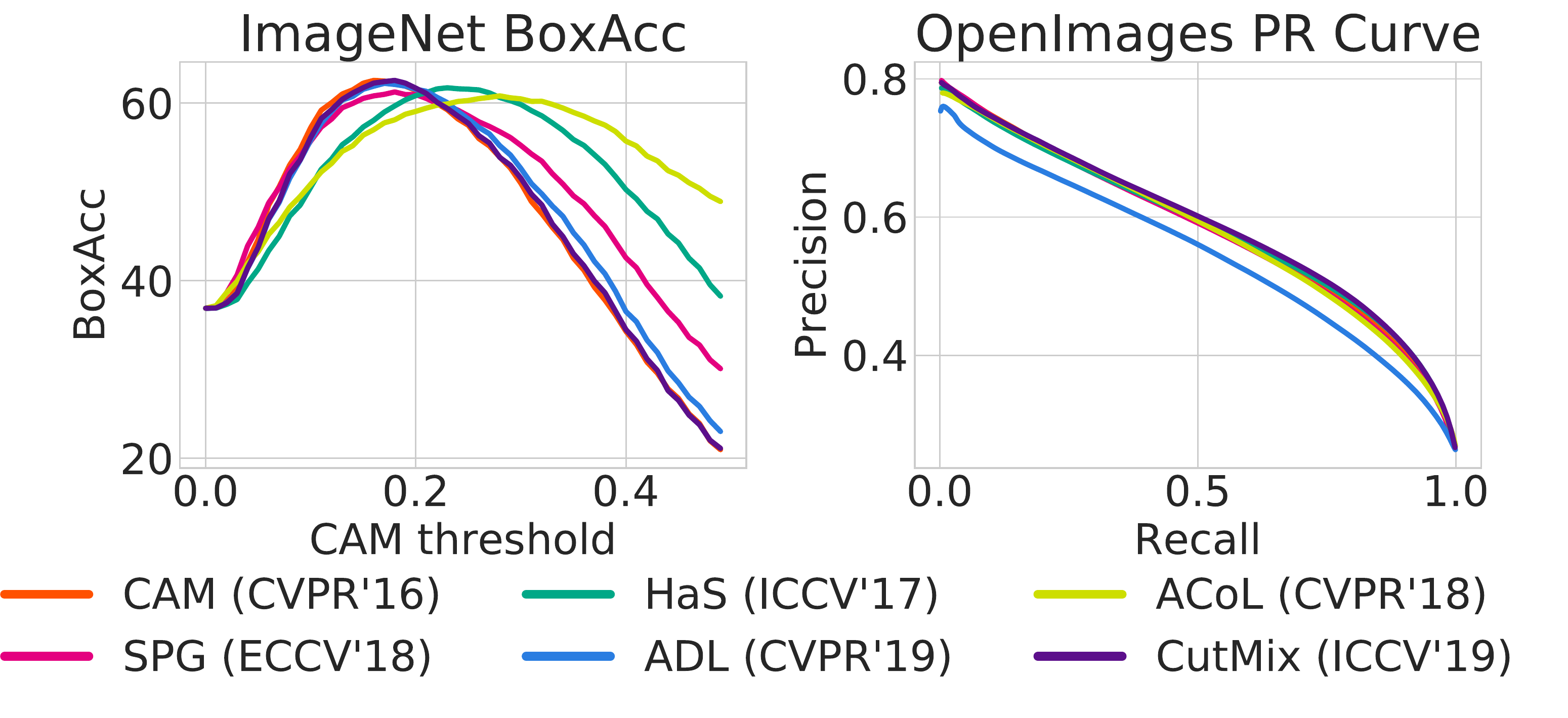}
    \caption{\small \textbf{Performance at varying operating thresholds.} ImageNet: $\text{\boxacc}(\tau)$ versus $\tau$. OpenImages: $\text{\pxprec}(\tau)$ versus $\text{\pxrec}(\tau)$. Both use ResNet.}
    \label{fig:cam_threshold_and_pr_curves}
    \vspace{-1em}
\end{figure}

\subsection{Score calibration and thresholding}
\label{subsec:score_calibration_thresholding}

\begin{figure}[t]
    \centering
    
    \begin{subfigure}[b]{\linewidth}
        \includegraphics[width=.99\linewidth]{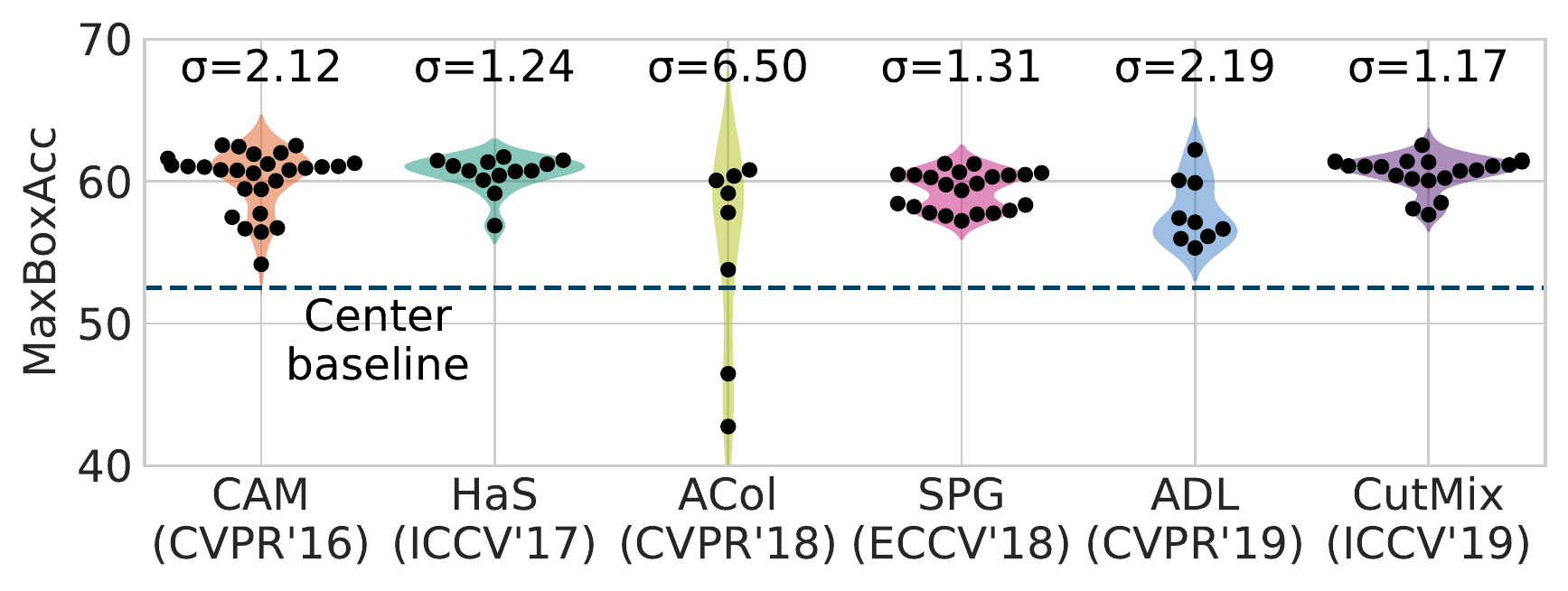}%
        \vspace{-0.5em}
        \caption{ResNet50 architecture, ImageNet dataset.}
        \vspace{0.5em}
    \end{subfigure}
    \begin{subfigure}[b]{\linewidth}
        \includegraphics[width=.99\linewidth]{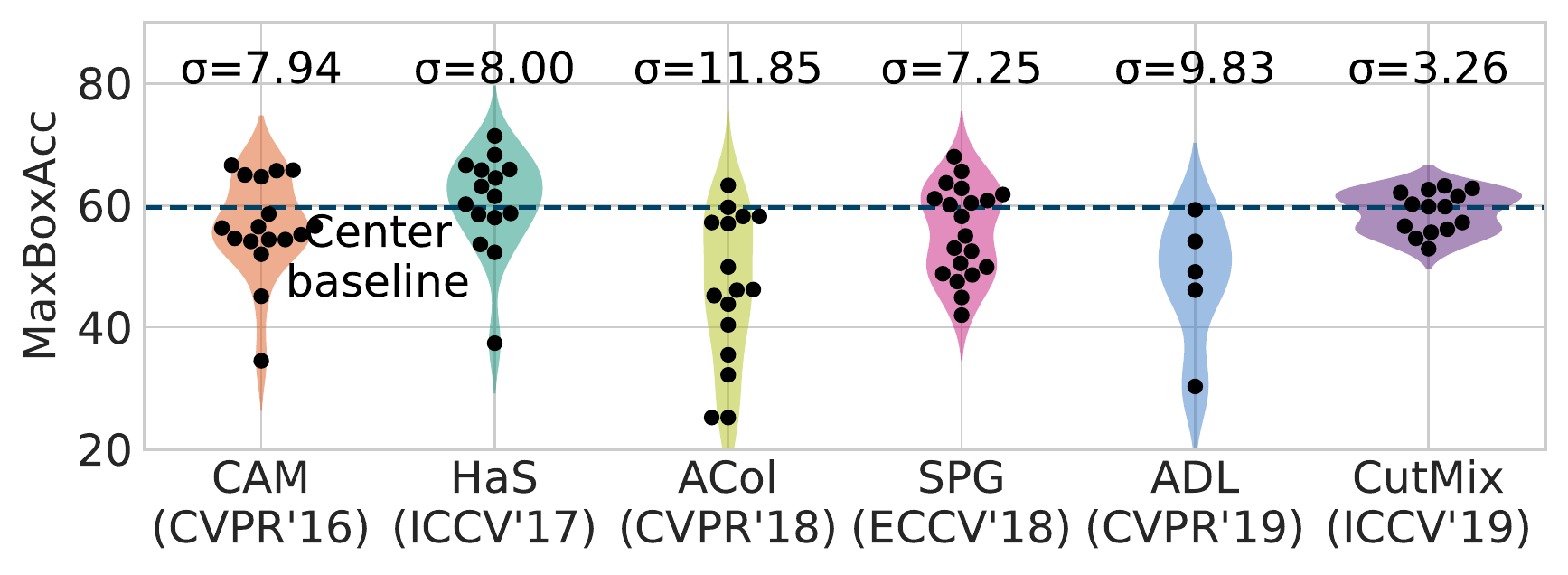}%
        \vspace{-0.5em}
        \caption{ResNet50 architecture, CUB dataset.}
    \vspace{0.5em}
    \end{subfigure}
    \begin{subfigure}[b]{\linewidth}
        \includegraphics[width=.99\linewidth]{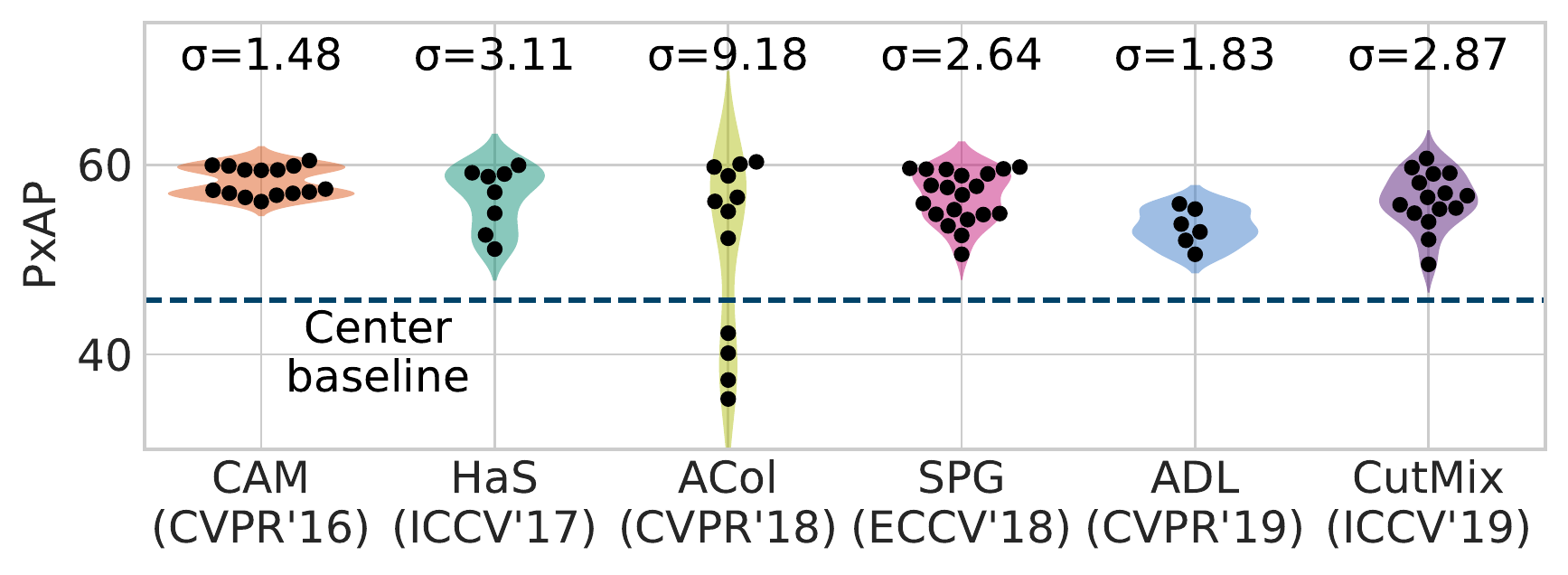}%
        \vspace{-0.5em}
        \caption{ResNet50 architecture, OpenImages dataset.}
    \vspace{0.5em}
    \end{subfigure}
    \begin{subfigure}[b]{\linewidth}
        \includegraphics[width=.99\linewidth]{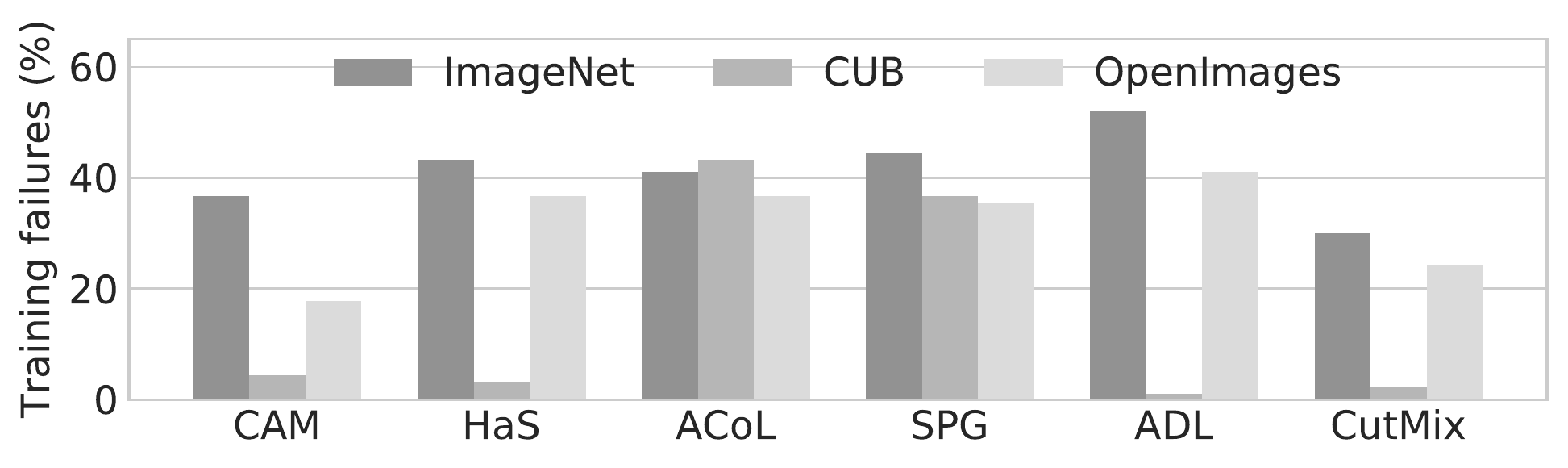}%
        \vspace{-0.5em}
        \caption{Ratio of training failures in different WSOL methods.}
        \label{fig:training_failures}
    \vspace{0.5em}
    \end{subfigure}
    \caption{\small \textbf{Results of the 30 hyperparameter trials.} ImageNet performances of all 30 randomly chosen hyperparameter combinations for each method, with ResNet50 backbone. The violin plots show the estimated distributions (kernel density estimation) of performances. $\sigma$ are the sample standard deviations.}
    \label{fig:wsol_robustness}
    \vspace{0.8em}
    \definecolor{greencross}{RGB}{60,140,50}
    \centering
    \includegraphics[width=0.36\linewidth]{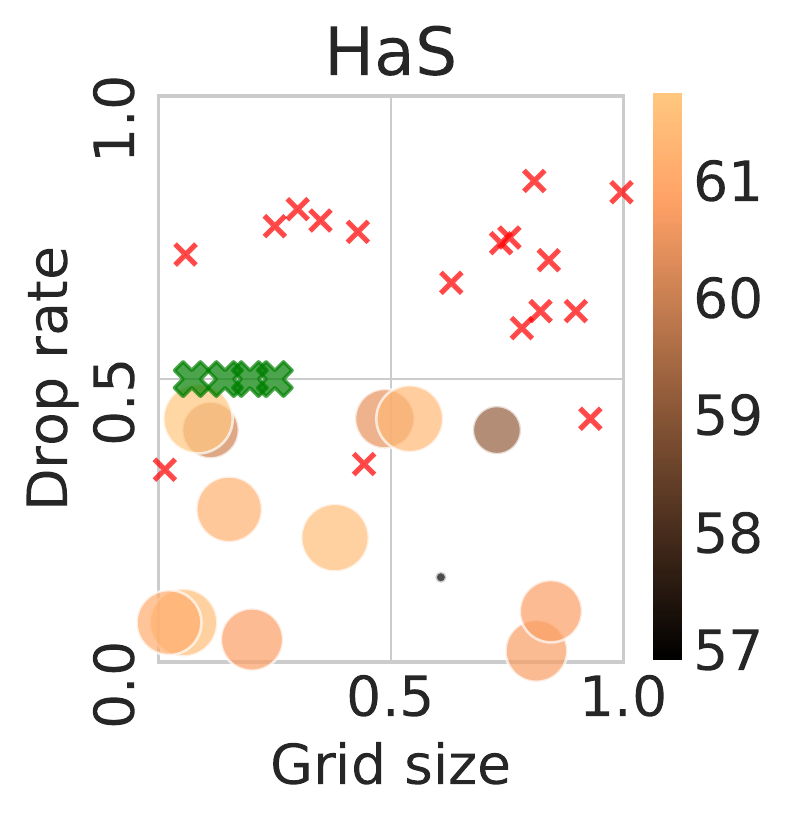}\hspace{0.03\linewidth}%
    \includegraphics[width=0.201\linewidth]{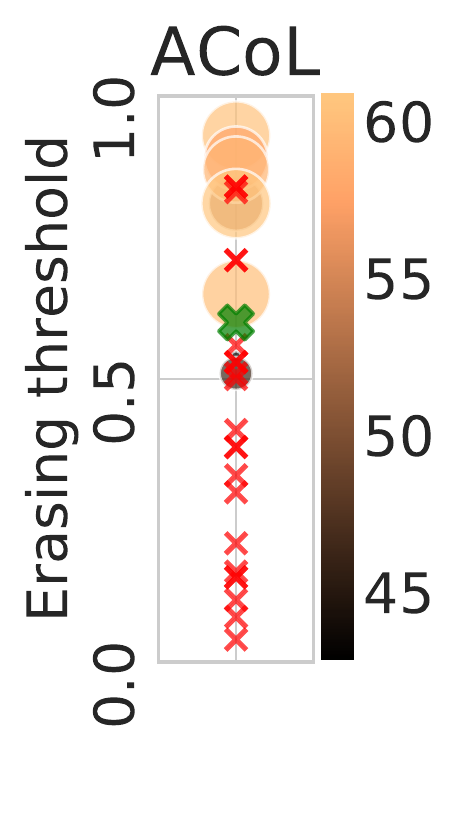}\hspace{0.03\linewidth}%
    \includegraphics[width=0.36\linewidth]{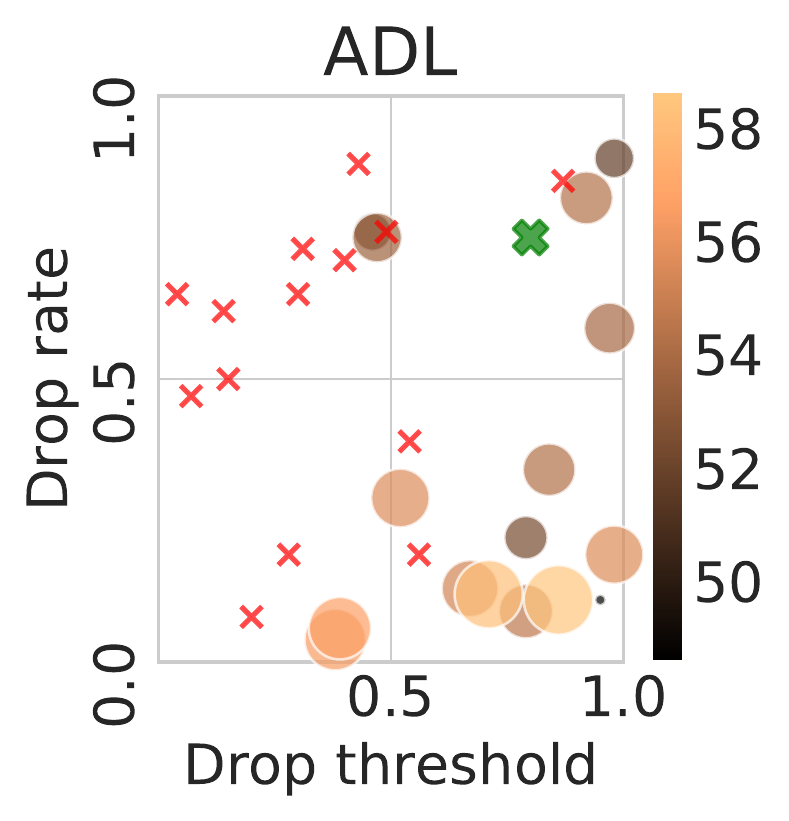}
    \caption{\small \textbf{Impact of hyperparameters for feature erasing.} Color and size of the circles indicate the performance at the corresponding hyperparameters. {\color{red}\ding{53}}: non-convergent training sessions. {\color{greencross}\ding{54}}: hyperparameters suggested by the original papers.
    }
    \label{fig:adl_acol_scatter}
    \vspace{-1.1em}
\end{figure}

WSOL evaluation must focus more on score map evaluation, independent of the calibration. As shown in Figure~\ref{fig:qualitative_cam_threshold} the min-max normalized score map for CAM predicts a peaky foreground score on the duck face, While HaS and ACoL score maps show more distributed scores in body areas, demonstrating the effects of adversarial erasing during training. However, the maximal IoU performances do not differ as much. This is because WSOL methods exhibit different score distributions (Figure~\ref{fig:cam_threshold_and_pr_curves} and Appendix~\S\ref{appendix:score_calibration}). Fixing the operating threshold $\tau$ at a pre-defined value, therefore, can lead to an apparent increase in performance without improving the score maps.

Under our threshold-independent performance measures (\maxboxacc and \pxap) shown in Figure~\ref{fig:cam_threshold_and_pr_curves}, we observe that (1) the methods have different optimal $\tau^\star$ on ImageNet and (2) the methods do not exhibit significantly different \maxboxacc or \pxap performances. This provides an explanation of the lack of improvement observed in Table~\ref{tab:main}.
We advise future WSOL researchers to report the threshold-independent metrics.

\subsection{Hyperparameter analysis}
\label{subsec:hyperparameter_analysis}

Different types and amounts of full supervision used in WSOL methods manifest in the form of model hyperparameter selection (\S\ref{sec:wsol_impossibility}). Here, we measure the impact of the validation on \trainfullsup by observing the performance distribution among 30 trials of random hyperparameters. We then study the effects of feature-erasing hyperparameters, a common hyperparameter type in WSOL methods.

\begin{figure*}
    \centering
    \begin{subfigure}[b]{.30\linewidth}
        \includegraphics[width=\linewidth]{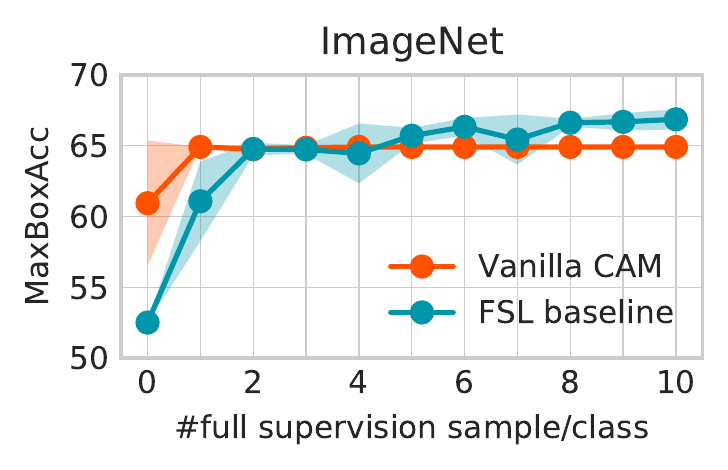}
    \end{subfigure}
    \begin{subfigure}[b]{.30\linewidth}
        \includegraphics[width=\linewidth]{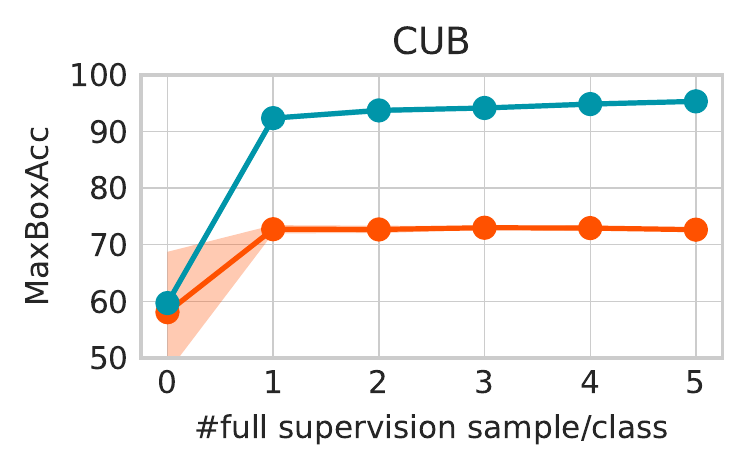}
    \end{subfigure}
    \begin{subfigure}[b]{.30\linewidth}
        \includegraphics[width=\linewidth]{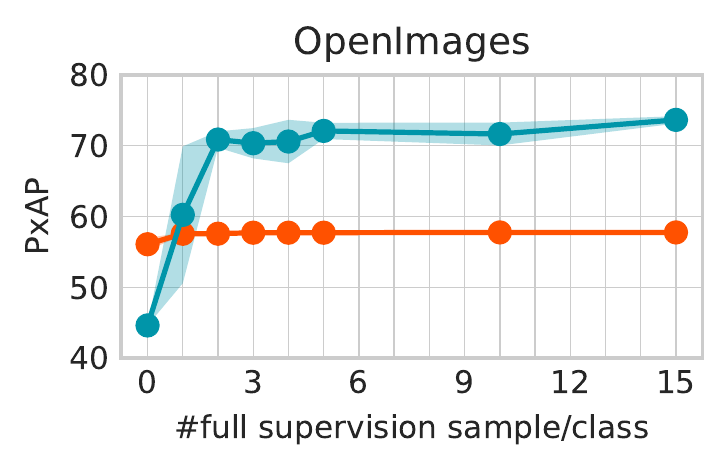}
    \end{subfigure}
    \caption{\small \textbf{WSOL versus few-shot learning.} The mean and standard error of models trained on three samples of full-supervision subsets are reported. ResNet50 is used throughout. At 0 full supervision, Vanilla CAM$=$random-hyperparameter and FSL$=$center-gaussian baseline.
    }
    \label{fig:wsol_vs_fsl}
    \vspace{-1.5em}
\end{figure*}

\myparagraph{Performance with 30 hyperparameter trials.}
To measure the sensitivity of each method to hyperparameter choices, we plot the performance distribution of the intermediate models in the 30 random search trials. We say that a training session is \textit{non-convergent} if the training loss is larger than 2.0 at the last epoch. We show the performance distributions of the converged sessions, and report the ratio of non-convergent sessions separately.

Our results in Figure~\ref{fig:wsol_robustness} indicate the diverse range of performances depending on the hyperparameter choice. Vanilla CAM is among the less sensitive, with the smallest standard deviation $\sigma=1.5$ on OpenImages. This is the natural consequence of its minimal use of hyperparameters. We thus suggest to use the vanilla CAM when absolutely no full supervision is available. ACoL and ADL tend to have greater variances across benchmarks ($\sigma=11.9$ and $9.8$ on CUB). We conjecture that the drop threshold for adversarial erasing is a sensitive hyperparameter. 

WSOL on CUB are generally struggling:
random hyperparameters often show worse performance than the center baseline (66\% cases).
We conjecture that CUB is a disadvantageous setup for WSOL: as all images contain birds, the models only attend on bird parts for making predictions. We believe adding more non-bird images can improve the overall performances~(\S\ref{subsec:when_is_wsol_unsolvable}). 

We show the non-convergence statistics in Figure~\ref{fig:training_failures}. Vanilla CAM exhibit a stable training: non-convergence rates are lowest on OpenImages and second lowest on ImageNet. ACoL and SPG suffer from many training failures, especially on CUB (43\% and 37\%, respectively).

In conclusion, vanilla CAM is stable and robust to hyperparameters.
Complicated design choices introduced by later methods only seem to lower the overall performances rather than providing new avenues for performance boost.

\myparagraph{Effects of erasing hyperparameters.}
Many WSOL methods since CAM have introduced different forms of erasing to encourage models to extract cues from broader regions (\S\ref{subsec:prior_wsol_methods}). We study the contribution of such hyperparameters in ADL, HaS, and ACoL in Figure~\ref{fig:adl_acol_scatter}. We observe that the performance improves with higher erasing thresholds (ADL drop threshold and ACoL erasing threshold). We also observe that lower drop rates leads to better performances (ADL and HaS). The erasing hyperparameters introduced since CAM only negatively impact the performance.

\subsection{Few-shot learning baselines}
\label{subsec:few_shot_learning_results}
Given that WSOL methods inevitably utilize some form of full localization supervision (\S\ref{sec:wsol_impossibility}), it is important to compare them against the few-shot learning (FSL) baselines that use it for model tuning itself.

Performances of the FSL baselines (\S\ref{subsec:evaluation_benchmarks}) are presented in Table~\ref{tab:main}. Our simple FSL method performs better than the vanilla CAM at 10, 5, and 5 fully labeled samples per class for ImageNet, CUB, and OpenImages, respectively. The mean FSL accuracy on CUB is 92.0\%, which is far better than that of the maximal WSOL performance of 70.8\%.

We compare FSL against CAM at different sizes of \trainfullsup in Figure~\ref{fig:wsol_vs_fsl}. 
We simulate the zero-fully-labeled WSOL performance with a set of randomly chosen hyperparameters (\S\ref{subsec:hyperparameter_analysis}); for FSL, we simulate the no-learning performance through the center-gaussian baseline.

FSL baselines surpass the CAM results already at 1-2 full supervision per class for CUB and OpenImages (92.4 and 70.9\% \maxboxacc and \pxap). We attribute the high FSL performance on CUB to the fact that all images are birds; with 1 sample/class, there are effectively 200 birds as training samples. For OpenImages, the high FSL performance is due to the rich supervision provided by pixel-wise masks. On ImageNet, FSL results are not as great: they surpass the CAM result at 8-10 samples per class. Overall, however, FSL performances are strikingly good, even at a low data regime. 
Thus, given a few fully labeled samples, it is perhaps better to train a model with it than to search hyperparameters. 
Only when there is absolutely no full supervision (0 fully labeled sample), CAM is meaningful (better than the no-learning center-gaussian baseline).

\section{Discussion and Conclusion}
\label{sec:conclusion}

After years of weakly-supervised object localization (WSOL) research, we look back on the common practice and make a critical appraisal. Based on a precise definition of the task, we have argued that WSOL is ill-posed and have discussed how previous methods have used different types of implicit full supervision (\eg tuning hyperparameters with pixel-level annotations) to bypass this issue (\S\ref{sec:wsol_impossibility}). We have then proposed an improved evaluation protocol that allows the hyperparameter search over a few labeled samples (\S\ref{sec:evaluation}). Our empirical studies lead to some striking conclusions: CAM is still not worse than the follow-up methods (\S\ref{subsec:main_comparison_wsol}) and it is perhaps better to use the full supervision directly for model fitting, rather than for hyperparameter search (\S\ref{subsec:few_shot_learning_results}).

We propose the following future research directions for the field. (1) Resolve the ill-posedness via \eg adding more background-class images (\S\ref{subsec:when_is_wsol_unsolvable}). (2) Define the new task, \textit{semi-weakly-supervised object localization}, where methods incorporating both weak and full supervision are studied. 

Our work has implications in other tasks where learners are not supposed to be given full supervision, but are supervised implicitly via model selection and hyperparameter fitting. Examples include weakly-supervised vision tasks (\eg detection and segmentation), zero-shot learning, and unsupervised tasks (\eg disentanglement~\cite{Disentangle}).

{
\myparagraph{Acknowledgements.}
We thank Dongyoon Han, Hyojin Park, Jaejun Yoo, Jung-Woo Ha, Junho Cho, Kyungjune Baek, Muhammad Ferjad Naeem, Rodrigo Benenson, Youngjoon Yoo, and Youngjung Uh for the feedback. NAVER Smart Machine Learning (NSML)~\cite{NSML} has been used. Graphic design by Kay Choi. 
The work is supported by the Basic Science Research Program through the National Research Foundation of Korea (NRF) funded by the MSIP (NRF-2019R1A2C2006123) and ICT R\&D program of MSIP/IITP [R7124-16-0004, Development of Intelligent Interaction Technology Based on Context Awareness and Human Intention Understanding]. This work was also funded by DFG-EXC-Nummer 2064/1-Projektnummer 390727645 and the ERC under the Horizon 2020 program (grant agreement No. 853489).
}%

{
\small

}

\clearpage
\appendix
\addcontentsline{toc}{section}{Appendices}
\part*{Appendix}

We include additional materials in this document. Each section matches with the main paper sections: \S\ref{appendix:problem_formulation} to main paper \S\ref{sec:wsol_impossibility}, \S\ref{appendix:evaluation_protocol} to main paper \S\ref{sec:evaluation}, and \S\ref{appendix:experiments} to main paper \S\ref{sec:results}.

\section{Problem Formulation of WSOL}
\label{appendix:problem_formulation}

\subsection{CAM as patch-wise posterior approximation}
\label{appendix:cam_as_posterior_approximation}

In the main paper \S\ref{subsec:what_is_wsol}, we have described class activation mapping (CAM)~\cite{CAM} as a patch-wise posterior approximator trained with image-level labels. We describe in detail why this is so. 

\myparagraph{Equivalent re-formulation of CAM.}
Originally, CAM is a technique applied on a convolutional neural network classifier $h:\mathbb{R}^{3\times H \times W}\rightarrow\mathbb{R}^{C}$, where $C$ is the number of classes, of the following form:
\begin{equation}
    h_{c}(X)=\sum_{d}W_{cd}\left(\frac{1}{HW}\sum_{ij}g_{dij}(\mathbf{X})\right)
\end{equation}
where $c,d$ are the channel-dimension indices and $i,j$ are spatial-dimension indices. In other words, $h$ is a fully convolutional neural network, followed by a global average pooling (GAP) and a linear (fully-connected) layer into a $C$-dimensional vector. We may swap the GAP and linear layers without changing the representation:
\begin{align}
    h_{c}(X)
    &=\frac{1}{HW}\sum_{ij}\left(\sum_{d}W_{cd}g_{dij}(\mathbf{X})\right) \\
    &=:\frac{1}{HW}\sum_{ij}f_{cij}(\mathbf{X}) 
\end{align}
where $f$ is now a fully-convolutional network. Each pixel $(i,j)$ in the feature map, $\left(f_{1ij}(\mathbf{X}),\cdots,f_{Cij}(\mathbf{X})\right)$, corresponds to the classification result of the corresponding field of view in the input $\mathbf{X}$, written as $X_{ij}$. Thus, we equivalently write \begin{align}
    h_{c}(X)=\frac{1}{HW}\sum_{ij}f_{c}(X_{ij}) 
\end{align}
where $f$ is now re-defined as a image patch classifier with 1-dimensional feature output (not fully convolutional).

\myparagraph{CAM as patch-wise posterior approximator.}
$h$ is now the average-pooled value of the patch-wise classification scores $f_{c}(X_{ij})$. CAM trains $f$ by maximizing the image-wide posterior of the ground truth class, where the posterior is defined as the softmax over $f(X_{ij})$:
\begin{equation}
    \log p(Y|\mathbf{X}):=
    \log \text{softmax}^Y\left(\frac{1}{HW}\sum_{ij}f(X_{ij})\right).
\end{equation}
In other words, CAM trains the network for patch-wise scores $f_{c}(X_{ij})$ to estimate the image-wide posterior $p(Y|\mathbf{X})$. 

At inference time, CAM estimates the pixel-wise posterior $p(Y|X_{ij})$ approximately by performing $p(Y|X_{ij})\approx f_Y(X_{ij})/\max_{\alpha\beta}f_Y(X_{\alpha\beta})$ (modulo some calibration to make sure $p(Y|X_{ij})\in [0,1]$).

\subsection{Proof for the ill-posedness lemma}
\label{appendix:proof}

We first define an evaluation metric for our score map for an easier argumentation. 
\begin{definition}
For a scoring rule $s$ and a threshold $\tau$, we define the \textbf{pixel-wise localization accuracy} $\text{\pxacc}(s,\tau)$ as the probability of correctly predicting the pixel-wise labels:
\begin{align}
    \text{\pxacc}(s,\tau)=
    {P}_{X,T}(s(X)\geq\tau\mid T=1)\cdot{P}_{X,T}(T=1)
    \nonumber\\
    +{P}_{X,T}(s(X)<\tau\mid T=0)\cdot{P}_{X,T}(T=0)
    \nonumber
\end{align}
\end{definition}
We prove the following lemma.
\begin{lemma}
    Assume that the true posterior $p(Y|M)$ with a continuous pdf is used as the scoring rule $s(M)=p(Y|M)$. Then, there exists a scalar $\tau\in\mathbb{R}$ such that $\text{\pxacc}(p(Y|M),\tau)=1$ if and only if the foreground-background posterior ratio $\frac{p(Y=1|M^{\text{fg}})}{p(Y=1| M^{\text{bf}})}\geq 1$ almost surely, conditionally on the event $\{T(M^{\text{fg}})=1\text{ and }T(M^{\text{bf}})=0\}$.
\end{lemma}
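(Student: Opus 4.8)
The plan is to prove the biconditional by unwinding the definition of $\text{\pxacc}(s,\tau)=1$ into a statement about thresholding the posterior, and then relating that to the foreground-background posterior ratio. First I would observe that $\text{\pxacc}(p(Y|M),\tau)=1$ holds if and only if, almost surely, $T=1$ implies $p(Y=1|M)\geq\tau$ and $T=0$ implies $p(Y=1|M)<\tau$; in other words, the threshold $\tau$ must separate the posterior values on the foreground cues from those on the background cues. Equivalently, such a $\tau$ exists if and only if
\begin{align}
    \operatorname*{ess\,inf}_{\{T(M)=1\}} p(Y=1|M)
    \;\geq\;
    \operatorname*{ess\,sup}_{\{T(M)=0\}} p(Y=1|M),
\end{align}
with the threshold chosen anywhere in the (nonempty) gap between these two quantities (the continuity-of-pdf assumption is what lets me be cavalier about whether the inequality at $\tau$ itself is strict or not, since the event $\{p(Y=1|M)=\tau\}$ has measure zero for all but countably many $\tau$).

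Next I would show this separation condition is the same as the ratio condition in the statement. For the forward direction: if a separating $\tau$ exists, then for any foreground cue $M^{\text{fg}}$ with $T(M^{\text{fg}})=1$ and any background cue $M^{\text{bg}}$ with $T(M^{\text{bg}})=0$ we get $p(Y=1|M^{\text{fg}})\geq\tau > p(Y=1|M^{\text{bg}})$ (almost surely in the pair), hence the ratio is $\geq 1$ a.s.\ on the conditioning event. For the converse: if the ratio is $\geq 1$ a.s.\ conditionally on $\{T(M^{\text{fg}})=1,\ T(M^{\text{bg}})=0\}$, then $p(Y=1|M^{\text{fg}})\geq p(Y=1|M^{\text{bg}})$ for a.e.\ such pair, which forces $\operatorname*{ess\,inf}$ over the foreground set to dominate $\operatorname*{ess\,sup}$ over the background set; picking $\tau$ in that gap gives a separating threshold, hence $\text{\pxacc}=1$. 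I would also note the degenerate cases — $P(T=1)=0$ or $P(T=0)=0$ — are trivial (any $\tau$ or an extreme $\tau$ works), so the argument is really about the generic case where both sets have positive mass.

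The main obstacle I anticipate is being careful about the measure-theoretic bookkeeping: the condition in the lemma is phrased as a statement about \emph{pairs} $(M^{\text{fg}},M^{\text{bg}})$ conditioned on a product-type event, whereas $\text{\pxacc}$ is a statement about a single draw of $M$, so I need to make the (intuitively obvious) translation between "the ratio over all foreground/background pairs is $\geq 1$" and "the infimum of posteriors over foreground exceeds the supremum over background" fully rigorous, handling essential infima/suprema and null sets correctly, and invoking the continuous-pdf hypothesis precisely where it is needed (to rule out a positive-measure atom exactly at the chosen threshold). The rest — expanding $\text{\pxacc}$, and the two directions of the implication — should be routine once that correspondence is set up cleanly.
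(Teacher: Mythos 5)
Your proposal is correct and follows essentially the same route as the paper: the paper's threshold $\tau:=\min_{G:P(G\Delta\{T(m)=0\})=0}\max_{m\in G}p(Y=1|M=m)$ is precisely the essential supremum of the posterior over the background cues, and both arguments reduce the biconditional to the equivalence between the a.s.\ pairwise ordering of foreground/background posteriors and the existence of a separating threshold. Your explicit handling of the $\geq\tau$ versus $<\tau$ boundary via the continuous-pdf assumption is in fact slightly more careful than the paper, which silently writes $p(Y=1|M^{\text{bg}})\leq\tau$ where its own definition of \pxacc requires a strict inequality.
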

\begin{proof}

    We write $E:=\{T(M^{\text{fg}})=1\text{ and }T(M^{\text{bf}})=0\}$.
    
    \noindent
    (\textbf{Proof for ``if''}) Assume $\alpha\geq 1$ almost surely, given $E$. Let
    \begin{align}
        \tau:=\min_{G:P(G\Delta \{T(m)=0\})=0}\,\,
        \max_{m\in G}\,\,
        p(Y=1| M=m)
    \end{align} 
    where $\Delta$ is the set XOR operation: $A\Delta B:=(A\cup B)\setminus (A\cap B)$. Then, for almost all $M^{\text{fg}},M^{\text{bg}}$ following $E$, 
    \begin{align}
        p(Y=1| M^{\text{fg}})\geq \tau \geq p(Y=1| M^{\text{bg}}).
        \label{eq:co_occurrence_ordered}
    \end{align}
    Therefore, 
    \begin{align}
        &P(p(Y=1| M^{\text{fg}})\geq \tau\mid T(M^{\text{fg}})=1)\nonumber\\
        &=P(p(Y=1| M^{\text{bg}})\leq \tau\mid T(M^{\text{bg}})=0) =1
        \label{eq:loc_conditionals_prob_1}
    \end{align}
    and so $\text{\pxacc}(p(Y|M),\tau)=1$.
    
    \noindent
    (\textbf{Proof for ``only if''}) Assume $\text{\pxacc}(p(Y| M),\tau)=1$ for some $\tau$. W.L.O.G., we assume that ${P}(T(M)=1)\neq 0$ and ${P}(T(M)=0)\neq 0$ (otherwise, ${P}(E)=0$ and the statement is vacuously true). Then, Equation~\ref{eq:loc_conditionals_prob_1} must hold to ensure $\text{\pxacc}(p(Y| M),\tau)=1$. Equation~\ref{eq:co_occurrence_ordered} then also holds almost surely, implying $\alpha\geq 1$ almost surely.
\end{proof}

\subsection{Foreground-background posterior ratio}
\label{appendix:fg_bg_posterior_ratio}

We have described the the pathological scenario for WSOL as when the foreground-background posterior ratio $\alpha$ is small (\S3.2). We discuss in greater detail what it means and whether there are data-centric approaches to resolve the issue. For quick understanding, assume the task is the localization of duck pixels in images. The foreground cue of interest $M^{\text{fg}}$ is ``feet'' of a duck and background cue of interest $M^{\text{bg}}$ is ``water''. Then, we can write the posterior ratio as
{
\begin{equation}
    \alpha:=
    \frac{p(\eqd|\eqf)}{p(\eqd|\eql)}= 
    \frac{p(\eqf|\eqd)}{p(\eql|\eqd)} \cdot \left(\frac{p(\eqf)}{p(\eql)}\right)^{-1}
    \nonumber
\end{equation}
$\alpha<1$ implies that lake patches are more abundant in duck images than are duck's feet (see Figure~\ref{fig:duck_feet_lake_appendix}) for an illustration. 

\begin{figure}
    \centering
    \includegraphics[width=\linewidth]{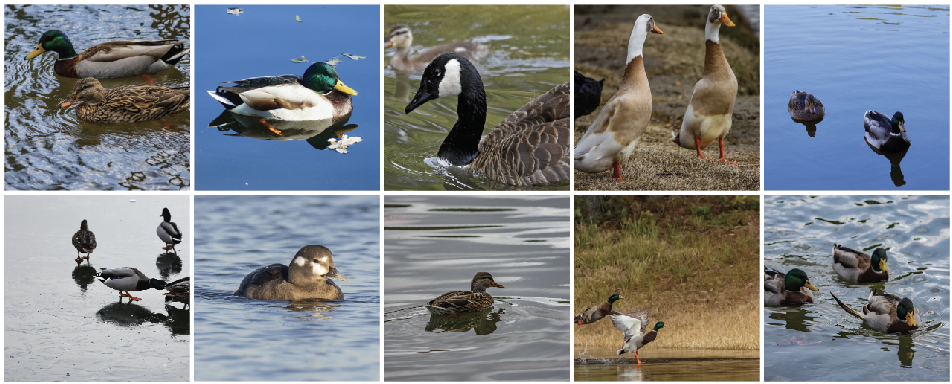}
    \caption{\small\textbf{Ducks.} Random duck images on Flickr. They contain more lake than feet pixels: $p(\eql|\eqd)\gg p(\eqf|\eqd)$.}
    \label{fig:duck_feet_lake_appendix}
\end{figure}

To increase $\alpha$, two approaches can be taken. (1) Increase the likelihood ratio $\frac{p(\eqf|\eqd)}{p(\eql|\eqd)}$. This can be done by collecting more images where duck's feet have more pixels than lake does. (2) Decrease the prior ratio $\frac{p(\eqf)}{p(\eql)}$. Note that the prior ratio can be written
\begin{equation}
\footnotesize
    \frac{p(\eqf)}{p(\eql)}=
    \frac{p(\eqf|\eqd)p(\eqd)+p(\eqf|\eqnd)p(\eqnd)}%
    {p(\eql|\eqd)p(\eqd)+p(\eql|\eqnd)p(\eqnd)}
    \nonumber
\end{equation}
With fixed likelihoods $p(\eqf|\eqd)$ and $p(\eql|\eqd)$, one can decrease the prior ratio by increasing the likelihood of lake cues in non-duck images $p(\eql|\eqnd)$. We can alter WSOL into a more well-posed task also by including many background images containing confusing background cues.

Such data-centric approaches are promising future research directions for turning WSOL into a well-posed task. 
}

\section{Evaluation Protocol for WSOL}
\label{appendix:evaluation_protocol}

\subsection{Score map normalization}
\label{appendix:score_map_normalization}

A common practice in WSOL is to normalize the score maps per image because the maximal (and minimal) scores differ vastly across images. Prior WSOL papers have introduced either max normalization (dividing through by $\max_{ij}s_{ij}$) or min-max normalization (additionally mapping $\min_{ij}s_{ij}$ to zero). After normalization, WSOL methods threshold the score map at $\tau$ to generate a tight box around the binary mask $\{(i,j)\mid s_{ij}\geq \tau\}$. $\tau$ is typically treated as a fixed value~\cite{CAM,ACoL,CutMix} or a hyperparameter to be tuned~\cite{HaS,SPG,ADL}. We summarize that how prior works calibrate and threshold score maps in Table \ref{tab:prior_wsol_operating_thresholds}. As discussed in \S\ref{subsec:evaluation_metrics} of main paper, we use the min-max normalization.

\begin{table}
    \newcommand\maxnormf{\overline{s}_{ij}}
    \newcommand\minmaxnormf{\widehat{s}_{ij}}
    \small
    \centering
    \begin{tabular}{*{2}{l}*{2}{l}}
         Method && Paper & Code  \\
         \cline{1-1} \cline{3-4}
         \vspace{-1em} & \\
         \cline{1-1} \cline{3-4}
         \vspace{-1em} & \\
         CAM~\cite{CAM} && $\maxnormf\geq 0.2$ & $\maxnormf\geq 0.2$ \\
         HaS~\cite{HaS} && Follow CAM$^\dagger$ & Follow CAM \\
         ACoL~\cite{ACoL} && Follow CAM & $\minmaxnormf\geq \text{unknown}$ \\
         SPG~\cite{SPG} && Grid search threshold & $\minmaxnormf\geq \text{unknown}$ \\
         ADL~\cite{ADL} && Not discussed & $\minmaxnormf\geq 0.2^\dagger$ \\
         CutMix~\cite{CutMix} && $\maxnormf\geq 0.15$ & $\minmaxnormf\geq 0.15$ \\
         \cline{1-1} \cline{3-4}
         \vspace{-1em} & \\
         Our protocol && $\minmaxnormf\geq \tau^\star$ & $\minmaxnormf\geq \tau^\star$ \\
         \cline{1-1} \cline{3-4}
    \end{tabular}
\[
\overline{s}_{ij}:=\frac{s_{ij}}{\max_{kl}s_{kl}}\quad \quad \widehat{s}_{ij}:=\frac{s_{ij}-\min_{kl}s_{kl}}{\max_{kl}s_{kl}-\min_{kl}s_{kl}}
\]
    \caption{\small \textbf{Calibration and thresholding in WSOL.} Score calibration is done per image: max ($\overline{s}_{ij}$) or min-max ($\widehat{s}_{ij}$) normalization. Thresholding is required only for the box evaluation. $\tau^\star$ is the optimal threshold (\S\ref{subsec:evaluation_metrics} in main paper). Daggers ($^\dagger$) imply that the threshold depends on the backbone architecture.}
    \label{tab:prior_wsol_operating_thresholds}
\end{table}

\subsection{Data preparation}
\label{appendix:data}

We present the following data-wise contributions in this paper (contributions \textbf{bolded}):
\begin{itemize}
    \item CUB: \textbf{New data} (5 images per class) with \textbf{bounding box annotations}.
    \item ImageNet: ImageNetV2~\cite{ImageNetV2} with new \textbf{bounding box annotations}.
    \item OpenImages: \textbf{Processed} a split for the WSOL training, hyperparameter search, and evaluation with ground truth object masks.
\end{itemize}

\subsubsection{ImageNet}
\label{appendixsub:imagenet}

The \textit{test set} of ImageNet-1k dataset~\cite{ImageNet} is not available. Therefore, many researchers report the accuracies on the \textit{validation set} for their final results~\cite{CutMix}. Since this practice may let models overfit to the evaluation split over time, ImageNetV2~\cite{ImageNetV2} has been proposed as the new test sets for ImageNet-1k trained models. ImageNetV2 includes three subsets according to the sampling strategies, \texttt{MatchedFrequency}, \texttt{Threshold0.7}, and \texttt{TopImages}, each with $10\,000$ images ($10$ images per class). We use the \texttt{Threshold0.7} split as our \trainfullsup. Since ImageNetV2 does not contain localization supervision, we have annotated bounding boxes on those images, following the annotation protocol of ImageNet. The total number of annotated bounding boxes are $18\,532$.

\subsubsection{CUB}
\label{appendixsub:cub}

We have collected 5 images for each of the 200 CUB fine-grained bird classes from Flickr. The overall procedure is summarized as:

\begin{enumerate}
    \item Crawl images from Flickr.
    \item De-duplicate images.
    \item Manually prune irrelevant images (three people).
    \item Prune with model classification scores.
    \item Resize images.
    \item Annotate bounding boxes.
\end{enumerate}

\myparagraph{Crawl images from Flickr.}
Since original CUB itself is collected from Flickr, we use Flickr as the source of bird images. We have used the \texttt{Flickr API}\footnote{\url{https://www.flickr.com/services/api/}} to crawl up to 400 images per class with class name as search terms. We have only crawled images under the following licenses: 
\begin{itemize}
    \item \texttt{Attribution}
    \item \texttt{Attribution-NonCommercial}
    \item \texttt{Public Domain Dedication}
    \item \texttt{Public Domain Mark}
\end{itemize}

\myparagraph{De-duplicate images.}
We de-duplicate the images using the \texttt{ImageHash} library\footnote{\url{https://pypi.org/project/ImageHash/}} first among the crawled images themselves and then against the training and test splits of CUB.

\myparagraph{Manually prune irrelevant images (three people).}
Since crawled images of each class contain negative-class images (non birds and birds of wrong categories), three humans have participated in the prune-out process. We have only kept the images that all three have voted for ``positive''.

\myparagraph{Prune with model classification scores.}
To match with the original CUB data distribution, we have trained a fine-grained bird classifier using ResNet50~\cite{ResNet} on the CUB training split. Among crawled images, those with ground truth class confidence scores lower than 0.5 have been pruned out.

\myparagraph{Achieving 5 images per class.}
At this point, most classes have more than 5 images per class and a few have less than 5. In the former case, we have randomly sampled 5 images per class. 

\myparagraph{Resize images.}
Photographic technologies have made significant advances over the decade since the time original CUB was collected. As a result, image size statistics differ. To fix this, we have resized the images appropriately.

\myparagraph{Annotate bounding boxes.}
The evaluation on CUB is based on bounding boxes. Thus, the held-out set images are supplied with tight bounding boxes around birds (one per image).

\begin{figure}
    \centering
    \includegraphics[width=\linewidth]{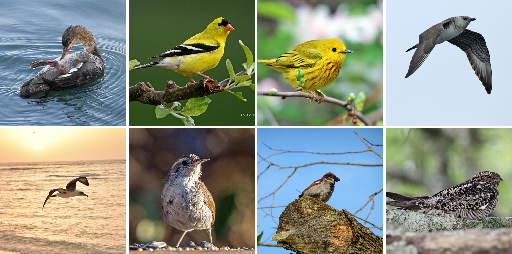}
    \caption{\small \textbf{CUB version 2.} Sample images.}
    \label{fig:cub_v2_samples}
\end{figure}

\subsubsection{OpenImages}
\label{appendixsub:openimages}
There are three significant differences between OpenImagesV5~\cite{OpenImagesV5} and CUB or ImageNet that make the OpenImages not suitable as a WSOL benchmark in its original form. (1) Images are multi-labeled; it is not sensible to train classifiers with the standard softmax cross-entropy loss assuming single label per image. (2) OpenImages has less balanced label distributions. (3) There are nice instance segmentation masks, but they have many missing instances.

We have therefore processed a subset of OpenImages into a WSOL-friendly dataset where the above three issues are resolved. The procedure is as follows:
\begin{enumerate}
    \item Prune multi-labeled samples.
    \item Exclude classes with not enough samples.
    \item Randomly sample images for each class.
    \item Prepare binary masks.
    \item Introduce ignore regions.
\end{enumerate}

\myparagraph{Prune multi-labeled samples.}
We prune the multi-labeled samples in the segmentation subset of OpenImages.
This process rejects 34.5\% samples in OpenImages.

\myparagraph{Exclude classes with not enough samples.}
After pruning multi-label samples, some classes are not available in validation and test sets. We have first defined the minimum number of samples per classes as (300, 25, 50) for (train, validation, test), and have excluded classes that do not meet the minimum requirement, resulting in 100 classes (out of 350 original classes).

\myparagraph{Randomly sample images for each class.}
We have randomly sampled (300, 25, 50) samples per class for (train, validation, test). This eventually results in $29\,819$ \trainweaksup (train), $2\,500$ \trainfullsup (validation), and $5\,000$ \testfullsup (test) samples.

\myparagraph{Prepare binary masks.}
WSOL methods are evaluated against binary masks of the foreground category in each image. Since OpenImages comes with instance-level masks of various categories, we have taken the union of instance masks of the class of interest in every image to build the binary masks.

\begin{figure}
    \centering
    \includegraphics[width=\linewidth]{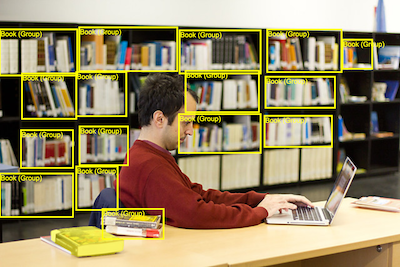}
    \caption{\small \textbf{OpenImages V5 sample.} Example of ``group-of'' box annotations. There is only one ``Book'' mask in the left bottom corner.}
    \label{fig:openimage_group_of_sample}
\end{figure}

\myparagraph{Introduce ignore regions.}
OpenImages has instance-wise masks, but not all of them are annotated as masks. When the objects are difficult to be annotated instance-wise or there are simply too many, the ``group-of'' box annotations are provided over the region containing multiple instances (\eg hundreds of books on bookshelves in an image taken at a library -- See Figure~\ref{fig:openimage_group_of_sample}). We have indicated the regions with the ``group-of'' boxes as ``ignore'' regions, and have excluded them from the evaluation during the computation of \pxprec, \pxrec, and \pxap.

\subsection{Transferability of rankings between \trainfullsup and \testfullsup}
\label{appendix:trainfullsup_testfullsup_transfer}

As detailed in main paper \S\ref{subsec:evaluation_benchmarks}, we search hyperparameters on \trainfullsup and test them on \testfullsup. To validate if the found hyperparameter rankings do transfer well between the splits, we show the preservation of ranking statistics in Table~\ref{tab:supp-val-test-transfer} and visualize the actual rankings in Figure~\ref{fig:val-test-ranking_preservation}. We observe that the rankings are relatively well-preserved (with Kendall's tau values $>0.7$).

\definecolor{Gray}{gray}{0.85}
\newcolumntype{g}{>{\columncolor{Gray}}c}

{
\setlength{\tabcolsep}{3pt}
\begin{table*}[ht!]
\centering
\small
\begin{tabular}{lc*{3}{c}gc*{3}{c}gc*{3}{c}gcg}
& & \multicolumn{4}{c}{ImageNet} & & \multicolumn{4}{c}{CUB}  & & \multicolumn{4}{c}{OpenImages} && \multicolumn{1}{c}{Total}\\
Methods  &  & VGG & Inception & ResNet & Mean &  & VGG & Inception & ResNet & Mean &  & VGG & Inception & ResNet & Mean &  & Mean \\
\cline{1-1}\cline{3-6}\cline{8-11}\cline{13-16}\cline{18-18} & \vspace{-1em} \\
CAM~\cite{CAM} &  & 0.887 & 0.795 & 0.933 & 0.872 &  & 0.731 & 0.706 & 0.758 & 0.732 &  & 0.869 & 0.714 & 0.934 & 0.839 &  & 0.814\\
HaS~\cite{HaS} &  & 0.855 & 0.795 & 0.867 & 0.839 &  & 0.422 & 0.714 & 0.867 & 0.668 &  & 0.786 & 1.000 & 0.667 & 0.818 &  & 0.775\\
ACoL~\cite{ACoL} & & 0.981 & 1.000 & 0.619 & 0.867 &  & 0.895 & 0.850 & 0.850 & 0.854 &  & 0.983 & 0.970 & 0.956 & 0.970 &  & 0.897\\
SPG~\cite{SPG} &  & 0.944 & 0.766 & 0.900 & 0.870 &  & 0.697 & 0.779 & 0.889 & 0.788 &  & 0.758 & 0.874 & 0.929 & 0.854 &  & 0.837\\
ADL~\cite{ADL} &  & 0.917 & 0.944 & 0.857 & 0.906 &  & 0.891 & 1.000 & 0.810 & 0.900 &  & 0.891 & 1.000 & 1.000 & 0.964 &  & 0.923\\
CutMix~\cite{CutMix} && 0.897 & 0.571 & 1.000 & 0.823 &  & 0.544 & 0.407 & 0.879 & 0.610 &  & 0.838 & 0.867 & 0.714 & 0.806 &  & 0.746\\
\cline{1-1}\cline{3-6}\cline{8-11}\cline{13-16}\cline{18-18} & \vspace{-1em} \\
\end{tabular}
\caption{\small \textbf{In-distribution ranking preservation.} Kendall's tau values for the hyperparameter ranking between \trainfullsup and \testfullsup are shown.}
\label{tab:supp-val-test-transfer}
\end{table*}
}

\subsection{Hyperparameter search with proxy \trainweaksup}
\label{appendix:okay_to_use_proxy_imagenet}

We have reduced the training set size (\trainweaksup) to 10\% for ImageNet hyperparameter search for the interest of computational efficiency (\S4.2). We examine how much this reduction affects the rankings of hyperparameters. We show a similar analysis as in \S\ref{appendix:trainfullsup_testfullsup_transfer}: see Figure~\ref{fig:proxy-imagenet}. We observe again that with Kendall's tau value $0.743$, the two rankings are largely preserved.

\begin{figure}
    \centering
    \includegraphics[width=\linewidth]{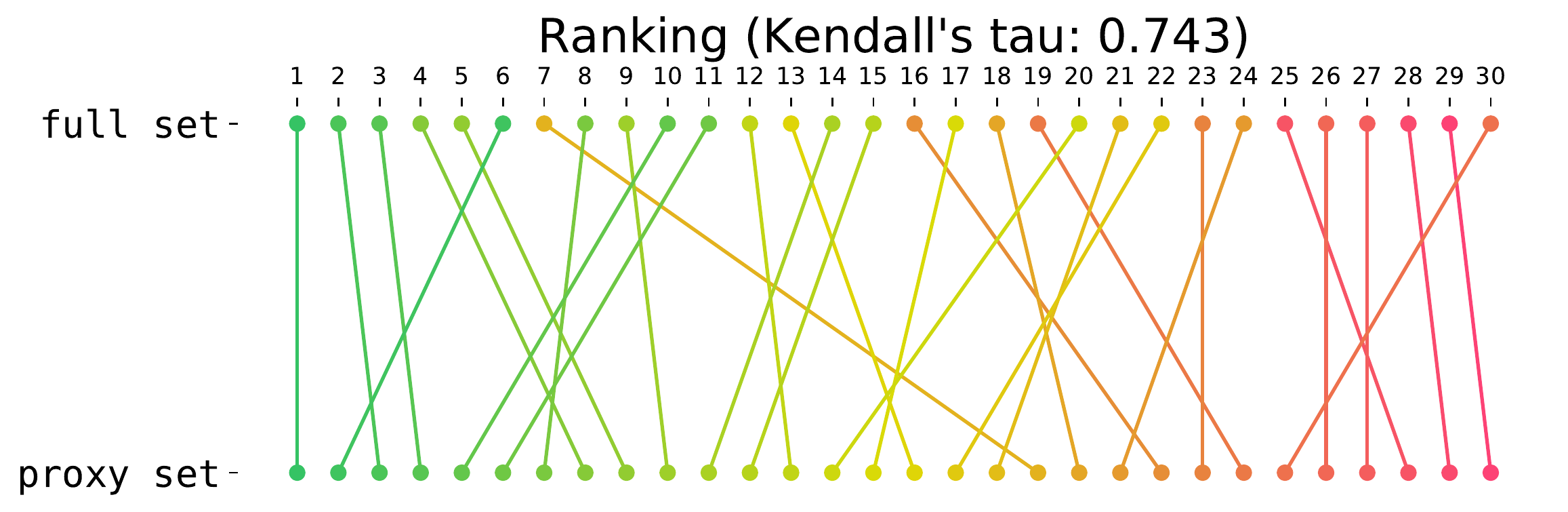}
    \caption{\small \textbf{Proxy ImageNet ranking}. Ranking of hyperparameters is largely preserved between the models trained on the full \trainweaksup and its 10\% proxy. Kendall's tau is $0.743$.}
    \label{fig:proxy-imagenet}
\end{figure}

\section{Experiments}
\label{appendix:experiments}

\subsection{Prior WSOL methods and hyperparameters}
\label{appendix:wsol_methods}

\begin{table}
    \footnotesize
    \centering
    \begin{tabular}{cccc}
        Methods && Hyperparameter & Distribution  \\
        \cline{1-1} \cline{3-4}
        \vspace{-1em} & \\
        \cline{1-1} \cline{3-4}
        \vspace{-1em} & \\
        Common && Learning rate & $\text{LogUniform}[10^{-5},10^0]$ \\
        && Score-map resolution & $\text{Categorical}\{14,28\}$ \\
        \cline{1-1} \cline{3-4}
        \vspace{-1em} & \\
        HaS~\cite{HaS} && Drop rate & $\text{Uniform}[0,1]$ \\
        && Drop area & $\text{Uniform}[0,1]$  \\
        \cline{1-1} \cline{3-4}
        \vspace{-1em} & \\  
        ACoL~\cite{ACoL} && Erasing threshold & $\text{Uniform}[0,1]$ \\
        \cline{1-1} \cline{3-4}
        \vspace{-1em} & \\
        SPG~\cite{SPG} && Threshold $\delta_{l}^{B1}$ & $\text{Uniform}[0,1]$ \\
            && Threshold $\delta_{h}^{B1}$ & $\text{Uniform}[\delta_{l}^{B1},1]$ \\
            && Threshold $\delta_{l}^{B2}$ & $\text{Uniform}[0,1]$ \\
            && Threshold $\delta_{h}^{B2}$ & $\text{Uniform}[\delta_{l}^{B2},1]$ \\
            && Threshold $\delta_{l}^{C}$ & $\text{Uniform}[0,1]$ \\
            && Threshold $\delta_{h}^{C}$ & $\text{Uniform}[\delta_{l}^{C},1]$ \\
        \cline{1-1} \cline{3-4}
        \vspace{-1em} & \\
        ADL~\cite{ADL} && Drop rate & $\text{Uniform}[0,1]$ \\
        && Erasing threshold & $\text{Uniform}[0,1]$ \\
        \cline{1-1} \cline{3-4}
        \vspace{-1em} & \\
        CutMix~\cite{CutMix} && Size prior & $\frac{1}{\text{Uniform}(0,2]}-\frac{1}{2}$ \\
        && Mix rate & $\text{Uniform}[0,1]$ \\
        \cline{1-1} \cline{3-4}
    \end{tabular}
    \vspace{1em}
    \caption{\small \textbf{Hyperparameter search spaces.} }
    \label{tab:hyperparameter_list}
\end{table}

We describe each method in greater detail here. The list of hyperparameters for each method is in Table~\ref{tab:hyperparameter_list}.

\definecolor{darkergreen}{RGB}{21, 152, 56}
\definecolor{red2}{RGB}{252, 54, 65}

\definecolor{Gray}{gray}{0.85}
\newcolumntype{g}{>{\columncolor{Gray}}c}

{
\setlength{\tabcolsep}{3pt}
\renewcommand{\arraystretch}{1.1}
\begin{table*}[ht!]
\resizebox{\textwidth}{!}{
\centering
\small
\begin{tabular}{lc*{3}{c}gc*{3}{c}gc*{3}{c}gcg}
& & \multicolumn{4}{c}{ImageNet} & & \multicolumn{4}{c}{CUB}  & & \multicolumn{4}{c}{OpenImages} && \multicolumn{1}{c}{Total}\\
Methods  &  & VGG & Inception & ResNet & Mean &  & VGG & Inception & ResNet & Mean &  & VGG & Inception & ResNet & Mean &  & Mean \\
\cline{1-1}\cline{3-6}\cline{8-11}\cline{13-16}\cline{18-18} & \vspace{-1em} \\
CAM~\cite{CAM} &  & 63.8 & 68.7 & 75.9 & 69.5 &  & 26.8 & 61.8 & 58.4 & 49.0 &  & 67.3 & 36.6 & 72.6 & 58.8 &  & 59.1\\
HaS~\cite{HaS} &  & 61.9 & 65.5 & 63.1 & 63.5 &  & 70.9 & 69.9 & 74.5 & 71.8 &  & 60.0 & 68.4 & 74.0 & 67.5 &  & 67.6\\
ACoL~\cite{ACoL} &  & 60.3 & 64.6 & 61.6 & 62.2 &  & 56.1 & 71.6 & 64.0 & 63.9 &  & 68.2 & 40.7 & 70.7 & 59.9 &  & 62.0\\
SPG~\cite{SPG} &  & 61.6 & 65.5 & 63.4 & 63.5 &  & 63.1 & 58.8 & 37.8 & 53.2 &  & 71.7 & 43.5 & 65.4 & 60.2 &  & 59.0\\
ADL~\cite{ADL} &  & 60.8 & 61.6 & 64.1 & 62.2 &  & 31.1 & 45.5 & 32.7 & 36.4 &  & 66.1 & 46.6 & 56.1 & 56.3 &  & 51.6\\
CutMix~\cite{CutMix} &  & 62.2 & 65.5 & 63.9 & 63.8 &  & 29.2 & 70.2 & 55.9 & 51.8 &  & 68.1 & 53.1 & 73.7 & 65.0 &  & 60.2\\
\cline{1-1}\cline{3-6}\cline{8-11}\cline{13-16}\cline{18-18} & \vspace{-1em} \\
\end{tabular}
}
\caption{\small \textbf{Classification performance of WSOL methods.} Classification accuracies of the models in Table 2 of the main paper are shown. Hyperparameters for each model are optimally chosen for the localization performances on \trainfullsup split. Classification performances may be sub-optimal when the best localization performance is achieved at early epochs.}
\label{tab:main_cls}
\end{table*}
}

\begin{table*}[ht!]
\centering
{
\small
\setlength{\tabcolsep}{0.15em}
\begin{tabular}{l*{22}{c}}
 &&  & \multicolumn{7}{c}{Top-1 localization accuracy}  & & \multicolumn{11}{c}{GT-known localization; \maxboxacc and \pxap} \\
\cline{4-10} \cline{12-22}
\vspace{-1em} & \\
&&\hspace{0.5em} & \multicolumn{3}{c}{ImageNet} & \hspace{0.3em} & \multicolumn{3}{c}{CUB} & \hspace{0.5em} & \multicolumn{3}{c}{ImageNet} & \hspace{0.3em} & \multicolumn{3}{c}{CUB} & \hspace{0.3em} & \multicolumn{3}{c}{OpenImages}  \\
\cline{4-6} \cline{8-10} \cline{12-14} \cline{16-18} \cline{20-22}
\vspace{-1em} & \\
&Methods && {V} &{I} &{R} &&{V} &{I} &{R} &&{V} &{I} &{R} &&{V} &{I} &{R} &&{V} &{I} &{R} \\
\cline{2-2} \cline{4-6} \cline{8-10} \cline{12-14} \cline{16-18} \cline{20-22}
\vspace{-1em} & \\
\cline{2-2} \cline{4-6} \cline{8-10} \cline{12-14} \cline{16-18} \cline{20-22}
\vspace{-1em} & \\
\multirow{6}{*}{\rotatebox{90}{Reported\hspace{0.0em}}} & CAM~\cite{CAM} &  & 42.8 & - & 46.3 &  & 37.1 & 43.7 & 49.4 &  & - & 62.7 & - &  & - & - & - &  & - & - & -\\
 & HaS~\cite{HaS} &  & - & - & - &  & - & - & - &  & - & - & - &  & - & - & - &  & - & - & -\\
 & ACoL~\cite{ACoL} &  & 45.8 & - & - &  & 45.9 & - & - &  & - & - & - &  & - & - & - &  & - & - & -\\
 & SPG~\cite{SPG} &  & - & 48.6 & - &  & - & 46.6 & - &  & - & 64.7 & - &  & - & - & - &  & - & - & -\\
 & ADL~\cite{ADL} &  & 44.9 & 48.7 & - &  & 52.4 & 53.0 & - &  & - & - & - &  & 75.4 & - & - &  & - & - & -\\
 & CutMix~\cite{CutMix} &  & 43.5 & - & 47.3 &  & - & 52.5 & 54.8 &  & - & - & - &  & - & - & - &  & - & - & -\\
\cline{2-2} \cline{4-6} \cline{8-10} \cline{12-14} \cline{16-18} \cline{20-22}
\vspace{-1em} & \\
\multirow{6}{*}{\rotatebox{90}{Reproduced\hspace{0.0em}}} & CAM~\cite{CAM} &  & 45.5 & 48.8 & 51.8 &  & 45.8 & 40.4 & 56.1 &  & 61.1 & 65.3 & 64.2 &  & 71.1 & 62.1 & 73.2 &  & 58.1 & 61.4 & 58.0\\
 & HaS~\cite{HaS} &  & 46.3 & 49.7 & 49.9 &  & 55.6 & 41.1 & 60.7 &  & 61.9 & 65.5 & 63.1 &  & 76.2 & 57.7 & 78.1 &  & 56.9 & 58.5 & 58.2\\
 & ACoL~\cite{ACoL} &  & 45.5 & 49.9 & 47.4 &  & 44.8 & 46.8 & 57.8 &  & 60.3 & 64.6 & 61.6 &  & 72.3 & 59.5 & 72.7 &  & 54.6 & 63.0 & 57.8\\
 & SPG~\cite{SPG} &  & 44.6 & 48.6 & 48.5 &  & 42.9 & 44.9 & 51.5 &  & 61.6 & 65.5 & 63.4 &  & 63.7 & 62.7 & 71.4 &  & 55.9 & 62.4 & 57.7\\
 & ADL~\cite{ADL} &  & 44.4 & 45.0 & 51.5 &  & 39.2 & 35.2 & 41.1 &  & 60.8 & 61.6 & 64.1 &  & 75.6 & 63.3 & 73.5 &  & 58.3 & 62.1 & 54.3\\
 & CutMix~\cite{CutMix} &  & 46.1 & 49.2 & 51.5 &  & 47.0 & 48.3 & 54.5 &  & 63.9 & 62.2 & 65.4 &  & 71.9 & 65.5 & 67.8 &  & 58.2 & 61.7 & 58.6\\
\cline{2-2} \cline{4-6} \cline{8-10} \cline{12-14} \cline{16-18} \cline{20-22}
\end{tabular}
}
{
\small
\setlength{\tabcolsep}{0.4em}
\begin{tabular}{cll}
     & \\
     &&Architecture  \\
     \cline{1-1}\cline{3-3}
     V&&VGG-GAP~\cite{VGG}\\
     I&&InceptionV3~\cite{InceptionV3}\\
     R&&ResNet50~\cite{ResNet} \\
     \cline{1-1}\cline{3-3}
\vspace{-14.6em}
\end{tabular}
}

\caption{\small \textbf{Previously reported vs our results.} The first six rows are reported results in prior WSOL papers. When there are different performance reports for the same method in different papers, we choose the greater performance. The last six rows are our re-implemented results under the new evaluation method (\S4).}
\label{tab:previous_current_results}
\end{table*}

\paragraph{CAM, CVPR'16~\cite{CAM}.}
Class Activation Mapping trains a classifier of fully-convolutional backbone with global average pooling structure, and at test time uses the logit outputs before the global average pooling for the scoring rule $s(X_{ij})$. See Appendix~\S\ref{appendix:cam_as_posterior_approximation} for the interpretation of CAM as a posterior approximator. CAM has learning rate (LR) and the score-map resolution (SR) as hyperparameters. LR is sampled log-uniformly from $[10^{-5},10^{0}]$, where end points correspond roughly to ``no training'' and ``training always diverges'' cases. SR is sampled from $\text{Categorical}\{14,28\}$, two widely used resolutions in prior WSOL methods. All five methods below use CAM technique in the background, and have LR and HR as design choices.

\paragraph{HaS, ICCV'17~\cite{HaS}.}
Hide-and-Seek (HaS) is a data augmentation technique that divides an input image into grid-like patches, and then randomly select patches to be dropped. The hyperparameters of HaS are drop rate (DR) and drop area (DA). Specifically, the size of each patch is decided by DA, and the probability of each patch to be selected for erasing is decided by DR. DA is sampled from a uniform distribution $U[0,1]$, where $0$ corresponds to ``no grid'' and $1$ indicates ``full image as one patch''.

\paragraph{ACoL, CVPR'18~\cite{ACoL}.}
Adversarial Complementary Learning (ACoL) adds one more classification head to backbone networks. From one head, ACoL finds the high-score region using CAM using erasing threshold (ET) and erases it from an internal feature map. The other head learns remaining regions using the erased feature map. We sample ET from a uniform distribution $U[0,1]$, where $0$ means ``erasing whole feature map'' and $1$ means ``do not erase''.

\paragraph{SPG, ECCV'18~\cite{SPG}.}
Self-produced Guidance (SPG) utilizes spatial information about fore- and background using three additional branches (\texttt{SPG-B1,B2,C}). To divide foreground and background from score-map, they introduce two hyperparameters, $\delta_{l}$ and $\delta_{h}$, per each branch. When the score is lower than $\delta_{l}$, the pixel is considered as background, and the pixel is considered as foreground when the score is higher than  $\delta_{h}$. The remaining region (higher than $\delta_{l}$, lower than  $\delta_{h}$) is ignored. We first sample $\delta_{l}$ from $U[0,1]$, and then $\delta_{h}$ is sampled from $U[\delta_{l},1]$.

\paragraph{ADL, CVPR'19~\cite{ADL}.}
Attention-based Dropout Layer (ADL) is a block applied on an internal feature map during training. ADL produces a drop mask by finding the high-score region to be dropped using another scoring rule~\cite{zagoruyko2017paying}. Also, ADL produces an importance map by normalizing the score map and uses it to increase classification power of the backbone. At each iteration, only one component is applied between the drop mask and importance map. The hyperparameters of ADL are drop rate (DR) that indicates how frequently the drop mask is selected and erasing threshold (ET) that means how large regions are dropped. We sample DR and ET from uniform distributions $U[0,1]$.

\paragraph{CutMix, ICCV'19~\cite{CutMix}.}
CutMix is a data augmentation technique, where patches in training images are cut and paste to other images and target labels are mixed likewise. Its hyperparameters consist of the size prior $\alpha$ (used for sampling sizes according to $\sim\!\!\text{Beta}(\alpha,\alpha)$) and the mix rate $r$ (Bernoulli decision for ``CutMix or not''). The size prior is sampled from the positive range $\frac{1}{\text{Unif}(0,2]}-\frac{1}{2}$; then, $\text{Var}(\text{Beta}(\alpha,\alpha))$ follows the uniform distribution between 0 and 0.25 (maximal variance; two Dirac deltas at 0 and 1).

\subsection{Classification results of WSOL methods}
\label{appendix:cls_performance}

The widely-used ``top-1 localization accuracy'' for WSOL represents both the classification \textit{and} localization performances. The metric can be misleading as a localization metric, as the increase in numbers can also be attributed to the improved classification accuracies. Thus, in the main paper, we have suggested using on the ``GT-known'' type of metrics like \maxboxacc and \pxap that measures the localization performances given perfect classification.

Here, we measure the classification accuracies of the models in Table~\ref{tab:main} of the main paper, to complete the analysis. The performances are reported in Table~\ref{tab:main_cls}. There are in general great fluctuations in the classification results (26.8\% to 74.5\% on CUB). This is because we have selected the hyperparameters for each model that maximize the localization performances on \trainfullsup split. In many cases, the best localization performances are achieved at early epochs, before the classifiers are sufficiently trained (see also \S\ref{appendix:learning_curves} and Figure~\ref{fig:wsol_learning_curves}). The result signifies that localization and classification performances may not necessarily correlate and, therefore, localization-only metrics like \maxboxacc and \pxap must be used for model selection and evaluation of WSOL methods.

\definecolor{darkergreen}{RGB}{21, 152, 56}
\definecolor{red2}{RGB}{252, 54, 65}
\definecolor{Gray}{gray}{0.85}
\newcolumntype{g}{>{\columncolor{Gray}}c}

{
\setlength{\tabcolsep}{3pt}
\renewcommand{\arraystretch}{1.1}
\begin{table*}[ht!]
\resizebox{\textwidth}{!}{%
\centering
\small
\begin{tabular}{lc*{3}{c}gc*{3}{c}gc*{3}{c}gcg}
& & \multicolumn{4}{c}{ImageNet (\newmaxboxacc)} & & \multicolumn{4}{c}{CUB (\newmaxboxacc)}  & & \multicolumn{4}{c}{OpenImages (\pxap)} && \multicolumn{1}{c}{Total}\\
Methods  &  & VGG & Inception & ResNet & Mean &  & VGG & Inception & ResNet & Mean &  & VGG & Inception & ResNet & Mean &  & Mean \\
\cline{1-1}\cline{3-6}\cline{8-11}\cline{13-16}\cline{18-18} & \vspace{-1em} \\
CAM~\cite{CAM} &  & 60.0 & 63.4 & 63.7 & 62.4 &  & 63.7 & 56.7 & 63.0 & 61.1 &  & 58.3 & 63.2 & 58.5 & 60.0 &  & 61.2\\
HaS~\cite{HaS} &  & \tableplus{+0.6} & \tableplus{+0.3} & \tableminus{-0.3} & \tableplus{+0.2} &  & \tableplus{+0.0} & \tableminus{-3.3} & \tableplus{+1.7} & \tableminus{-0.5} &  & \tableminus{-0.2} & \tableminus{-5.1} & \tableminus{-2.6} & \tableminus{-2.6} &  & \tableminus{-1.0}\\
ACoL~\cite{ACoL} &  & \tableminus{-2.6} & \tableplus{+0.3} & \tableminus{-1.4} & \tableminus{-1.2} &  & \tableminus{-6.3} & \tableminus{-0.5} & \tableplus{3.5} & \tableminus{-1.1} &  & \tableminus{-4.0} & \tableminus{-6.0} & \tableminus{-1.2} & \tableminus{-3.7} &  & \tableminus{-2.0}\\
SPG~\cite{SPG} &  & \tableminus{-0.1} & \tableminus{-0.1} & \tableminus{-0.4} & \tableminus{-0.2} &  & \tableminus{-7.4} & \tableminus{-0.8} & \tableminus{-2.6} & \tableminus{-3.6} &  & \tableplus{+0.0} & \tableminus{-0.9} & \tableminus{-1.8} & \tableminus{-0.9} &  & \tableminus{-1.6}\\
ADL~\cite{ADL} &  & \tableminus{-0.2} & \tableminus{-2.0} & \tableplus{+0.0} & \tableminus{-0.7} &  & \tableplus{+2.6} & \tableplus{+2.1} & \tableminus{-4.6} & \tableplus{+0.0} &  & \tableplus{+0.4} & \tableminus{-6.4} & \tableminus{-3.3} & \tableminus{-3.1} &  & \tableminus{-1.3}\\
CutMix~\cite{CutMix} &  & \tableminus{-0.6} & \tableplus{+0.5} & \tableminus{-0.4} & \tableminus{-0.2} &  & \tableminus{-1.4} & \tableplus{+0.8} & \tableminus{-0.2} & \tableminus{-0.3} &  & \tableminus{-0.2} & \tableminus{-0.7} & \tableminus{-0.8} & \tableminus{-0.6} &  & \tableminus{-0.3}\\
\cline{1-1}\cline{3-6}\cline{8-11}\cline{13-16}\cline{18-18} & \vspace{-1em} \\
Best WSOL &  & 60.6 & 63.9 & 63.7 & 62.6 &  & 66.3 & 58.8 & 66.4 & 61.1 &  & 58.7 & 63.2 & 58.5 & 60.0 &  & 61.2\\
FSL baseline &  & 60.3 & 65.3 & 66.3 & 64.0 &  & 71.6 & 86.6 & 82.4 & 80.2 &  & 65.9 & 74.1 & 74.4 & 71.5 &  & 71.9\\
Center baseline &  & 52.5 & 52.5 & 52.5 & 52.5 &  & 59.7 & 59.7 & 59.7 & 59.7 &  & 45.8 & 45.8 & 45.8 & 45.8 &  & 52.3\\
\cline{1-1}\cline{3-6}\cline{8-11}\cline{13-16}\cline{18-18} & \vspace{-1em} \\
\end{tabular}
}
\caption{\small \textbf{Re-evaluating WSOL with \newmaxboxacc}. We re-evaluate six recently proposed WSOL methods with \newmaxboxacc. The experimental setting is the same as that of Table~{\color{red}2} in the main paper.}
\label{tab:main_v2}
\vspace{-1em}
\end{table*}
}

\subsection{Reproducing prior WSOL results}
\label{appendix:reproducing}

We also summarize reported results in prior WSOL papers~\cite{CAM,HaS,ACoL,SPG,ADL,CutMix} along with our reproduced results in Table~\ref{tab:previous_current_results}. While our main evaluation metrics are the classification-disentangled GT-known measures (\S\ref{subsec:evaluation_metrics} in main paper), we also report the \textit{top-1 localization} metrics that also measure the classification scores to match the reported numbers. Note that we use oracle $\tau$ for Top-1 localization accuracy. 

We experimentally observe that training epochs and batch sizes influence the localization accuracies ($\sim10$pp in \maxboxacc). However, the training details for each method are not given by the corresponding papers. All methods should share the same training budget for fair comparisons. Therefore, we fix our training epochs to ($10$, $50$, $10$) for (ImageNet, CUB, OpenImages) experiments, and fix the batch size to $32$ for all datasets. 

Following the prior methods~\cite{CAM, HaS, ACoL, SPG, ADL, CutMix}, we train all six methods by fine-tuning ImageNet pre-trained model. Specifically, we apply $10\times$ scale of learning rate to higher-level layers of backbone networks during training. We consider the last two layers for VGG, last three layers for InceptionV3, and last three residual blocks, and fully connected layers for ResNet as the higher-level layers.

Note that with GT-known metrics, our re-implementations produce better performances than the previously reported results (\eg $62.7\rightarrow 65.3$ for CAM on ImageNet with Inception backbone). This is perhaps because \maxboxacc and \pxap are based on the best operating thresholds.

Top-1 localization accuracies are reproduced well in general, except for the ADL: \eg $52.4\rightarrow 39.2$ for ADL on CUB with VGG backbone. This is due to the reduced training epochs compared to the original paper~\cite{ADL}, which results in a decreased classification accuracies.

\subsection{Score calibration and thresholding}
\label{appendix:score_calibration}

The operating threshold $\tau$ for the score map $s$ is an important parameter for WSOL. We show additional plots and visualizations to aid understanding. In Figure~\ref{fig:all_3_by_3_threshold_plots}, we show the \boxacc and \pxprec-\pxrec performances at different operating thresholds with diverse architectures and datasets. It extends the main paper Figure~\ref{fig:cam_threshold_and_pr_curves}. We observe again that the optimal operating thresholds $\tau^\star$ are vastly different across data and architectures for \boxacc. The \maxboxacc in each method are relatively stable across methods especially on ImageNet. OpenImages \pxprec-\pxrec curves do not exhibit big differences amongst WSOL methods, likewise.

In Figure \ref{fig:cam_value_dist}, we visualize score distributions for different WSOL methods. We observe that methods have different distributions of scores; ACoL in particular tends to generate flatter score maps. Comparing dataset, we observe that OpenImages tends to have more peaky score distributions. It is therefore important to find the optimal operating point for each method and dataset for fair comparison.

\begin{figure}
    \centering
    \includegraphics[width=\linewidth]{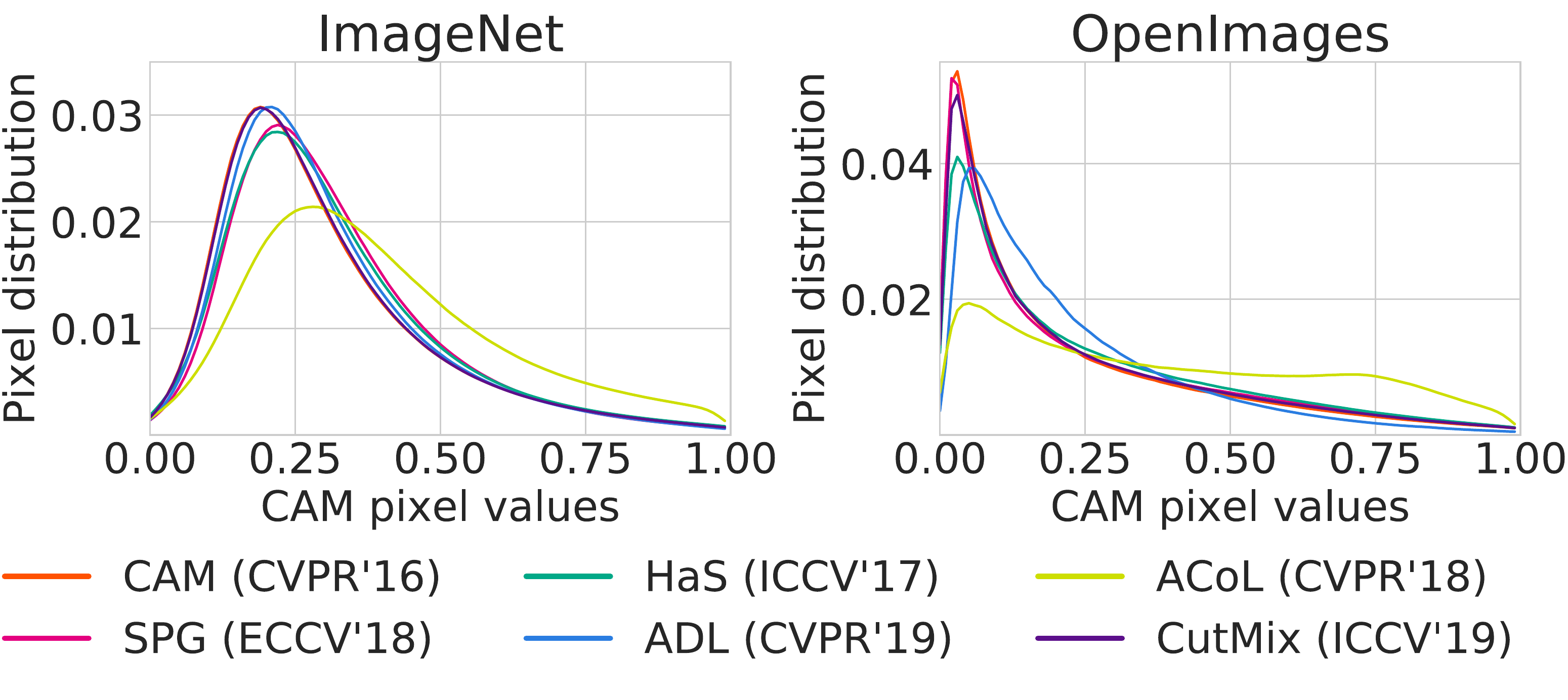}
    \caption{\small \textbf{CAM pixel value distributions.} On ImageNet and OpenImages \testfullsup.}
    \label{fig:cam_value_dist}
\end{figure}

We visualize more score maps in Figure~\ref{fig:score_map_visualization_imagenet},~\ref{fig:score_map_visualization_cub},~\ref{fig:score_map_visualization_openimages}; it extends the main paper Figure~\ref{fig:qualitative_cam_threshold}. We observe qualitatively different score maps in general across methods. However, the optimal IoU values are not as different and hard to predict just by visually looking at the samples. Again, it is important to have an objective way to set the thresholds $\tau$ in each case.

\subsection{Hyperparameter analysis}

We show the performance of all 30 hyperparameter search iterations on three datasets with three backbone architectures in Figure~\ref{fig:all_3_by_3_violin_plots}; it extends the main paper Figure~\ref{fig:wsol_robustness}. We observe similar trends as in the main paper. (1) Performances do vary according to the hyperparameter choice, so the hyperparameter optimization is necessary for the optimal performances. (2) CAM is among the more stable WSOL methods. (3) ACoL and ADL show greater sensitivity to hyperparameters in general. (4) CUB is a difficult benchmark where random choice of hyperparameters is highly likely to lead to performances worse than the center-Gaussian baseline.

\subsection{Learning Curves.}
\label{appendix:learning_curves}

We show the learning curves for CUB, ImageNet, OpenImages in Figure~\ref{fig:wsol_learning_curves}. We observe that the performances converge at epochs ($10$, $50$, $10$) for (ImageNet, CUB, OpenImages); the number of epochs for training models is thus sufficient for all methods. Note that learning rates are decayed by 10 every ($3$, $15$, $3$) epochs for (ImageNet, CUB, OpenImages). For the most cases, performance increases as the training progresses. However, in some cases (\eg CUB, SPG and OpenImages, CAM cases), best performances have already achieved at early iteration. That is, classification training rather hurts localization performance. Analyzing this is an interesting future study.  

\subsection{Evaluating WSOL methods with \newmaxboxacc}
\label{appendix:newmaxboxacc}

We re-evaluate six recently proposed WSOL methods with \newmaxboxacc, and the results are shown in Table~\ref{tab:main_v2}. All training configurations but evaluation metrics are the same as those of the main paper. We evaluate the last checkpoint of each training session. We obtain the same conclusions as with the original metric, \maxboxacc: (1) there has been no significant progress in WSOL performances beyond vanilla CAM~\cite{CAM} and (2) with the same amount of fully-supervised samples, FSL baselines provide better performances than the existing WSOL methods.

\clearpage

\begin{figure*}[ht!]
    \centering
    \begin{subfigure}[b]{.49\linewidth}
        \includegraphics[width=\linewidth]{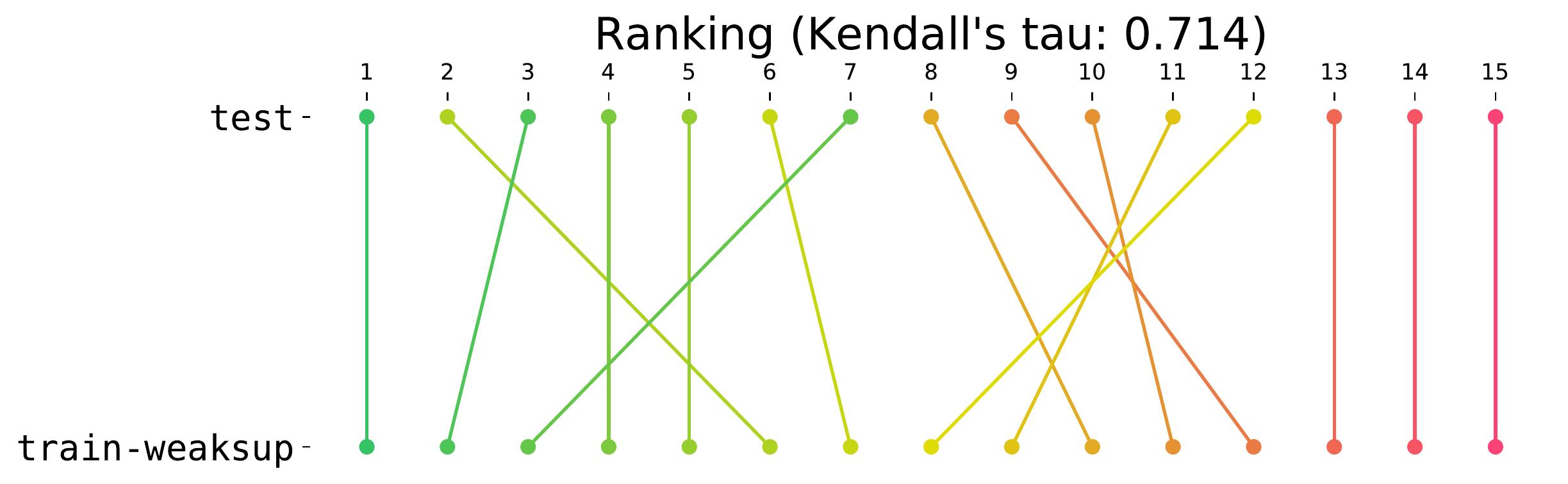}%
        \caption{\small CAM}
        \vspace{2em}
    \end{subfigure}
    \begin{subfigure}[b]{.49\linewidth}
        \includegraphics[width=\linewidth]{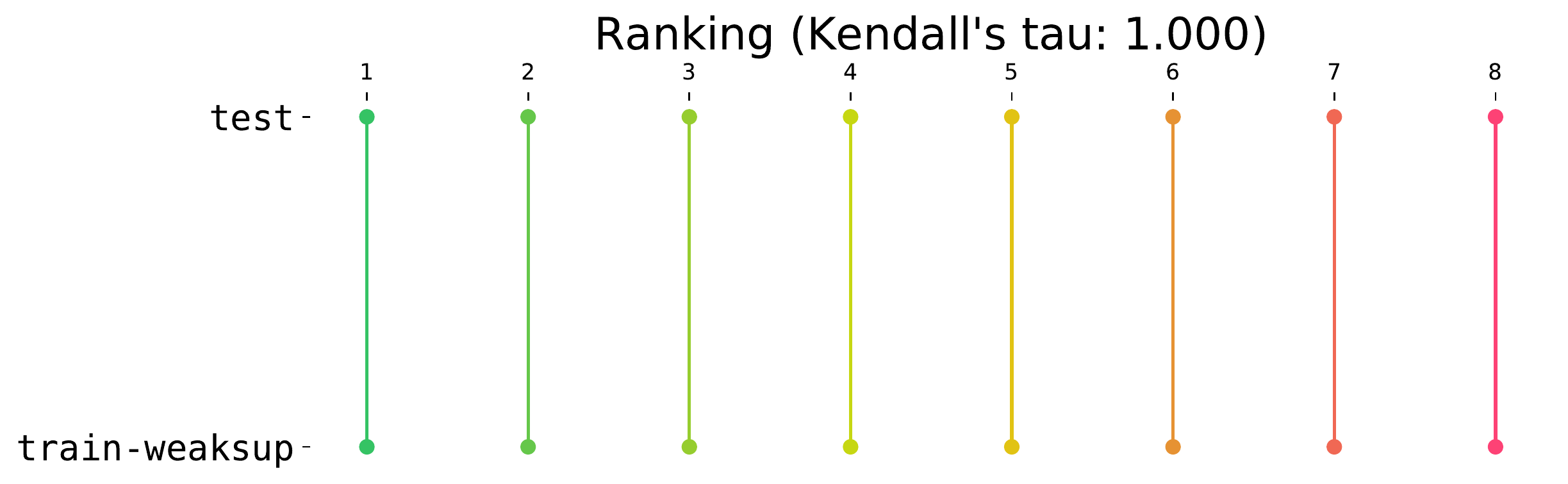}%
        \caption{\small HaS}
        \vspace{2em}
    \end{subfigure}
    \begin{subfigure}[b]{.49\linewidth}
        \includegraphics[width=\linewidth]{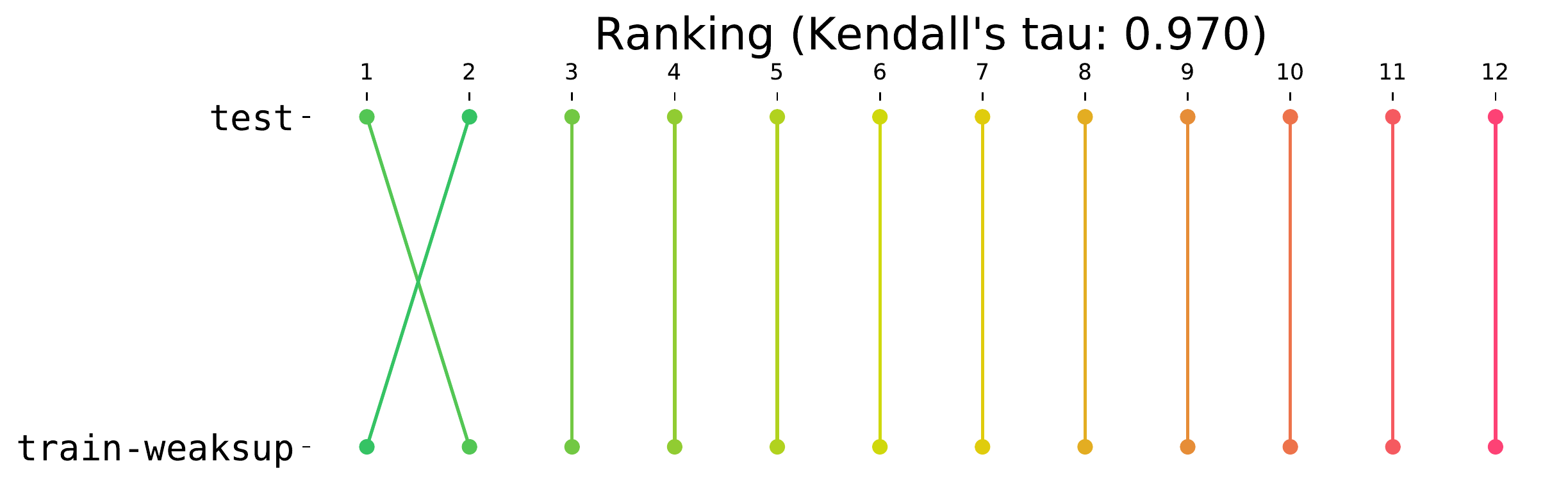}%
        \caption{\small ACoL}
        \vspace{2em}
    \end{subfigure}
    \begin{subfigure}[b]{.49\linewidth}
        \includegraphics[width=\linewidth]{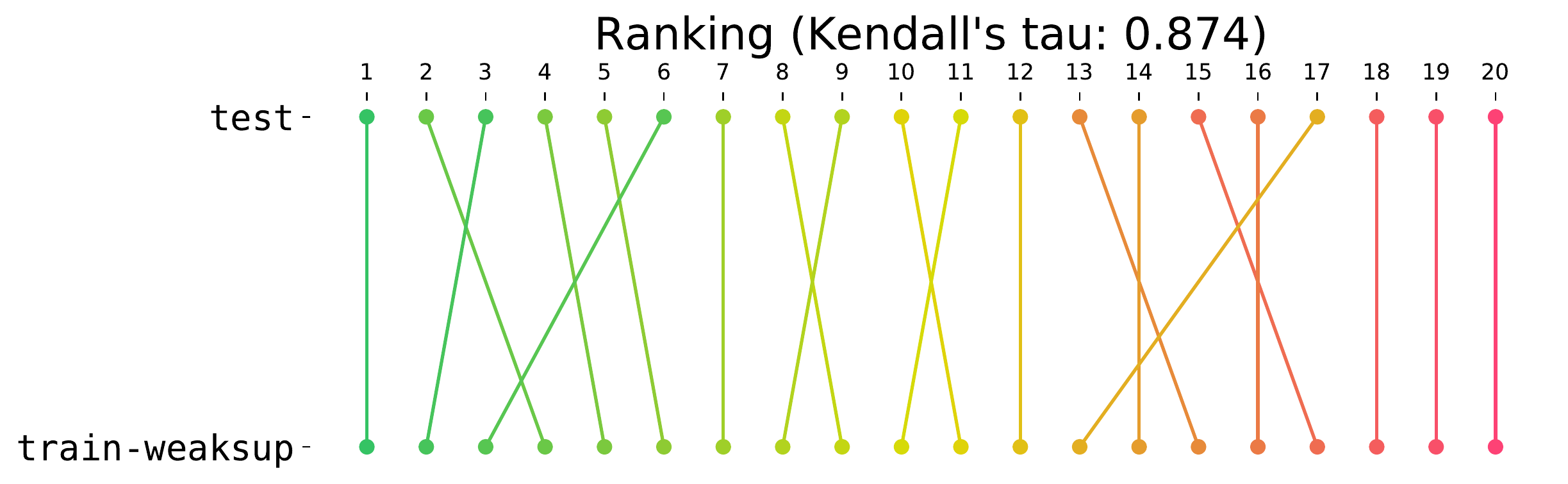}%
        \caption{\small SPG}
        \vspace{2em}
    \end{subfigure}
    \begin{subfigure}[b]{.49\linewidth}
        \includegraphics[width=\linewidth]{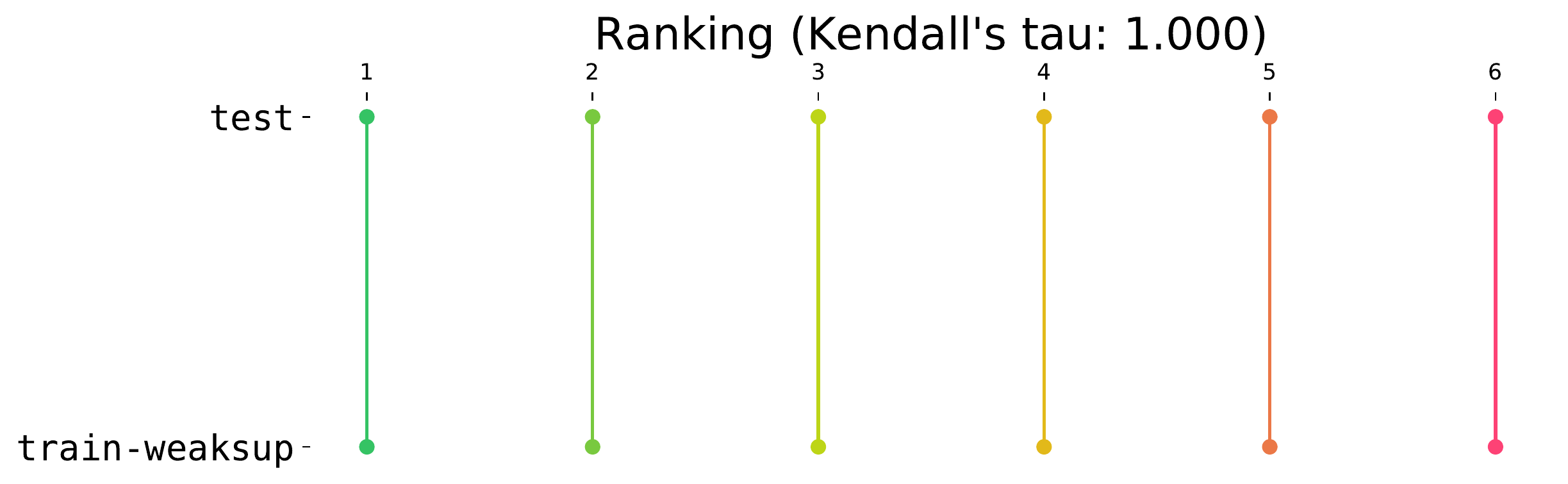}%
        \caption{\small ADL}
        \vspace{2em}
    \end{subfigure}
    \begin{subfigure}[b]{.49\linewidth}
        \includegraphics[width=\linewidth]{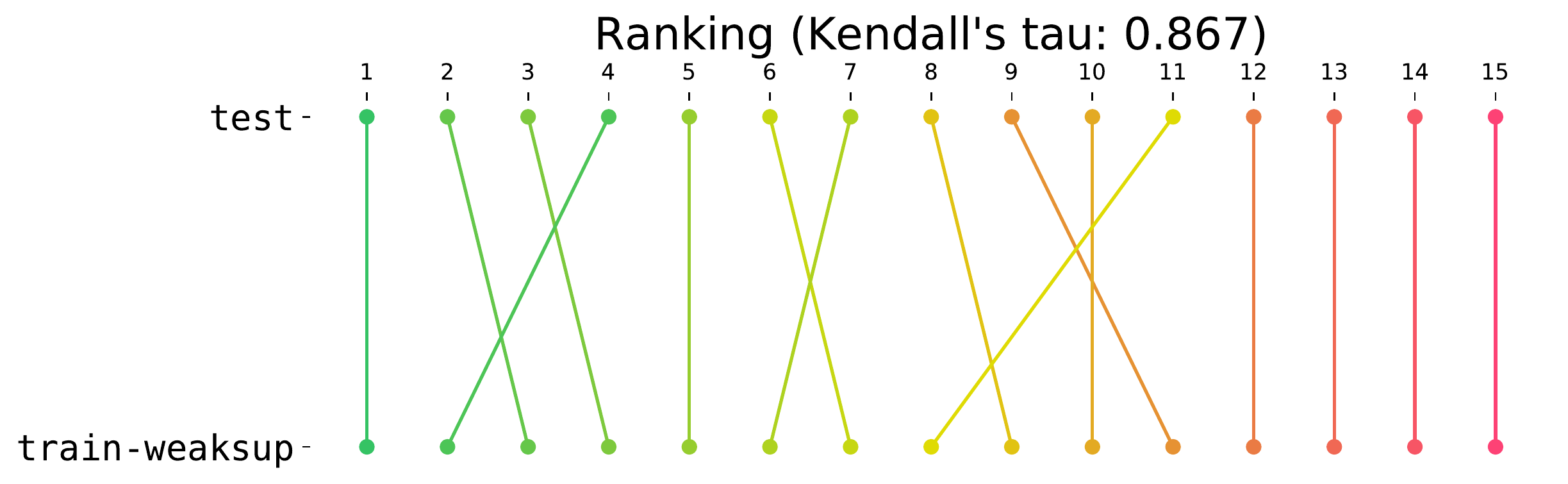}%
        \caption{\small CutMix}
        \vspace{2em}
    \end{subfigure}
    \caption{\small \textbf{Preservation of hyperparameter rankings.} Ranking of hyperparameters is largely preserved between \trainweaksup and \testfullsup. We only show the converged sessions (\S5.4). Results on ResNet50, OpenImages.}
    \label{fig:val-test-ranking_preservation}
\end{figure*}
\begin{figure*}
    \centering
    \begin{subfigure}[b]{\linewidth}
        \includegraphics[width=\linewidth]{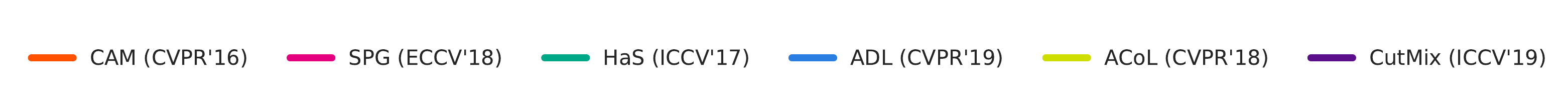}
    \end{subfigure}

    \begin{subfigure}[b]{.32\linewidth}
        \includegraphics[width=\linewidth]{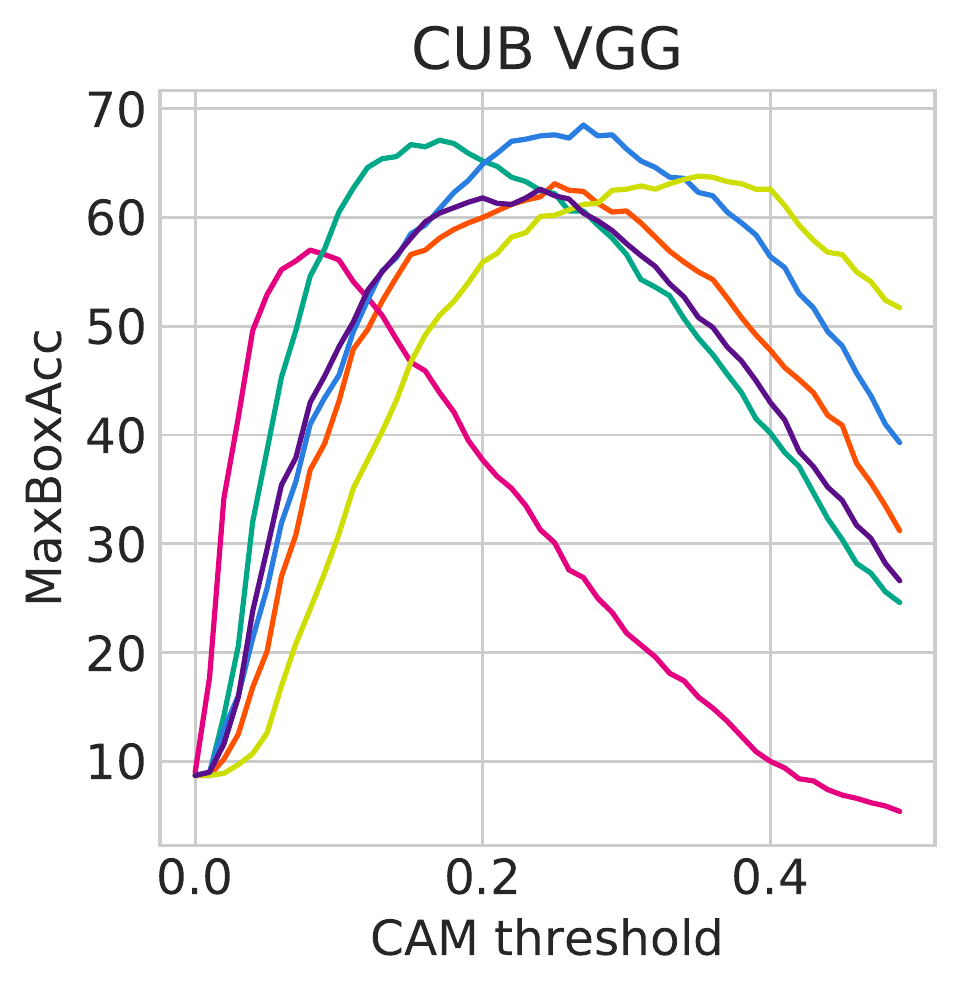}
    \end{subfigure}
    \begin{subfigure}[b]{.32\linewidth}
        \includegraphics[width=\linewidth]{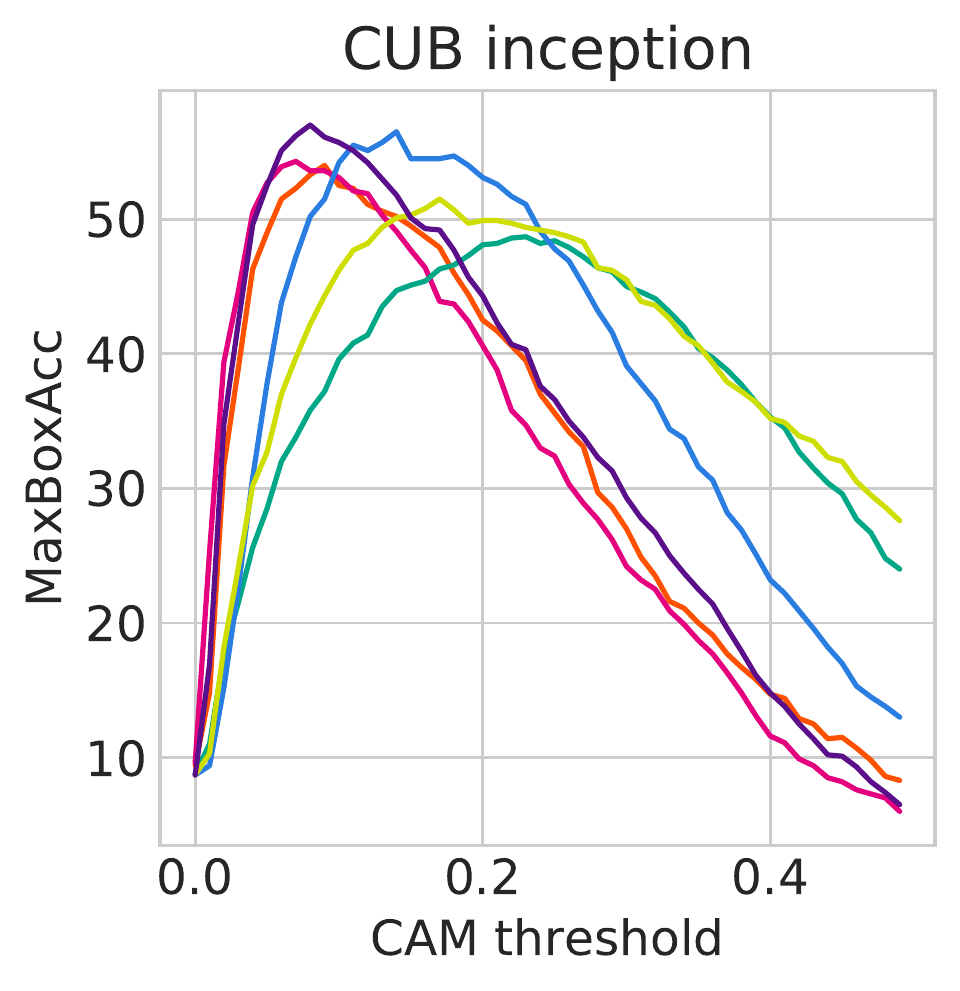}
    \end{subfigure}
    \begin{subfigure}[b]{.32\linewidth}
        \includegraphics[width=\linewidth]{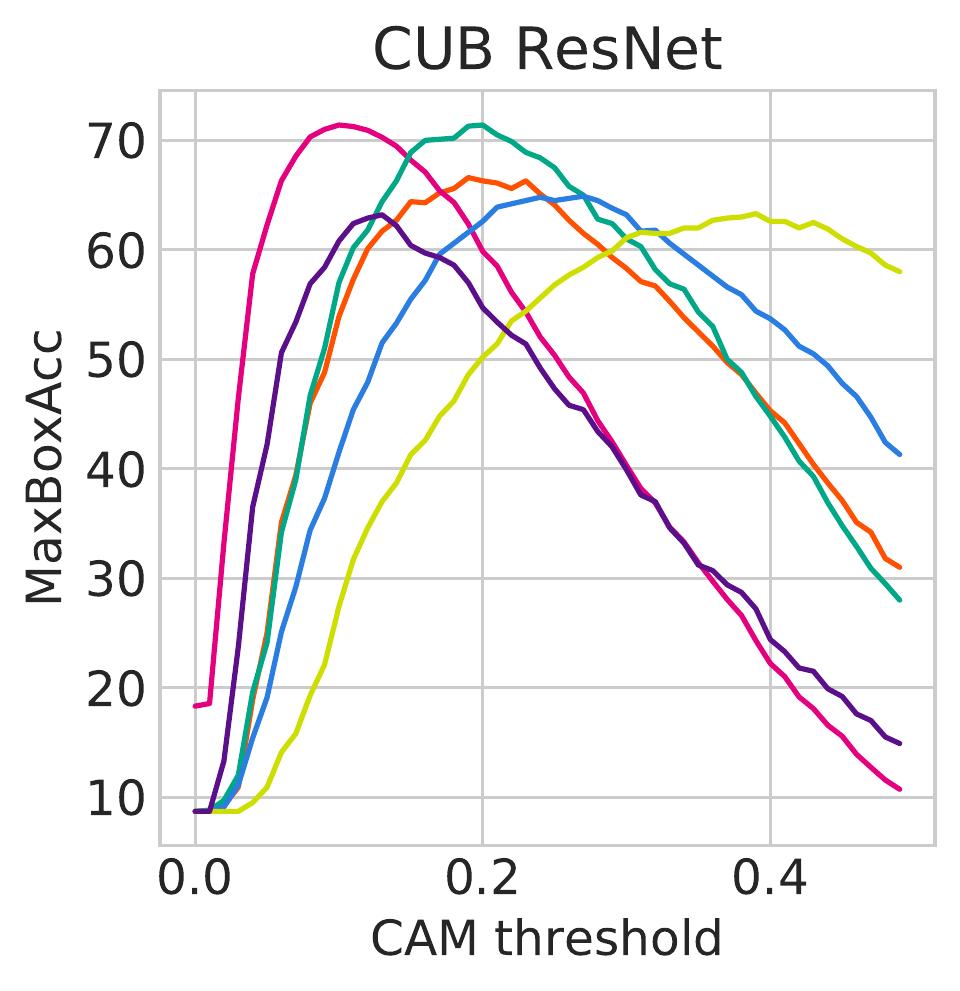}
    \end{subfigure}

    \begin{subfigure}[b]{.32\linewidth}
        \includegraphics[width=\linewidth]{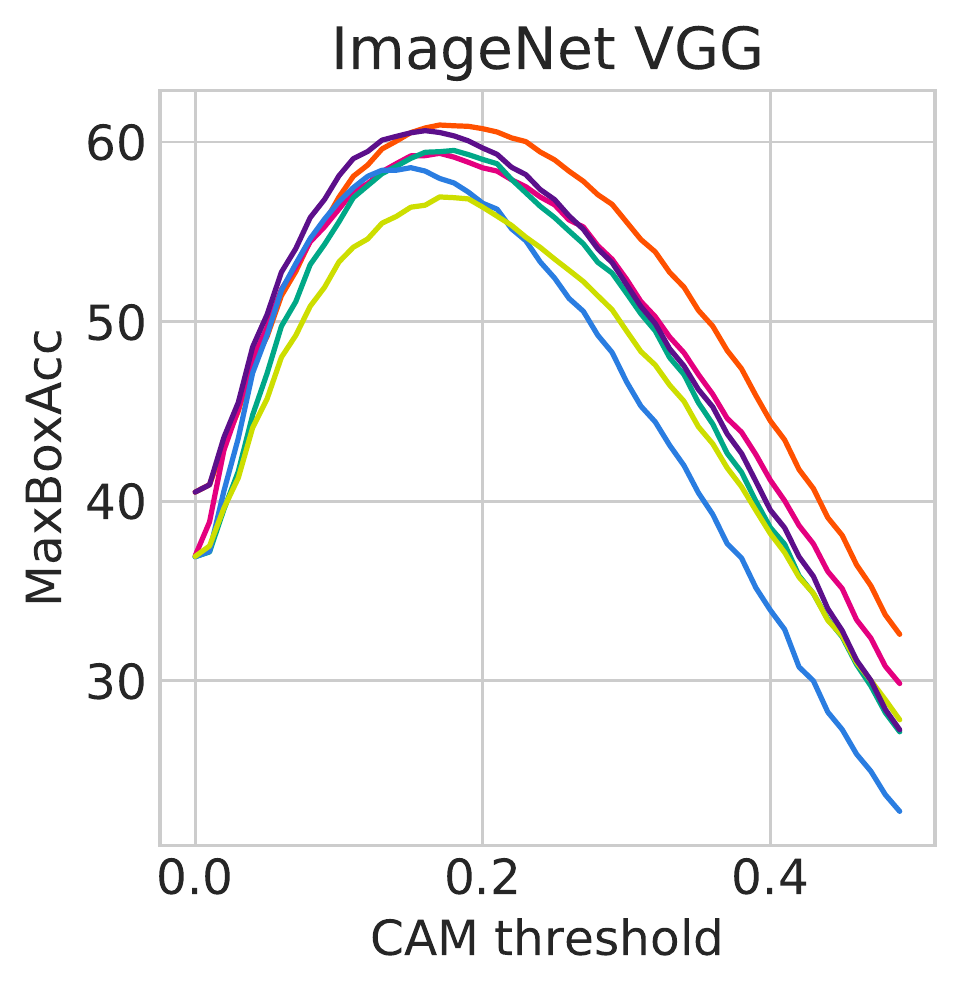}
    \end{subfigure}
    \begin{subfigure}[b]{.32\linewidth}
        \includegraphics[width=\linewidth]{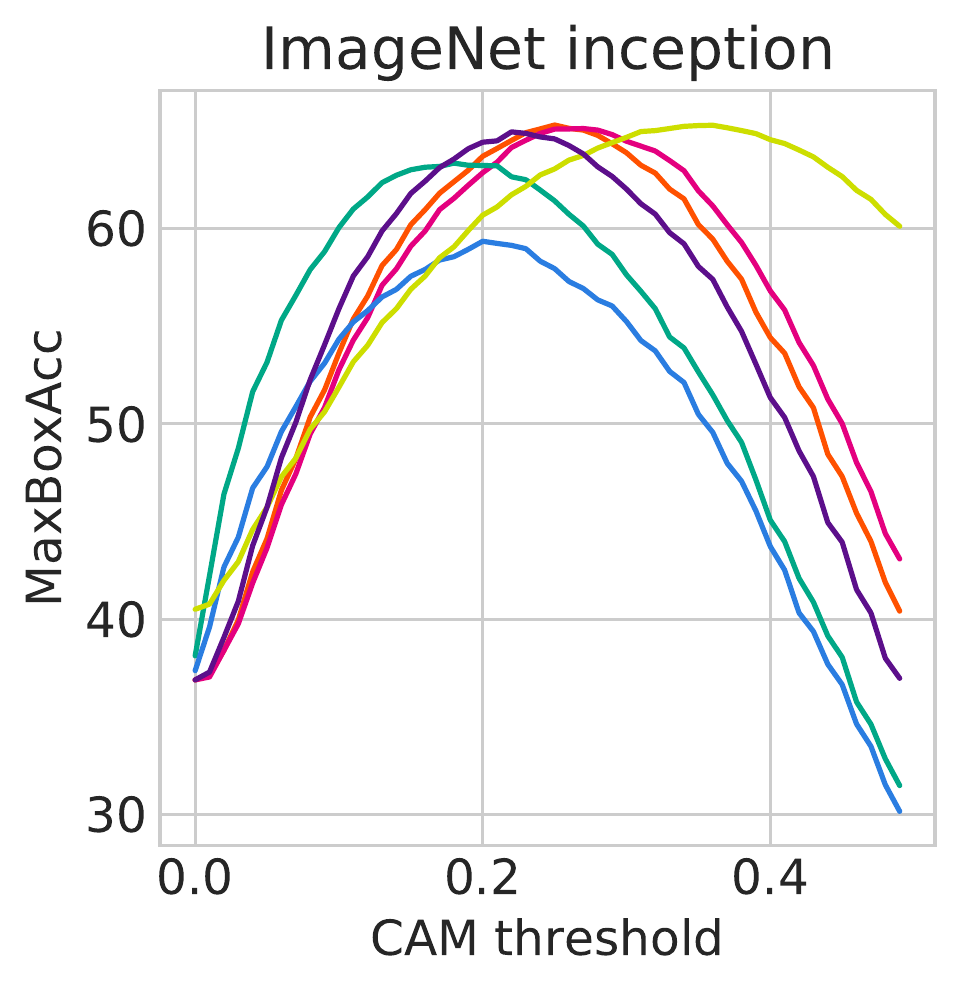}
    \end{subfigure}
    \begin{subfigure}[b]{.32\linewidth}
        \includegraphics[width=\linewidth]{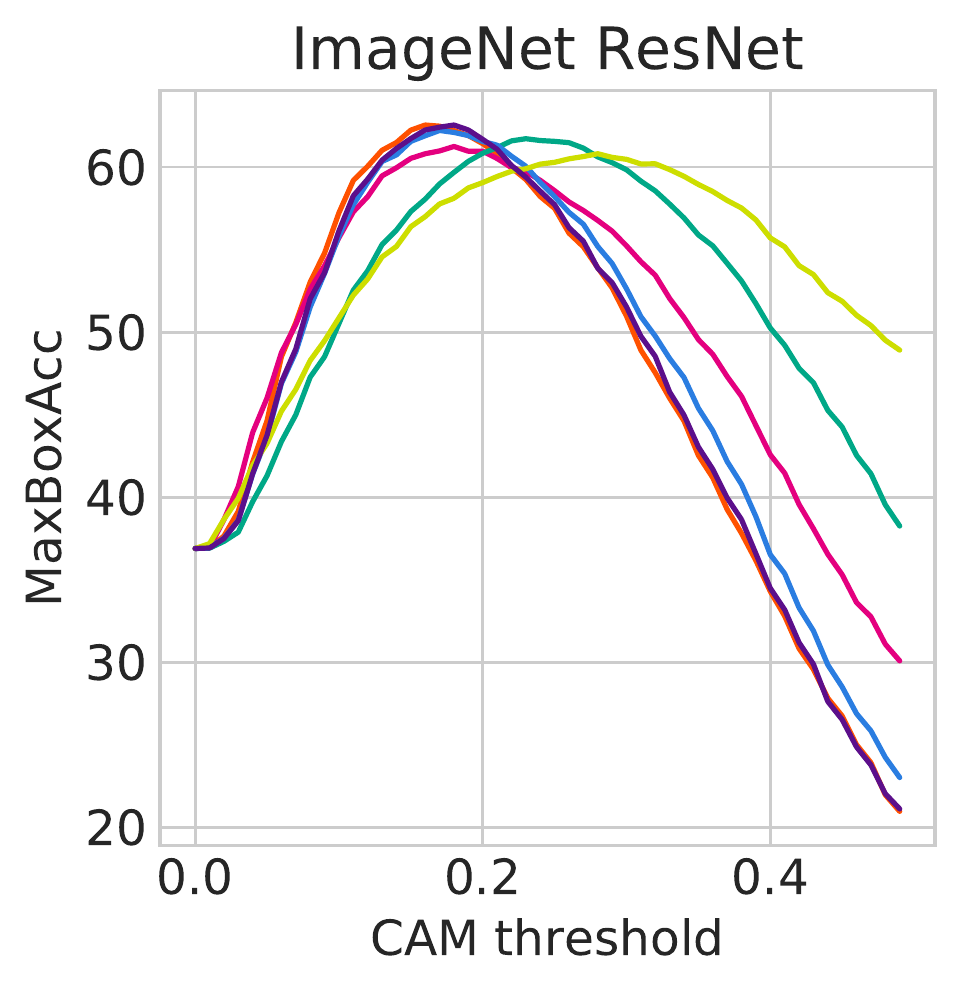}
    \end{subfigure}

    \begin{subfigure}[b]{.32\linewidth}
        \includegraphics[width=\linewidth]{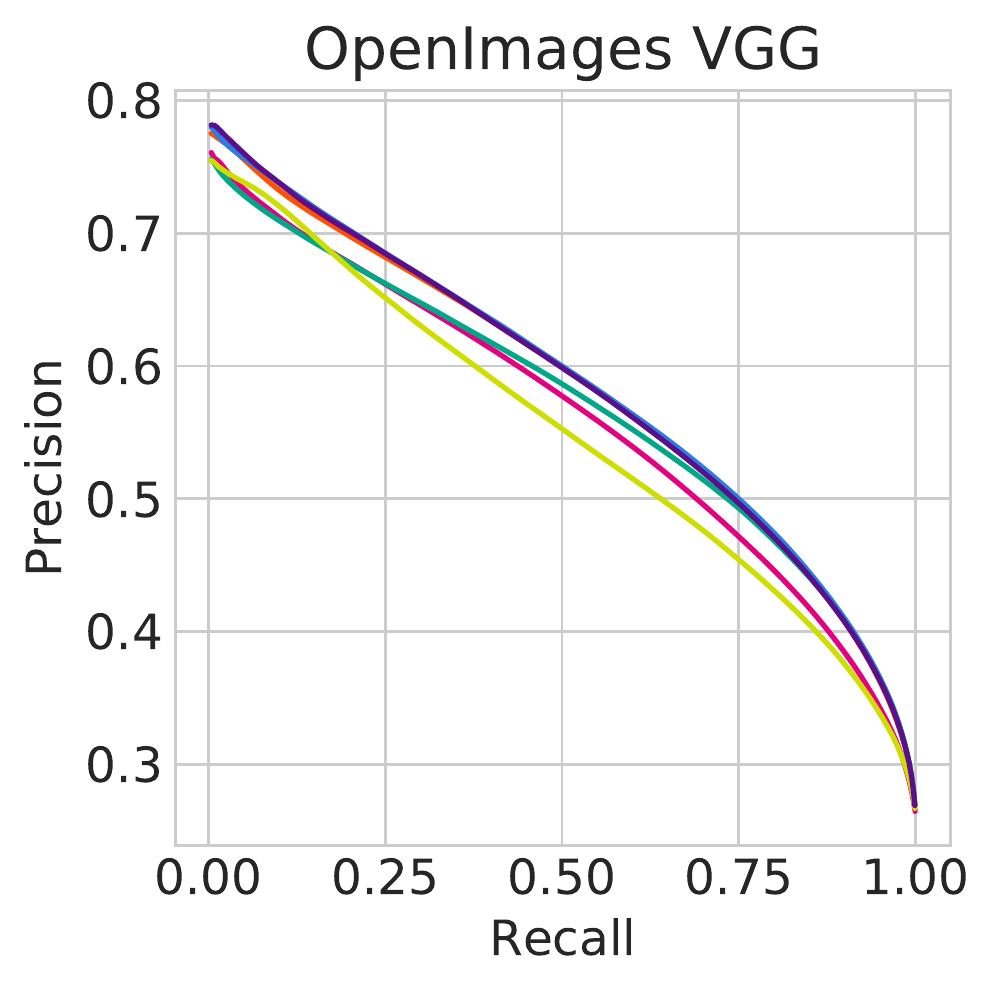}
    \end{subfigure}
    \begin{subfigure}[b]{.32\linewidth}
        \includegraphics[width=\linewidth]{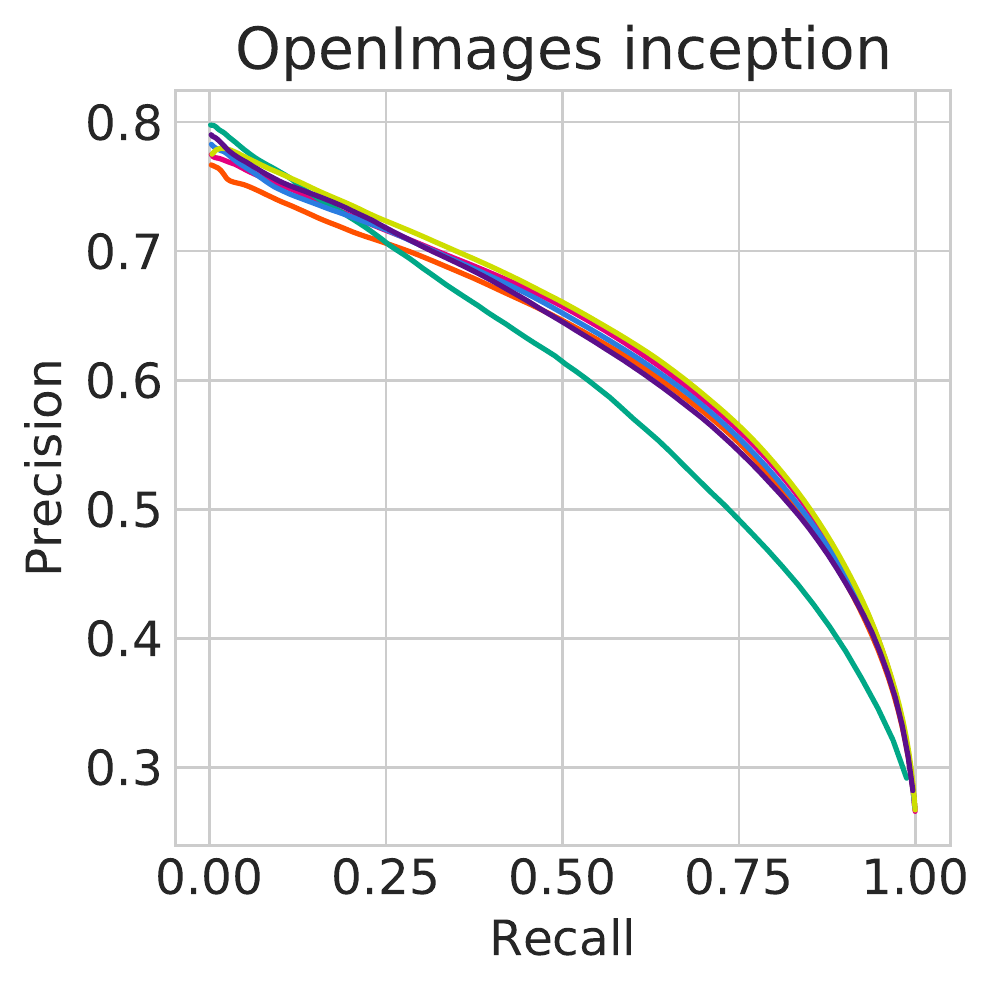}
    \end{subfigure}
    \begin{subfigure}[b]{.32\linewidth}
        \includegraphics[width=\linewidth]{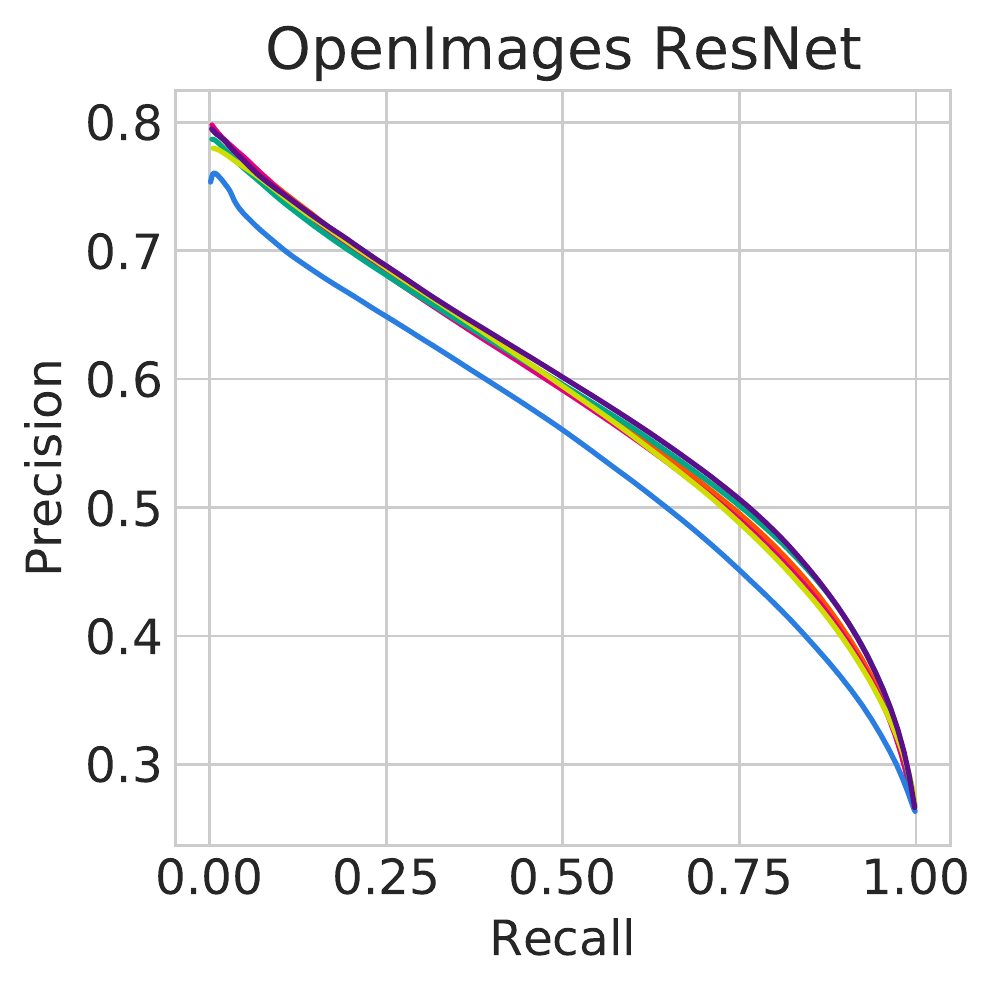}
    \end{subfigure}
    \caption{\small \textbf{Performance by operating threshold $\tau$.} CUB and ImageNet: \boxacc versus $\tau$, OpenImages: \pxprec versus \pxrec. ResNet, VGG, and Inception architecture results are used. This is the extension of Figure~5 in the main paper.}
    \label{fig:all_3_by_3_threshold_plots}
\end{figure*}
\begin{figure*}
    \centering
    \begin{subfigure}[b]{0.85\linewidth}
        \begin{subfigure}[b]{.16\linewidth}
            \includegraphics[width=\linewidth]{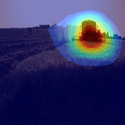}
        \end{subfigure}~\begin{subfigure}[b]{.16\linewidth}
            \includegraphics[width=\linewidth]{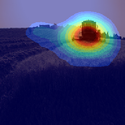}
        \end{subfigure}~\begin{subfigure}[b]{.16\linewidth}
            \includegraphics[width=\linewidth]{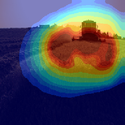}
        \end{subfigure}~\begin{subfigure}[b]{.16\linewidth}
            \includegraphics[width=\linewidth]{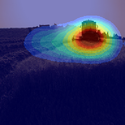}
        \end{subfigure}~\begin{subfigure}[b]{.16\linewidth}
            \includegraphics[width=\linewidth]{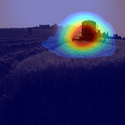}
        \end{subfigure}~\begin{subfigure}[b]{.16\linewidth}
            \includegraphics[width=\linewidth]{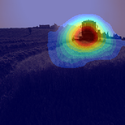}
        \end{subfigure}~\\
        \begin{subfigure}[b]{.16\linewidth}
            \includegraphics[width=\linewidth]{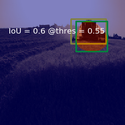}
        \end{subfigure}~\begin{subfigure}[b]{.16\linewidth}
            \includegraphics[width=\linewidth]{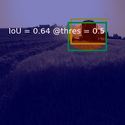}
        \end{subfigure}~\begin{subfigure}[b]{.16\linewidth}
            \includegraphics[width=\linewidth]{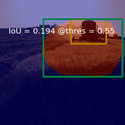}
        \end{subfigure}~\begin{subfigure}[b]{.16\linewidth}
            \includegraphics[width=\linewidth]{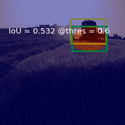}
        \end{subfigure}~\begin{subfigure}[b]{.16\linewidth}
            \includegraphics[width=\linewidth]{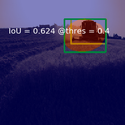}
        \end{subfigure}~\begin{subfigure}[b]{.16\linewidth}
            \includegraphics[width=\linewidth]{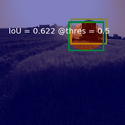}
        \end{subfigure}\caption{Label: Harvester}
    \end{subfigure}
    \vspace{1em}
    
    \begin{subfigure}[b]{0.85\linewidth}
        \begin{subfigure}[b]{.16\linewidth}
            \includegraphics[width=\linewidth]{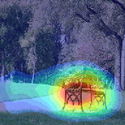}
        \end{subfigure}~\begin{subfigure}[b]{.16\linewidth}
            \includegraphics[width=\linewidth]{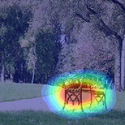}
        \end{subfigure}~\begin{subfigure}[b]{.16\linewidth}
            \includegraphics[width=\linewidth]{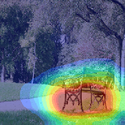}
        \end{subfigure}~\begin{subfigure}[b]{.16\linewidth}
            \includegraphics[width=\linewidth]{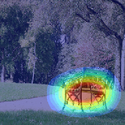}
        \end{subfigure}~\begin{subfigure}[b]{.16\linewidth}
            \includegraphics[width=\linewidth]{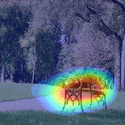}
        \end{subfigure}~\begin{subfigure}[b]{.16\linewidth}
            \includegraphics[width=\linewidth]{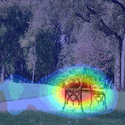}
        \end{subfigure}~\\
        \begin{subfigure}[b]{.16\linewidth}
            \includegraphics[width=\linewidth]{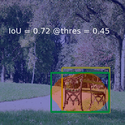}
        \end{subfigure}~\begin{subfigure}[b]{.16\linewidth}
            \includegraphics[width=\linewidth]{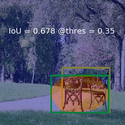}
        \end{subfigure}~\begin{subfigure}[b]{.16\linewidth}
            \includegraphics[width=\linewidth]{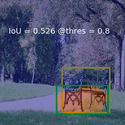}
        \end{subfigure}~\begin{subfigure}[b]{.16\linewidth}
            \includegraphics[width=\linewidth]{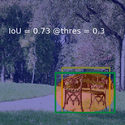}
        \end{subfigure}~\begin{subfigure}[b]{.16\linewidth}
            \includegraphics[width=\linewidth]{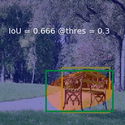}
        \end{subfigure}~\begin{subfigure}[b]{.16\linewidth}
            \includegraphics[width=\linewidth]{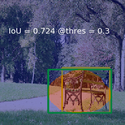}
        \end{subfigure}\caption{Label: Park bench}
    \end{subfigure}
\vspace{1em}

    \begin{subfigure}[b]{0.85\linewidth}
        \begin{subfigure}[b]{.16\linewidth}
            \includegraphics[width=\linewidth]{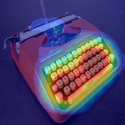}
        \end{subfigure}~\begin{subfigure}[b]{.16\linewidth}
            \includegraphics[width=\linewidth]{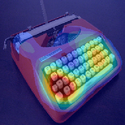}
        \end{subfigure}~\begin{subfigure}[b]{.16\linewidth}
            \includegraphics[width=\linewidth]{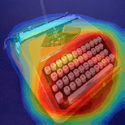}
        \end{subfigure}~\begin{subfigure}[b]{.16\linewidth}
            \includegraphics[width=\linewidth]{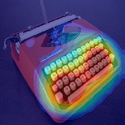}
        \end{subfigure}~\begin{subfigure}[b]{.16\linewidth}
            \includegraphics[width=\linewidth]{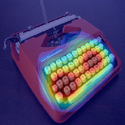}
        \end{subfigure}~\begin{subfigure}[b]{.16\linewidth}
            \includegraphics[width=\linewidth]{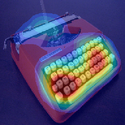}
        \end{subfigure}~\\
        \begin{subfigure}[b]{.16\linewidth}
            \includegraphics[width=\linewidth]{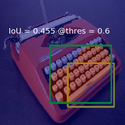}
        \end{subfigure}~\begin{subfigure}[b]{.16\linewidth}
            \includegraphics[width=\linewidth]{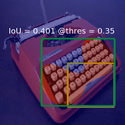}
        \end{subfigure}~\begin{subfigure}[b]{.16\linewidth}
            \includegraphics[width=\linewidth]{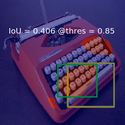}
        \end{subfigure}~\begin{subfigure}[b]{.16\linewidth}
            \includegraphics[width=\linewidth]{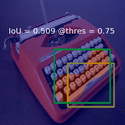}
        \end{subfigure}~\begin{subfigure}[b]{.16\linewidth}
            \includegraphics[width=\linewidth]{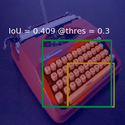}
        \end{subfigure}~\begin{subfigure}[b]{.16\linewidth}
            \includegraphics[width=\linewidth]{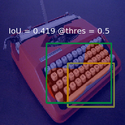}
        \end{subfigure}\caption{Label: Space bar}
    \end{subfigure}
\vspace{1em}

    \begin{subfigure}[b]{0.85\linewidth}
        \begin{subfigure}[b]{.16\linewidth}
            \includegraphics[width=\linewidth]{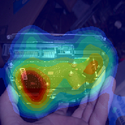}
        \end{subfigure}~\begin{subfigure}[b]{.16\linewidth}
            \includegraphics[width=\linewidth]{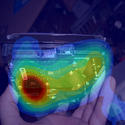}
        \end{subfigure}~\begin{subfigure}[b]{.16\linewidth}
            \includegraphics[width=\linewidth]{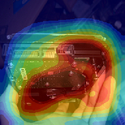}
        \end{subfigure}~\begin{subfigure}[b]{.16\linewidth}
            \includegraphics[width=\linewidth]{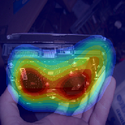}
        \end{subfigure}~\begin{subfigure}[b]{.16\linewidth}
            \includegraphics[width=\linewidth]{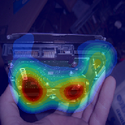}
        \end{subfigure}~\begin{subfigure}[b]{.16\linewidth}
            \includegraphics[width=\linewidth]{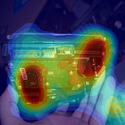}
        \end{subfigure}~\\
        \begin{subfigure}[b]{.16\linewidth}
            \includegraphics[width=\linewidth]{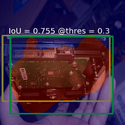}
        \end{subfigure}~\begin{subfigure}[b]{.16\linewidth}
            \includegraphics[width=\linewidth]{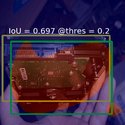}
        \end{subfigure}~\begin{subfigure}[b]{.16\linewidth}
            \includegraphics[width=\linewidth]{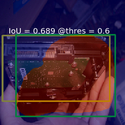}
        \end{subfigure}~\begin{subfigure}[b]{.16\linewidth}
            \includegraphics[width=\linewidth]{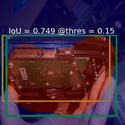}
        \end{subfigure}~\begin{subfigure}[b]{.16\linewidth}
            \includegraphics[width=\linewidth]{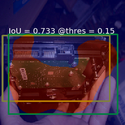}
        \end{subfigure}~\begin{subfigure}[b]{.16\linewidth}
            \includegraphics[width=\linewidth]{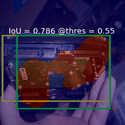}
        \end{subfigure}\caption{Label: Hard disk}
    \end{subfigure}
\vspace{1em}

    \caption{\small \textbf{ImageNet score maps.} Score maps of CAM, HaS, ACoL, SPG, ADL, CutMix from ImageNet.}
    \label{fig:score_map_visualization_imagenet}
\end{figure*}

\begin{figure*}
    \centering
    \begin{subfigure}[b]{0.85\linewidth}
        \begin{subfigure}[b]{.16\linewidth}
            \includegraphics[width=\linewidth]{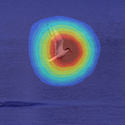}
        \end{subfigure}~\begin{subfigure}[b]{.16\linewidth}
            \includegraphics[width=\linewidth]{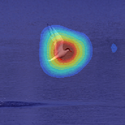}
        \end{subfigure}~\begin{subfigure}[b]{.16\linewidth}
            \includegraphics[width=\linewidth]{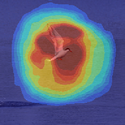}
        \end{subfigure}~\begin{subfigure}[b]{.16\linewidth}
            \includegraphics[width=\linewidth]{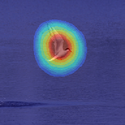}
        \end{subfigure}~\begin{subfigure}[b]{.16\linewidth}
            \includegraphics[width=\linewidth]{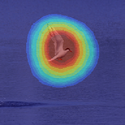}
        \end{subfigure}~\begin{subfigure}[b]{.16\linewidth}
            \includegraphics[width=\linewidth]{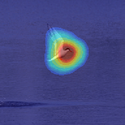}
        \end{subfigure}~\\
        \begin{subfigure}[b]{.16\linewidth}
            \includegraphics[width=\linewidth]{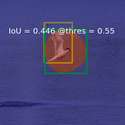}
        \end{subfigure}~\begin{subfigure}[b]{.16\linewidth}
            \includegraphics[width=\linewidth]{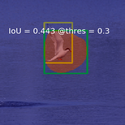}
        \end{subfigure}~\begin{subfigure}[b]{.16\linewidth}
            \includegraphics[width=\linewidth]{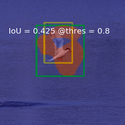}
        \end{subfigure}~\begin{subfigure}[b]{.16\linewidth}
            \includegraphics[width=\linewidth]{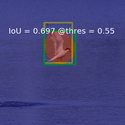}
        \end{subfigure}~\begin{subfigure}[b]{.16\linewidth}
            \includegraphics[width=\linewidth]{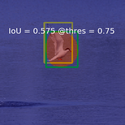}
        \end{subfigure}~\begin{subfigure}[b]{.16\linewidth}
            \includegraphics[width=\linewidth]{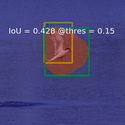}
        \end{subfigure}\caption{Label: Common Tern}
    \end{subfigure}
\vspace{1em}

    \begin{subfigure}[b]{0.85\linewidth}
        \begin{subfigure}[b]{.16\linewidth}
            \includegraphics[width=\linewidth]{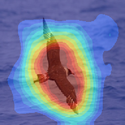}
        \end{subfigure}~\begin{subfigure}[b]{.16\linewidth}
            \includegraphics[width=\linewidth]{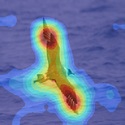}
        \end{subfigure}~\begin{subfigure}[b]{.16\linewidth}
            \includegraphics[width=\linewidth]{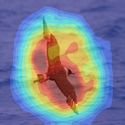}
        \end{subfigure}~\begin{subfigure}[b]{.16\linewidth}
            \includegraphics[width=\linewidth]{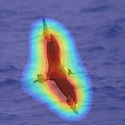}
        \end{subfigure}~\begin{subfigure}[b]{.16\linewidth}
            \includegraphics[width=\linewidth]{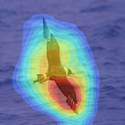}
        \end{subfigure}~\begin{subfigure}[b]{.16\linewidth}
            \includegraphics[width=\linewidth]{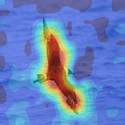}
        \end{subfigure}~\\
        \begin{subfigure}[b]{.16\linewidth}
            \includegraphics[width=\linewidth]{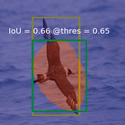}
        \end{subfigure}~\begin{subfigure}[b]{.16\linewidth}
            \includegraphics[width=\linewidth]{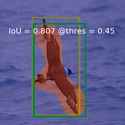}
        \end{subfigure}~\begin{subfigure}[b]{.16\linewidth}
            \includegraphics[width=\linewidth]{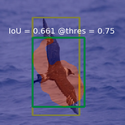}
        \end{subfigure}~\begin{subfigure}[b]{.16\linewidth}
            \includegraphics[width=\linewidth]{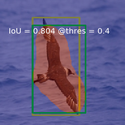}
        \end{subfigure}~\begin{subfigure}[b]{.16\linewidth}
            \includegraphics[width=\linewidth]{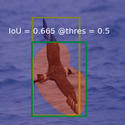}
        \end{subfigure}~\begin{subfigure}[b]{.16\linewidth}
            \includegraphics[width=\linewidth]{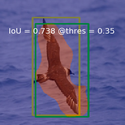}
        \end{subfigure}\caption{Label: Pomarine Jaeger}
    \end{subfigure}
\vspace{1em}

    \begin{subfigure}[b]{0.85\linewidth}
        \begin{subfigure}[b]{.16\linewidth}
            \includegraphics[width=\linewidth]{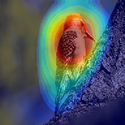}
        \end{subfigure}~\begin{subfigure}[b]{.16\linewidth}
            \includegraphics[width=\linewidth]{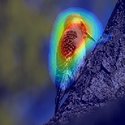}
        \end{subfigure}~\begin{subfigure}[b]{.16\linewidth}
            \includegraphics[width=\linewidth]{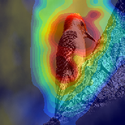}
        \end{subfigure}~\begin{subfigure}[b]{.16\linewidth}
            \includegraphics[width=\linewidth]{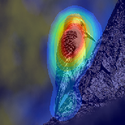}
        \end{subfigure}~\begin{subfigure}[b]{.16\linewidth}
            \includegraphics[width=\linewidth]{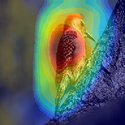}
        \end{subfigure}~\begin{subfigure}[b]{.16\linewidth}
            \includegraphics[width=\linewidth]{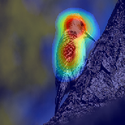}
        \end{subfigure}~\\
        \begin{subfigure}[b]{.16\linewidth}
            \includegraphics[width=\linewidth]{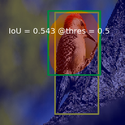}
        \end{subfigure}~\begin{subfigure}[b]{.16\linewidth}
            \includegraphics[width=\linewidth]{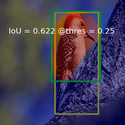}
        \end{subfigure}~\begin{subfigure}[b]{.16\linewidth}
            \includegraphics[width=\linewidth]{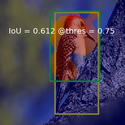}
        \end{subfigure}~\begin{subfigure}[b]{.16\linewidth}
            \includegraphics[width=\linewidth]{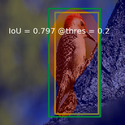}
        \end{subfigure}~\begin{subfigure}[b]{.16\linewidth}
            \includegraphics[width=\linewidth]{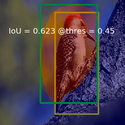}
        \end{subfigure}~\begin{subfigure}[b]{.16\linewidth}
            \includegraphics[width=\linewidth]{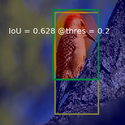}
        \end{subfigure}\caption{Label: Red bellied Woodpecker}
    \end{subfigure}
\vspace{1em}

    \begin{subfigure}[b]{0.85\linewidth}
        \begin{subfigure}[b]{.16\linewidth}
            \includegraphics[width=\linewidth]{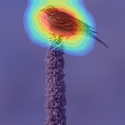}
        \end{subfigure}~\begin{subfigure}[b]{.16\linewidth}
            \includegraphics[width=\linewidth]{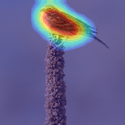}
        \end{subfigure}~\begin{subfigure}[b]{.16\linewidth}
            \includegraphics[width=\linewidth]{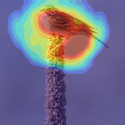}
        \end{subfigure}~\begin{subfigure}[b]{.16\linewidth}
            \includegraphics[width=\linewidth]{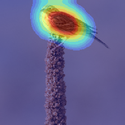}
        \end{subfigure}~\begin{subfigure}[b]{.16\linewidth}
            \includegraphics[width=\linewidth]{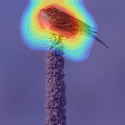}
        \end{subfigure}~\begin{subfigure}[b]{.16\linewidth}
            \includegraphics[width=\linewidth]{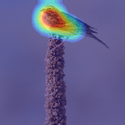}
        \end{subfigure}~\\
        \begin{subfigure}[b]{.16\linewidth}
            \includegraphics[width=\linewidth]{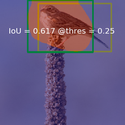}
        \end{subfigure}~\begin{subfigure}[b]{.16\linewidth}
            \includegraphics[width=\linewidth]{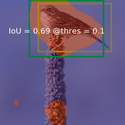}
        \end{subfigure}~\begin{subfigure}[b]{.16\linewidth}
            \includegraphics[width=\linewidth]{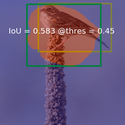}
        \end{subfigure}~\begin{subfigure}[b]{.16\linewidth}
            \includegraphics[width=\linewidth]{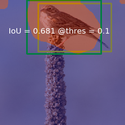}
        \end{subfigure}~\begin{subfigure}[b]{.16\linewidth}
            \includegraphics[width=\linewidth]{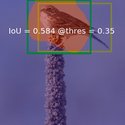}
        \end{subfigure}~\begin{subfigure}[b]{.16\linewidth}
            \includegraphics[width=\linewidth]{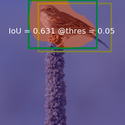}
        \end{subfigure}\caption{Label: Vesper Sparrow}
    \end{subfigure}
\vspace{1em}

    \caption{\small \textbf{CUB score maps.} Score maps of CAM, HaS, ACoL, SPG, ADL, CutMix from CUB.}
    \label{fig:score_map_visualization_cub}
\end{figure*}

\begin{figure*}
    \centering
    \begin{subfigure}[b]{0.85\linewidth}
        \begin{subfigure}[b]{.16\linewidth}
            \includegraphics[width=\linewidth]{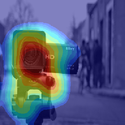}
        \end{subfigure}~\begin{subfigure}[b]{.16\linewidth}
            \includegraphics[width=\linewidth]{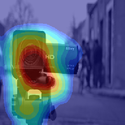}
        \end{subfigure}~\begin{subfigure}[b]{.16\linewidth}
            \includegraphics[width=\linewidth]{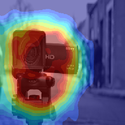}
        \end{subfigure}~\begin{subfigure}[b]{.16\linewidth}
            \includegraphics[width=\linewidth]{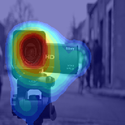}
        \end{subfigure}~\begin{subfigure}[b]{.16\linewidth}
            \includegraphics[width=\linewidth]{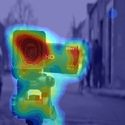}
        \end{subfigure}~\begin{subfigure}[b]{.16\linewidth}
            \includegraphics[width=\linewidth]{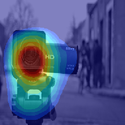}
        \end{subfigure}~\\
        \begin{subfigure}[b]{.16\linewidth}
            \includegraphics[width=\linewidth]{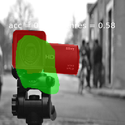}
        \end{subfigure}~\begin{subfigure}[b]{.16\linewidth}
            \includegraphics[width=\linewidth]{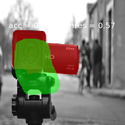}
        \end{subfigure}~\begin{subfigure}[b]{.16\linewidth}
            \includegraphics[width=\linewidth]{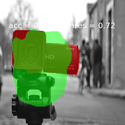}
        \end{subfigure}~\begin{subfigure}[b]{.16\linewidth}
            \includegraphics[width=\linewidth]{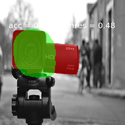}
        \end{subfigure}~\begin{subfigure}[b]{.16\linewidth}
            \includegraphics[width=\linewidth]{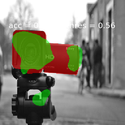}
        \end{subfigure}~\begin{subfigure}[b]{.16\linewidth}
            \includegraphics[width=\linewidth]{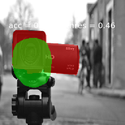}
        \end{subfigure}\caption{Label: Camera}
    \end{subfigure}
\vspace{1em}
    
    \begin{subfigure}[b]{0.85\linewidth}
        \begin{subfigure}[b]{.16\linewidth}
            \includegraphics[width=\linewidth]{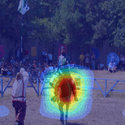}
        \end{subfigure}~\begin{subfigure}[b]{.16\linewidth}
            \includegraphics[width=\linewidth]{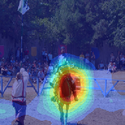}
        \end{subfigure}~\begin{subfigure}[b]{.16\linewidth}
            \includegraphics[width=\linewidth]{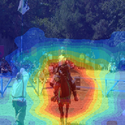}
        \end{subfigure}~\begin{subfigure}[b]{.16\linewidth}
            \includegraphics[width=\linewidth]{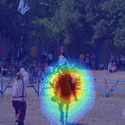}
        \end{subfigure}~\begin{subfigure}[b]{.16\linewidth}
            \includegraphics[width=\linewidth]{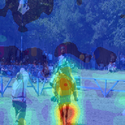}
        \end{subfigure}~\begin{subfigure}[b]{.16\linewidth}
            \includegraphics[width=\linewidth]{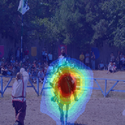}
        \end{subfigure}~\\
        \begin{subfigure}[b]{.16\linewidth}
            \includegraphics[width=\linewidth]{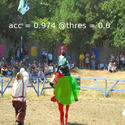}
        \end{subfigure}~\begin{subfigure}[b]{.16\linewidth}
            \includegraphics[width=\linewidth]{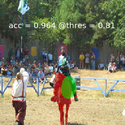}
        \end{subfigure}~\begin{subfigure}[b]{.16\linewidth}
            \includegraphics[width=\linewidth]{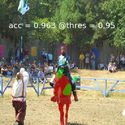}
        \end{subfigure}~\begin{subfigure}[b]{.16\linewidth}
            \includegraphics[width=\linewidth]{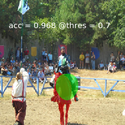}
        \end{subfigure}~\begin{subfigure}[b]{.16\linewidth}
            \includegraphics[width=\linewidth]{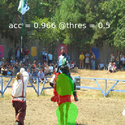}
        \end{subfigure}~\begin{subfigure}[b]{.16\linewidth}
            \includegraphics[width=\linewidth]{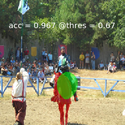}
        \end{subfigure}\caption{Label: Horse}
    \end{subfigure}
\vspace{1em}

    \begin{subfigure}[b]{0.85\linewidth}
        \begin{subfigure}[b]{.16\linewidth}
            \includegraphics[width=\linewidth]{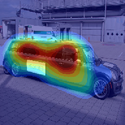}
        \end{subfigure}~\begin{subfigure}[b]{.16\linewidth}
            \includegraphics[width=\linewidth]{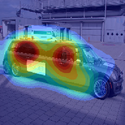}
        \end{subfigure}~\begin{subfigure}[b]{.16\linewidth}
            \includegraphics[width=\linewidth]{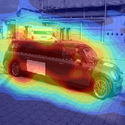}
        \end{subfigure}~\begin{subfigure}[b]{.16\linewidth}
            \includegraphics[width=\linewidth]{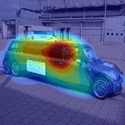}
        \end{subfigure}~\begin{subfigure}[b]{.16\linewidth}
            \includegraphics[width=\linewidth]{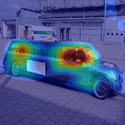}
        \end{subfigure}~\begin{subfigure}[b]{.16\linewidth}
            \includegraphics[width=\linewidth]{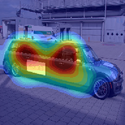}
        \end{subfigure}~\\
        \begin{subfigure}[b]{.16\linewidth}
            \includegraphics[width=\linewidth]{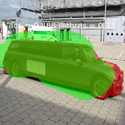}
        \end{subfigure}~\begin{subfigure}[b]{.16\linewidth}
            \includegraphics[width=\linewidth]{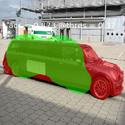}
        \end{subfigure}~\begin{subfigure}[b]{.16\linewidth}
            \includegraphics[width=\linewidth]{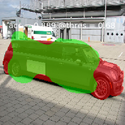}
        \end{subfigure}~\begin{subfigure}[b]{.16\linewidth}
            \includegraphics[width=\linewidth]{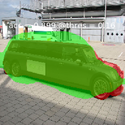}
        \end{subfigure}~\begin{subfigure}[b]{.16\linewidth}
            \includegraphics[width=\linewidth]{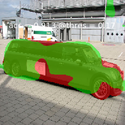}
        \end{subfigure}~\begin{subfigure}[b]{.16\linewidth}
            \includegraphics[width=\linewidth]{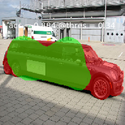}
        \end{subfigure}\caption{Label: Limousine}
    \end{subfigure}
\vspace{1em}
    
    \begin{subfigure}[b]{0.85\linewidth}
        \begin{subfigure}[b]{.16\linewidth}
            \includegraphics[width=\linewidth]{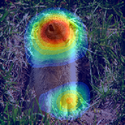}
        \end{subfigure}~\begin{subfigure}[b]{.16\linewidth}
            \includegraphics[width=\linewidth]{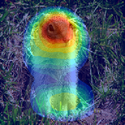}
        \end{subfigure}~\begin{subfigure}[b]{.16\linewidth}
            \includegraphics[width=\linewidth]{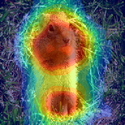}
        \end{subfigure}~\begin{subfigure}[b]{.16\linewidth}
            \includegraphics[width=\linewidth]{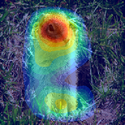}
        \end{subfigure}~\begin{subfigure}[b]{.16\linewidth}
            \includegraphics[width=\linewidth]{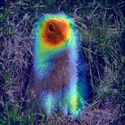}
        \end{subfigure}~\begin{subfigure}[b]{.16\linewidth}
            \includegraphics[width=\linewidth]{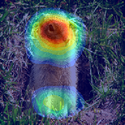}
        \end{subfigure}~\\
        \begin{subfigure}[b]{.16\linewidth}
            \includegraphics[width=\linewidth]{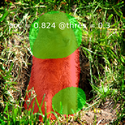}
        \end{subfigure}~\begin{subfigure}[b]{.16\linewidth}
            \includegraphics[width=\linewidth]{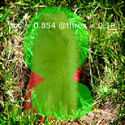}
        \end{subfigure}~\begin{subfigure}[b]{.16\linewidth}
            \includegraphics[width=\linewidth]{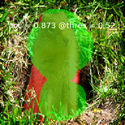}
        \end{subfigure}~\begin{subfigure}[b]{.16\linewidth}
            \includegraphics[width=\linewidth]{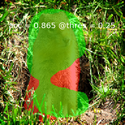}
        \end{subfigure}~\begin{subfigure}[b]{.16\linewidth}
            \includegraphics[width=\linewidth]{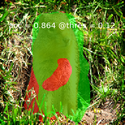}
        \end{subfigure}~\begin{subfigure}[b]{.16\linewidth}
            \includegraphics[width=\linewidth]{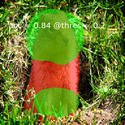}
        \end{subfigure}\caption{Label: Squirrel}
    \end{subfigure}
\vspace{1em}

    \caption{\small \textbf{OpenImages score maps.} Score maps of CAM, HaS, ACoL, SPG, ADL, CutMix from OpenImages.}
    \label{fig:score_map_visualization_openimages}
\end{figure*}

\newcommand\appendixviolinwidth{.33}

\begin{figure*}
    \centering

    \begin{subfigure}[b]{\appendixviolinwidth\linewidth}
        \includegraphics[width=\linewidth]{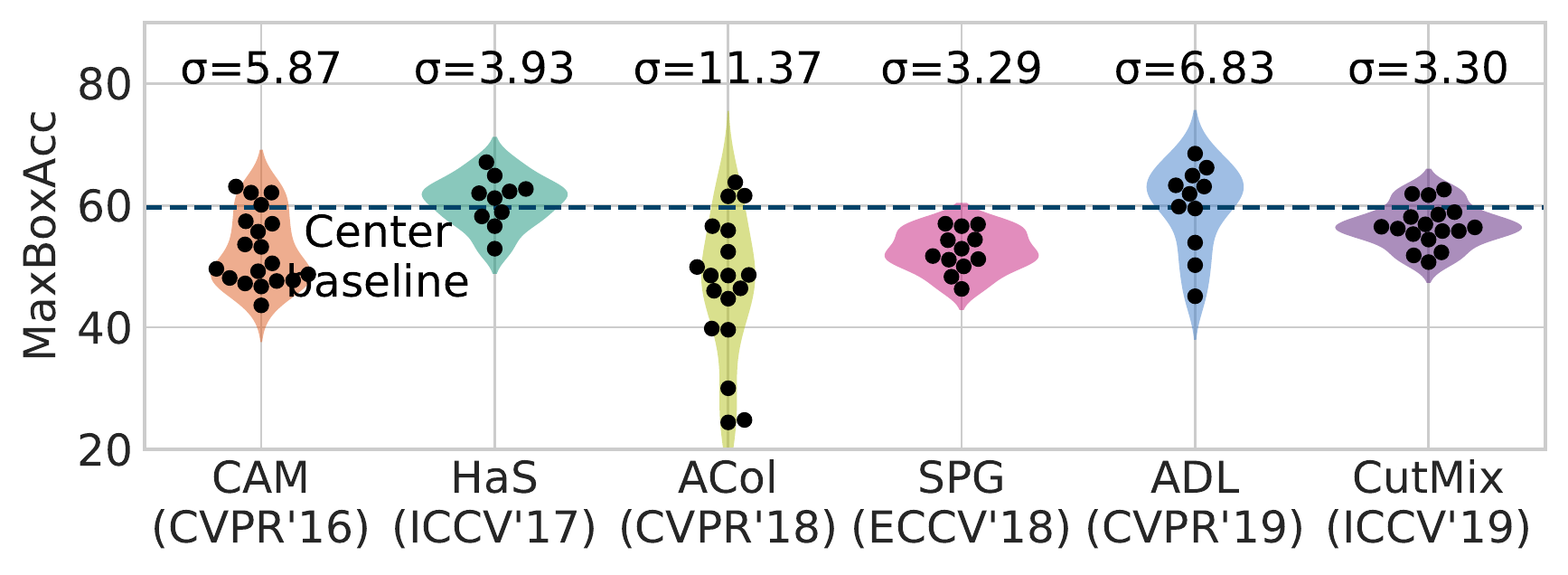}
        \caption{CUB, VGG}
    \end{subfigure}
    \begin{subfigure}[b]{\appendixviolinwidth\linewidth}
        \includegraphics[width=\linewidth]{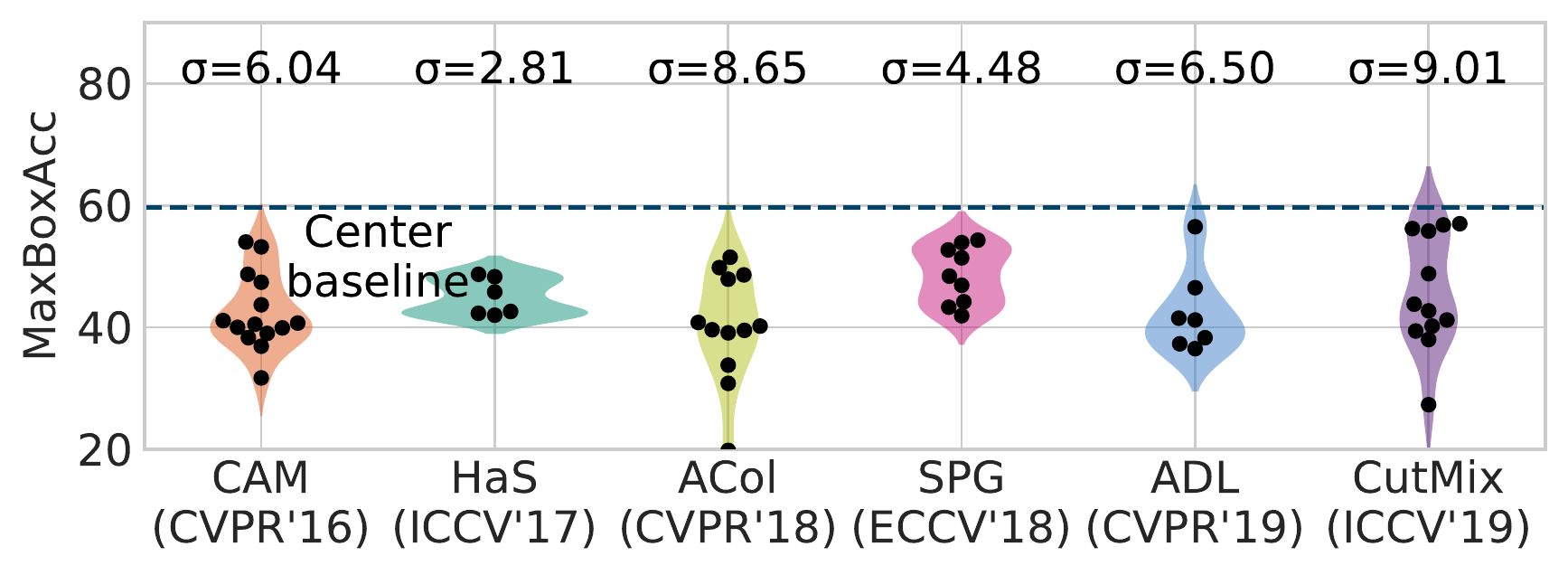}
        \caption{CUB, Inception}
    \end{subfigure}
    \begin{subfigure}[b]{\appendixviolinwidth\linewidth}
        \includegraphics[width=\linewidth]{figures/hyperparam_robustness_CUB_ResNet_reduced2.pdf}
        \caption{CUB, ResNet}
    \end{subfigure}
\vspace{1em}
    
    \begin{subfigure}[b]{\appendixviolinwidth\linewidth}
    \includegraphics[width=\linewidth]{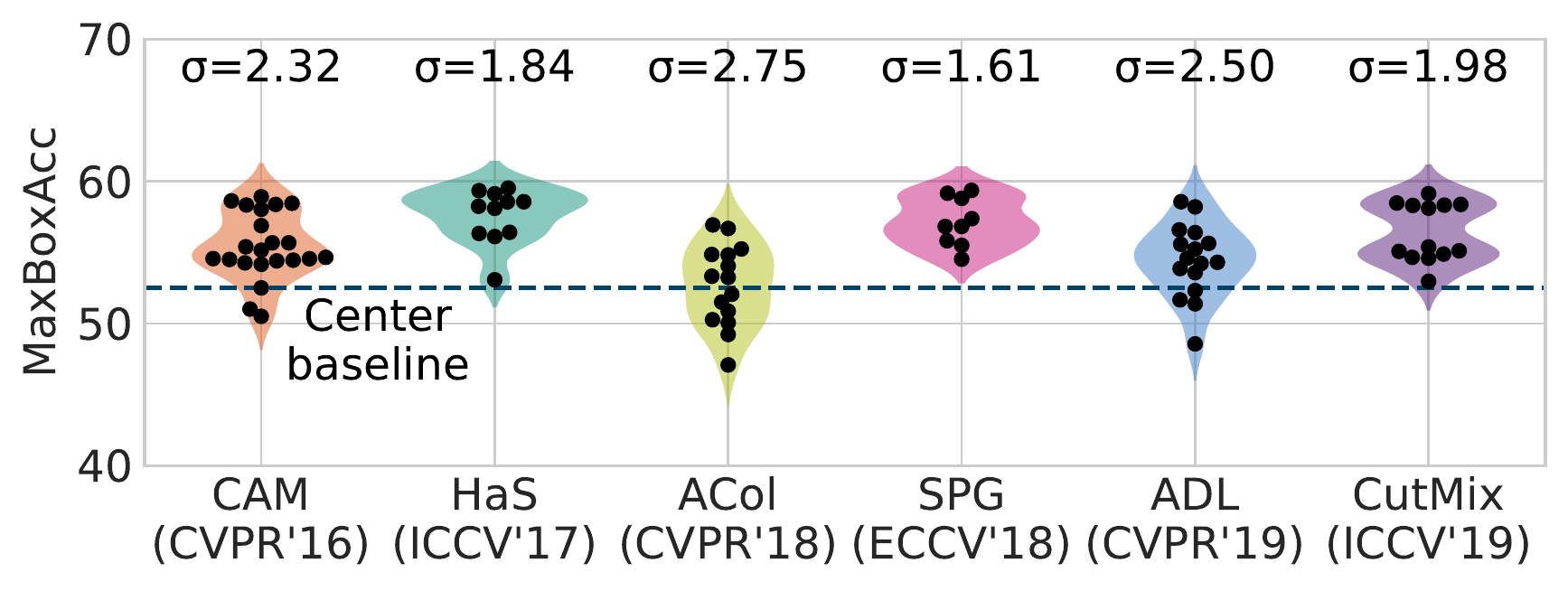}
    \caption{ImageNet, VGG}
    \end{subfigure}
    \begin{subfigure}[b]{\appendixviolinwidth\linewidth}
        \includegraphics[width=\linewidth]{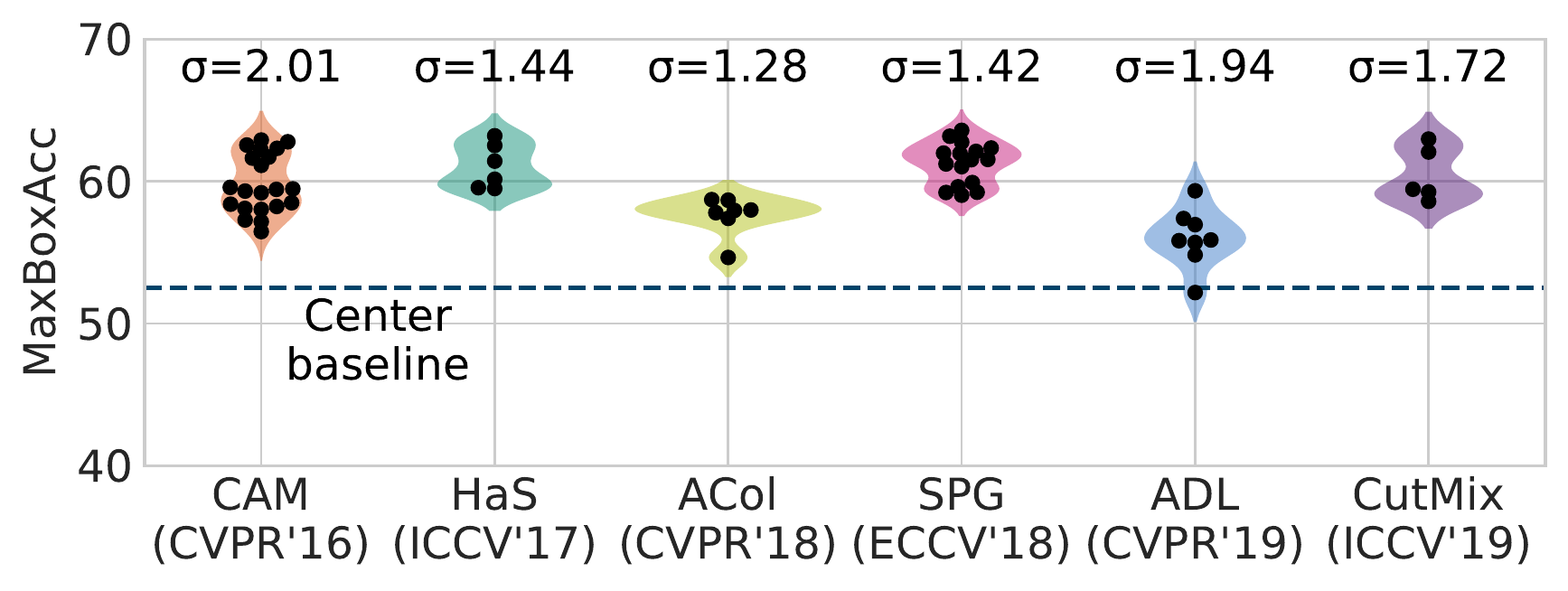}
        \caption{ImageNet, Inception}
    \end{subfigure}
    \begin{subfigure}[b]{\appendixviolinwidth\linewidth}
        \includegraphics[width=\linewidth]{figures/hyperparam_robustness_ImageNet_ResNet_reduced2.pdf}
        \caption{ImageNet, ResNet}
    \end{subfigure}
\vspace{1em}
    
    \begin{subfigure}[b]{\appendixviolinwidth\linewidth}
        \includegraphics[width=\linewidth]{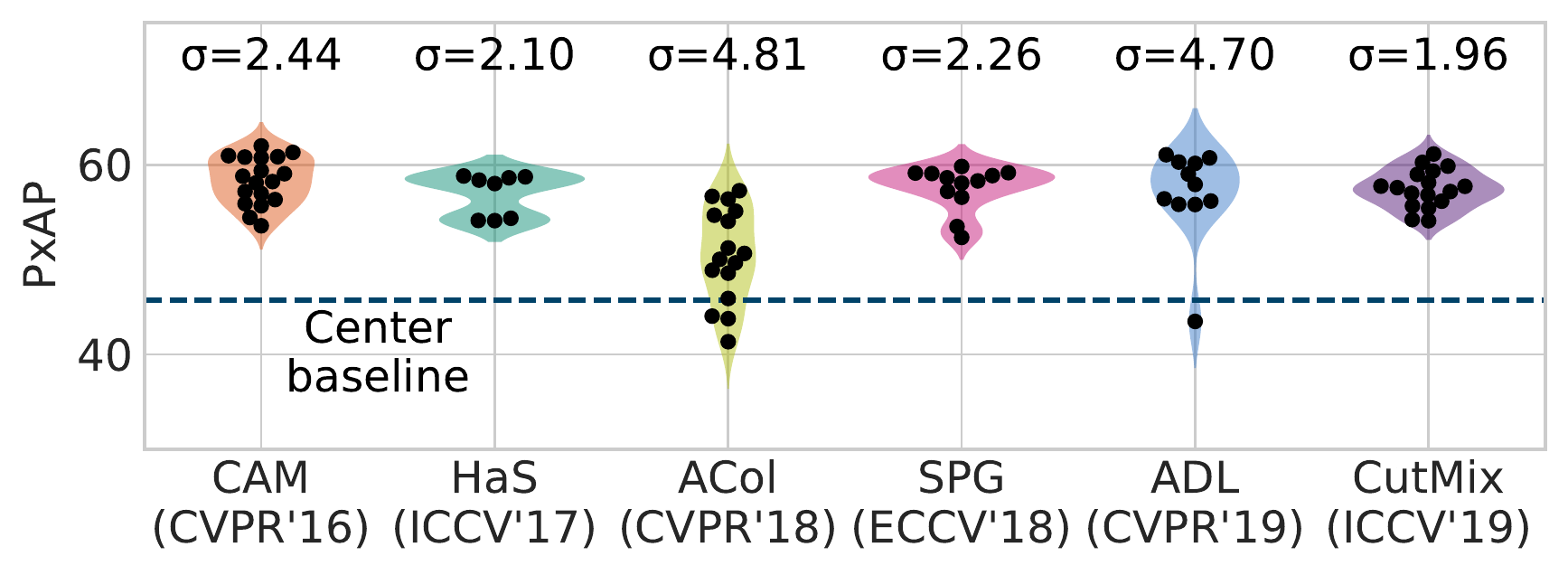}
        \caption{OpenImages, VGG}
    \end{subfigure}
    \begin{subfigure}[b]{\appendixviolinwidth\linewidth}
        \includegraphics[width=\linewidth]{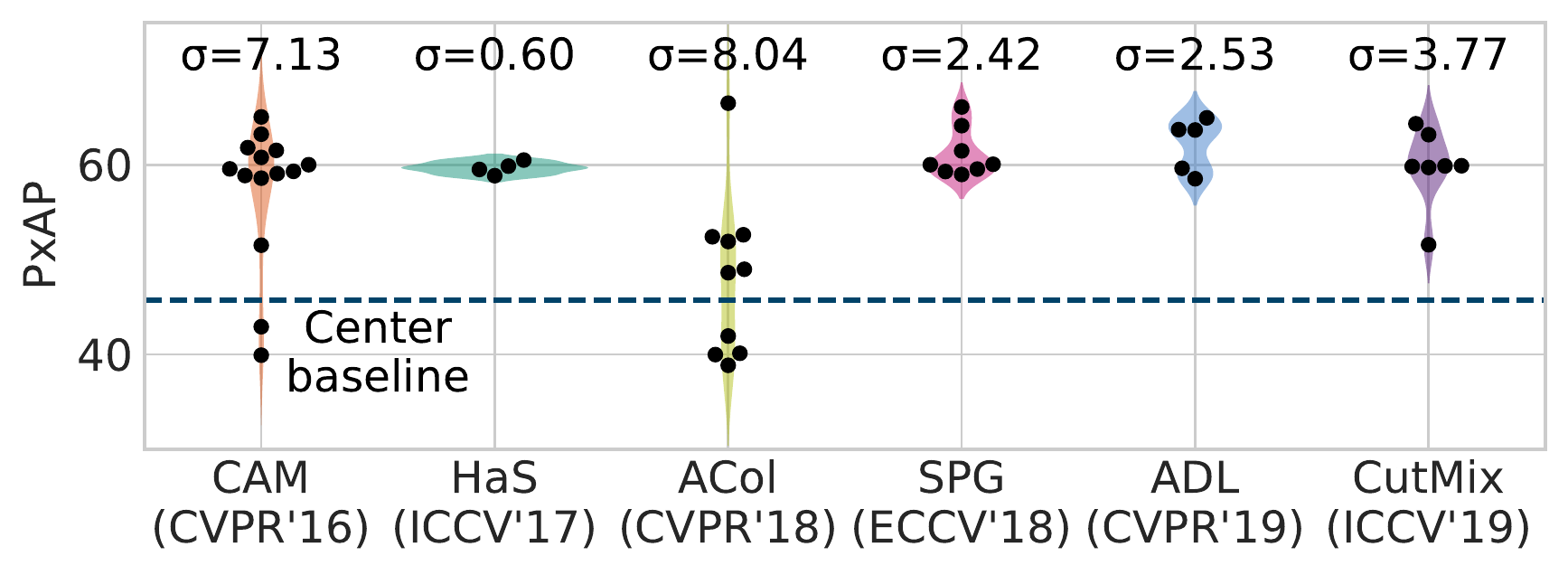}
        \caption{OpenImages, Inception}
    \end{subfigure}
    \begin{subfigure}[b]{\appendixviolinwidth\linewidth}
        \includegraphics[width=\linewidth]{figures/hyperparam_robustness_OpenImages_ResNet_reduced2.pdf}
        \caption{OpenImages, ResNet}
    \end{subfigure}
\vspace{1em}
    
    \begin{subfigure}[b]{.5\linewidth}
        \includegraphics[width=\linewidth]{figures/training_failure_ratio_gray.pdf}
        \caption{Ratio of training failures}
    \end{subfigure}
\vspace{1em}

    \caption{\small\textbf{All results of the 30 hyperparameter trials.} CUB, ImageNet, OpenImages performances of all 30 randomly chosen hyperparameter combinations for each method. This is the extension of Figure~6 in the main content.}
    \label{fig:all_3_by_3_violin_plots}
\end{figure*}
\begin{figure*}
    \centering
    \begin{subfigure}[b]{\linewidth}
        \includegraphics[width=\linewidth]{figures/appendix/legend.pdf}
    \end{subfigure}\\
    
     \begin{subfigure}[b]{.32\linewidth}
        \includegraphics[width=\linewidth]{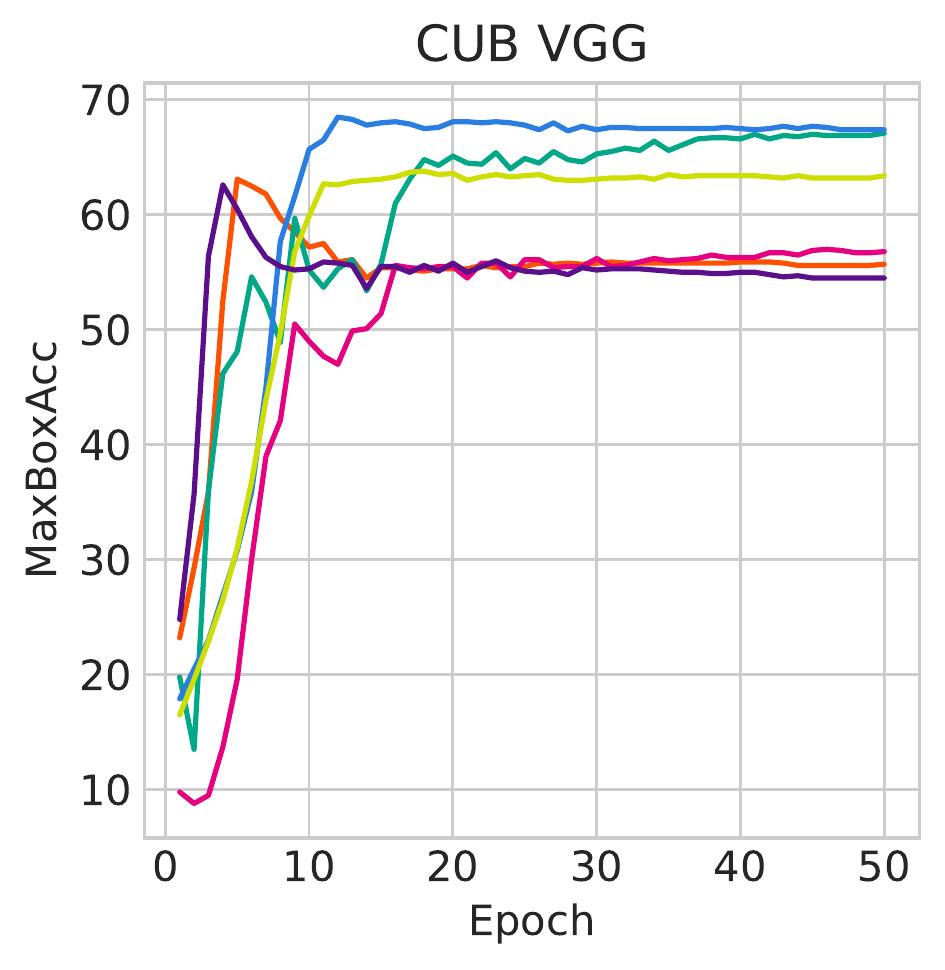}
    \end{subfigure}
    \begin{subfigure}[b]{.32\linewidth}
        \includegraphics[width=\linewidth]{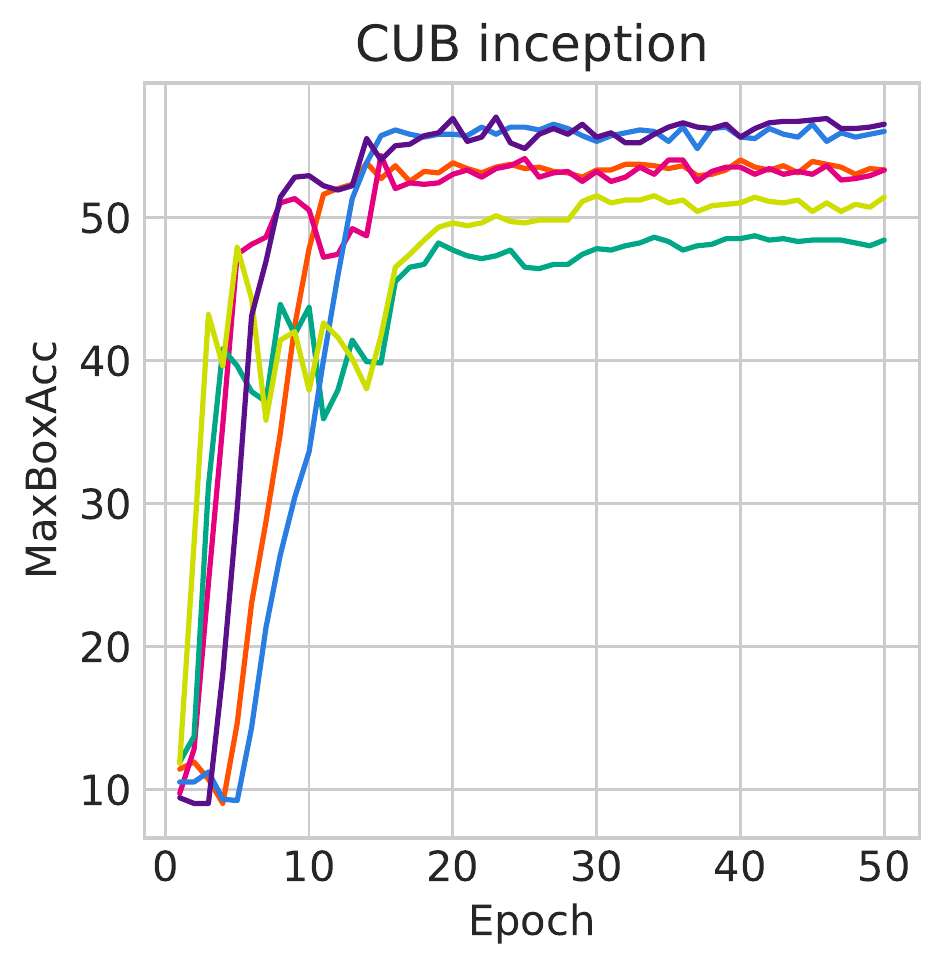}
    \end{subfigure}
    \begin{subfigure}[b]{.32\linewidth}
        \includegraphics[width=\linewidth]{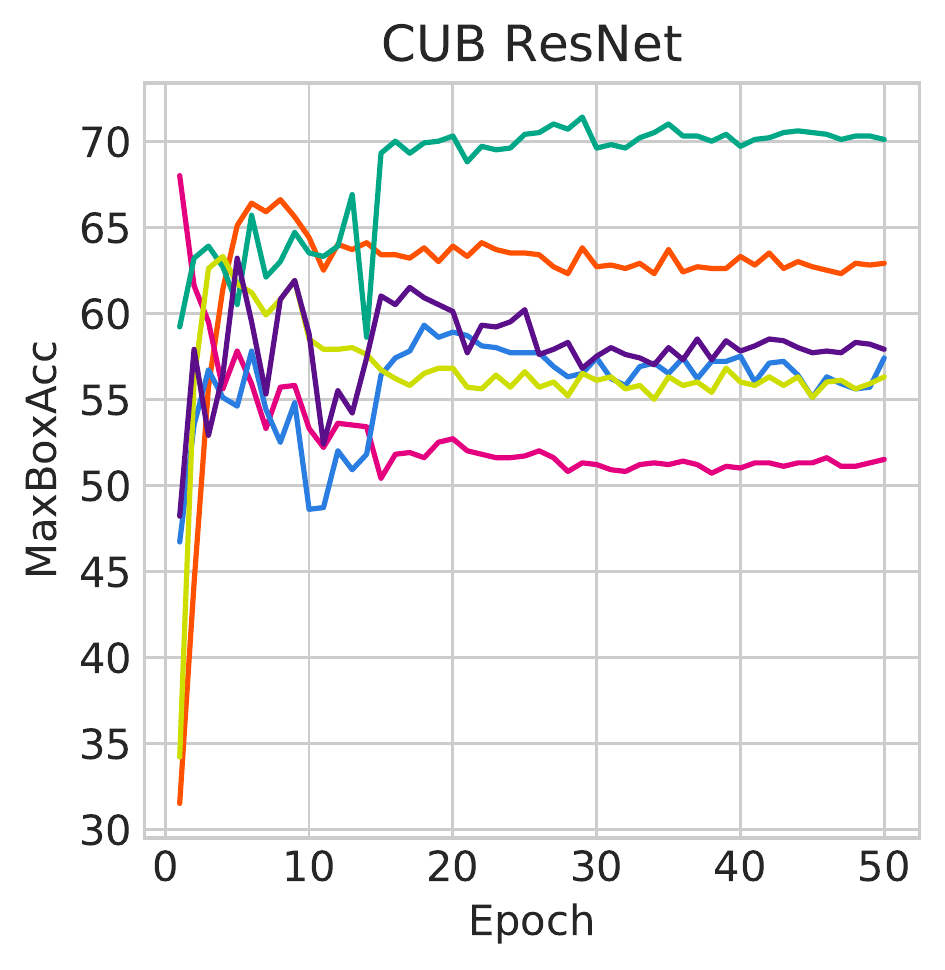}
    \end{subfigure}
    
    \begin{subfigure}[b]{.32\linewidth}
        \includegraphics[width=\linewidth]{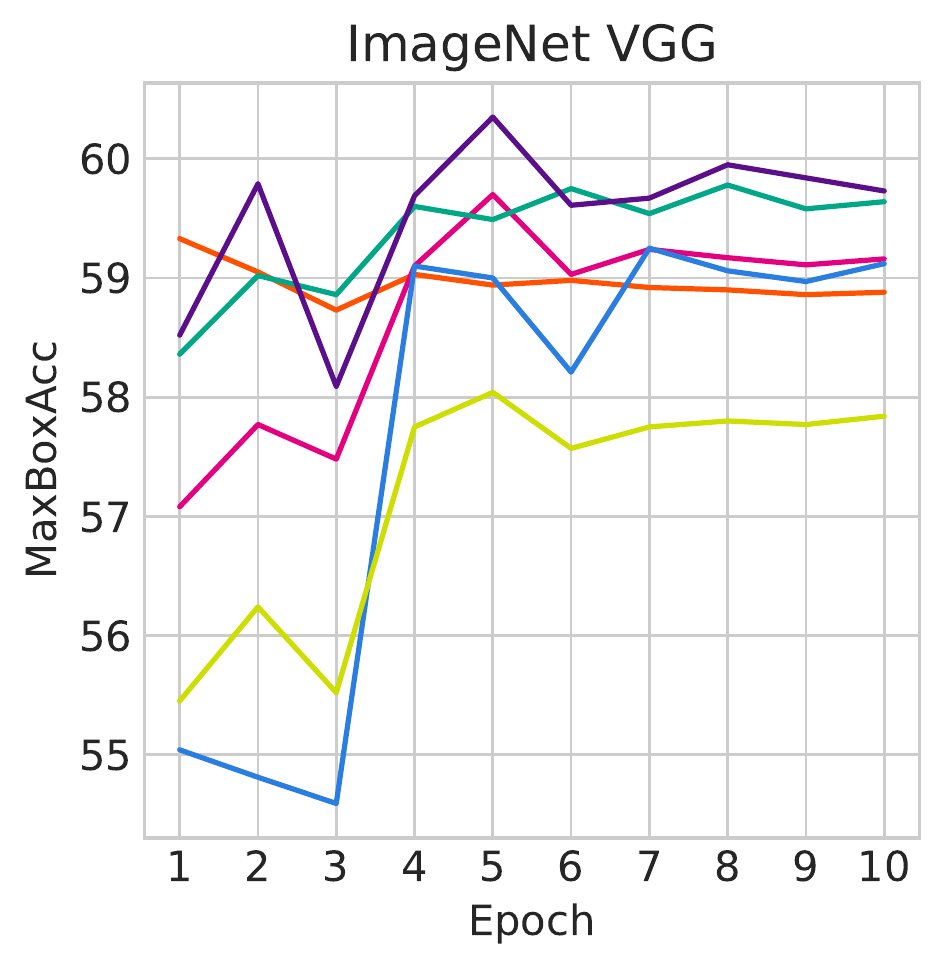}
    \end{subfigure}
    \begin{subfigure}[b]{.32\linewidth}
        \includegraphics[width=\linewidth]{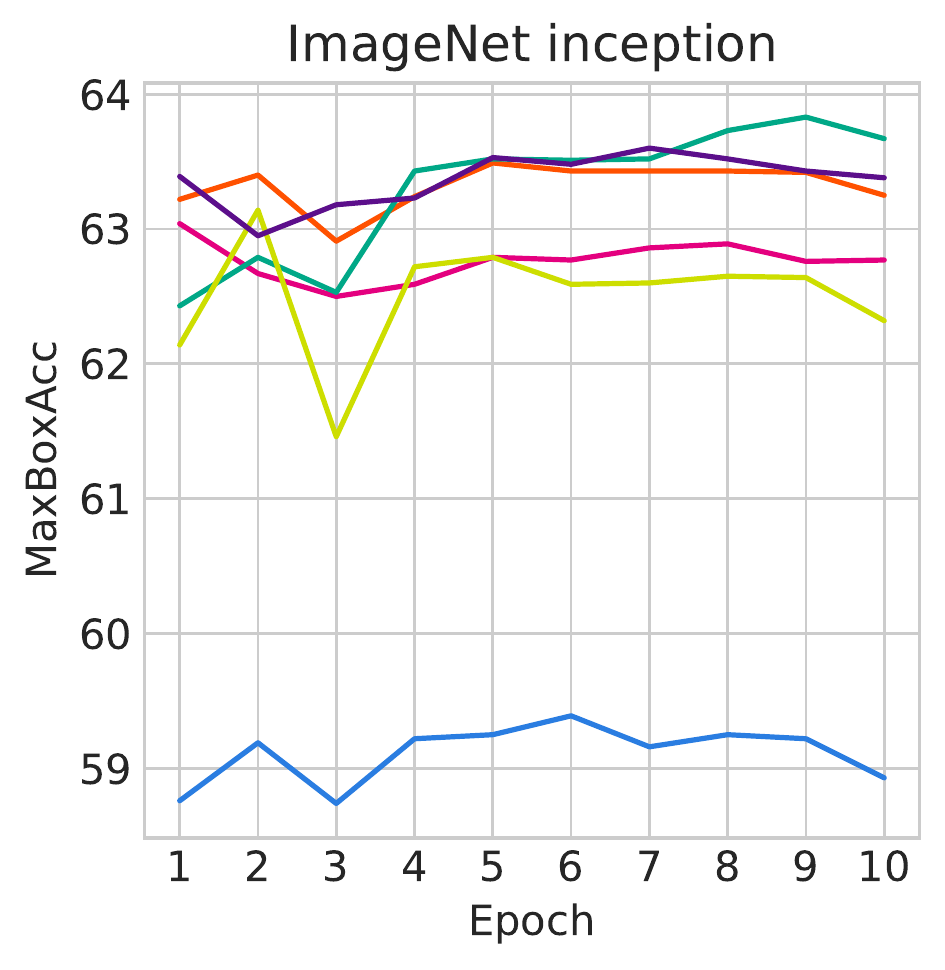}
    \end{subfigure}
    \begin{subfigure}[b]{.32\linewidth}
        \includegraphics[width=\linewidth]{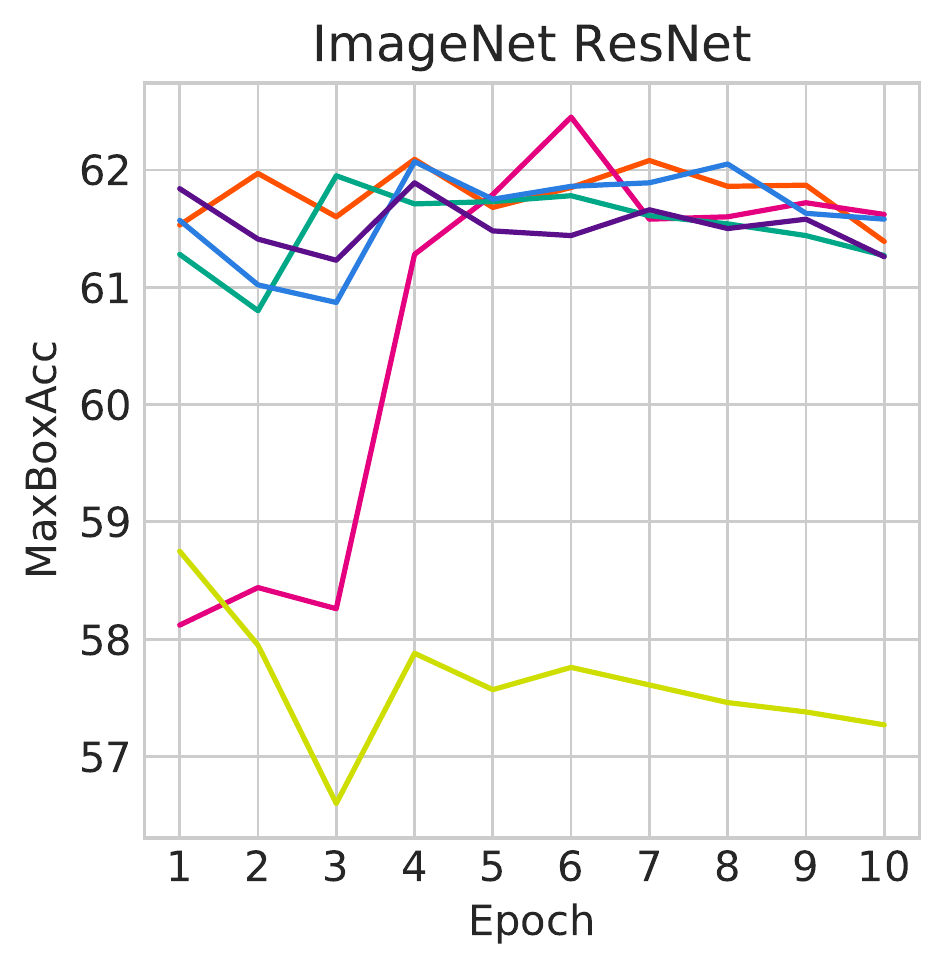}
    \end{subfigure}

    \begin{subfigure}[b]{.32\linewidth}
        \includegraphics[width=\linewidth]{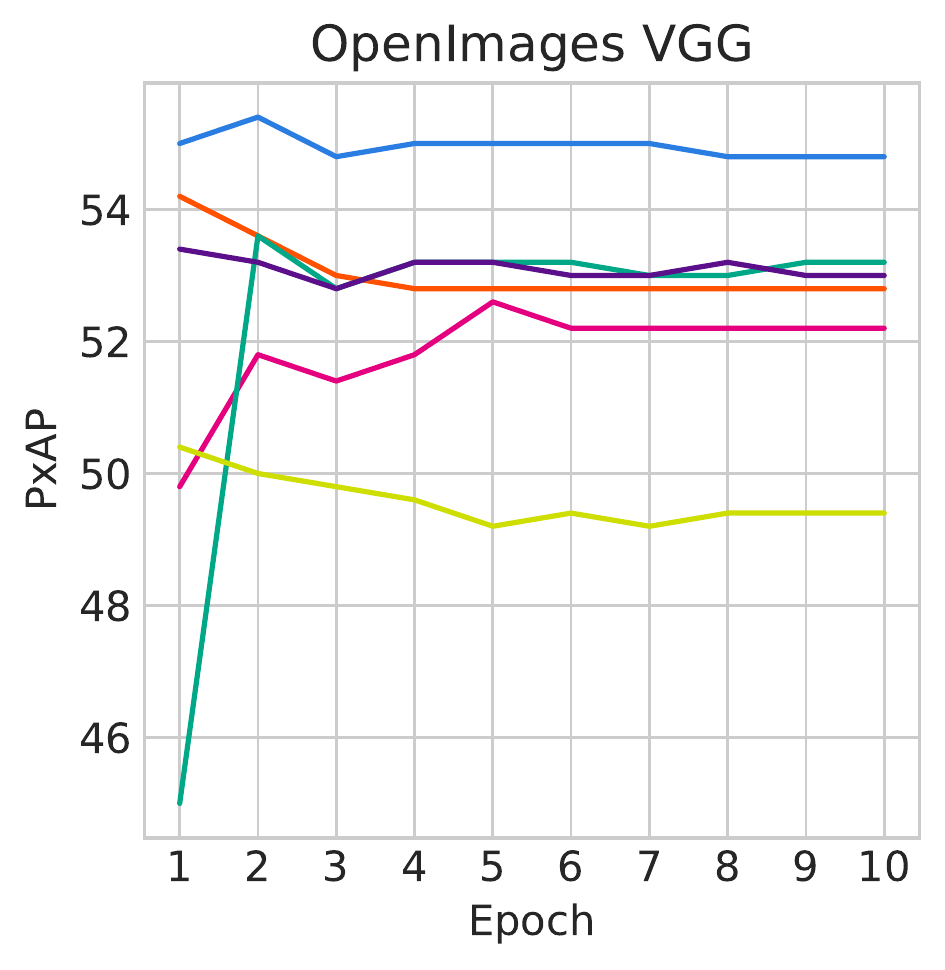}
    \end{subfigure}
    \begin{subfigure}[b]{.32\linewidth}
        \includegraphics[width=\linewidth]{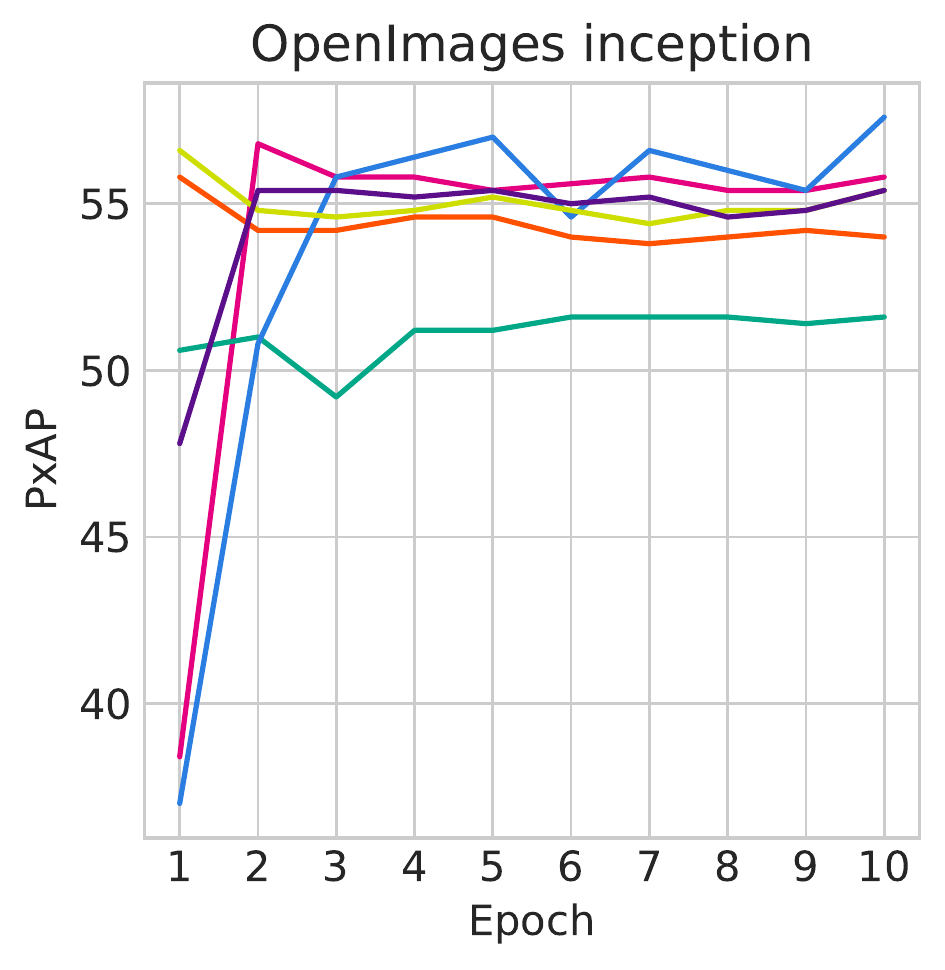}
    \end{subfigure}
    \begin{subfigure}[b]{.32\linewidth}
        \includegraphics[width=\linewidth]{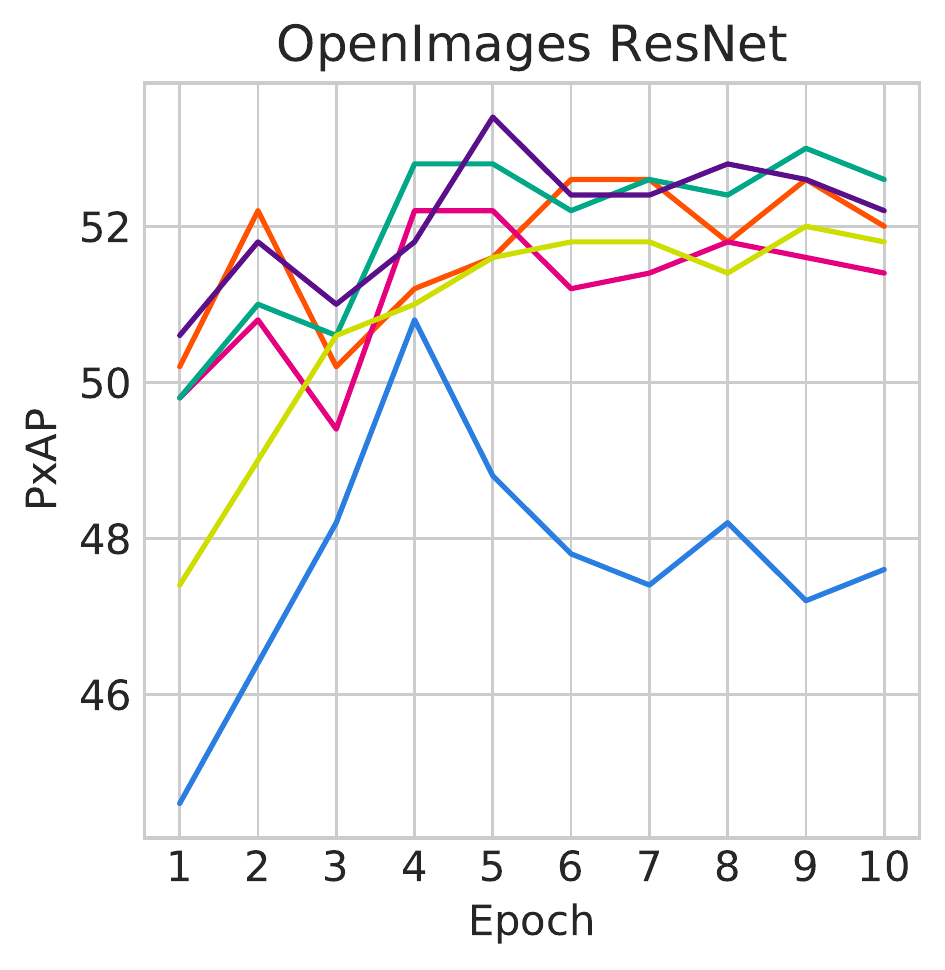}
    \end{subfigure}
    
    \caption{\small \textbf{Learning curves.} Results on three datasets with VGG, Inception and ResNet architectures.}
    \label{fig:wsol_learning_curves}
\end{figure*}

\end{document}